\documentclass{article}
\usepackage{iclr2027_conference,times}
\usepackage{amsmath,amssymb,amsthm,mathtools}
\usepackage{algorithm}
\usepackage{algpseudocode}
\usepackage{booktabs}
\usepackage{graphicx}
\usepackage{tikz}
\usetikzlibrary{arrows.meta,positioning}
\usepackage{microtype}
\usepackage{xcolor}
\usepackage[hyphens]{url}
\usepackage{hyperref}
\usepackage{tabularx}
\usepackage{ragged2e}
\definecolor{linkblue}{RGB}{20,70,130}
\hypersetup{
  colorlinks=true,
  linkcolor=linkblue,
  citecolor=linkblue,
  urlcolor=linkblue
}

\newtheorem{theorem}{Theorem}[section]
\newtheorem{lemma}[theorem]{Lemma}
\newtheorem{corollary}[theorem]{Corollary}
\newtheorem{proposition}[theorem]{Proposition}

\newtheorem{assumption}[theorem]{Assumption}

\newcommand{\indep}{\mathrel{\perp\!\!\!\perp}}
\newcommand{\eps}{\varepsilon}

\newcommand{\E}{\mathbb E}
\newcommand{\Pp}{\mathbb P}
\newcommand{\Var}{\operatorname{Var}}
\newcommand{\Cov}{\operatorname{Cov}}
\newcommand{\Proj}{\operatorname{Proj}}
\newcommand{\Pa}{\operatorname{Pa}}

\newcommand{\psiTwo}{\psi_2}

\title{The Statistical Cost of Causal Discovery with Feedback}

\author{
    Sunmin Oh\thanks{Department of Statistics; Institute for Data Innovation in Science, Seoul National University, Korea},
    Seungsu Han\thanks{Department of Operations Research and Financial Engineering, Princeton University, USA},
    Gunwoong Park\thanks{Department of Statistics; Interdisciplinary Program in Artificial Intelligence; Institute for Data Innovation in Science, Seoul National University, Korea. \texttt{gw.park23@gmail.com}}
}

\iclrfinalcopy

\renewcommand{\iclrruler}[1]{}

\begin{document}
\maketitle
\lhead{}


\begin{abstract}
What determines the unavoidable sample cost of learning cyclic causal structure? For cyclic linear non-Gaussian models, we study exact condensation recovery from observational data: identifying the strongly connected component (SCC) partition and all edges between components. We establish the first information-theoretic lower bounds on sample complexity for this target. For $p$ variables, maximum SCC size $s_{\max}$, and maximum external-parent count $d_B$, any estimator requires order $s_{\max}\log(ep/s_{\max})+d_B\log(ep/d_B)$ samples in the worst case over a regular model class. These bounds distinguish the costs of SCC membership and external-parent selection. Under principal invertibility and without correlation faithfulness, we establish a population block-exogeneity principle that identifies unknown root SCCs through residual independence and inclusion minimality. A sparse-adjustment characterization shows that small adjustment sets suffice to identify SCCs and their direct external parents, without regressing on all previously recovered variables. These characterizations yield BlockExo, which attains a structurally matching sample bound without knowing $s_{\max}$ or $d_B$ under suitable conditions. Simulations support the structural dependence of our sample bound and demonstrate BlockExo's sample-efficient recovery in comparisons with other methods for cyclic causal discovery.
\end{abstract}

\section{Introduction}
\label{sec:introduction}

Feedback is common in gene regulation, economic systems, and dynamical systems \citep{dai2024local,richardson2014ace,bongers2021foundations}.
For cyclic linear non-Gaussian (LiNG) models, a natural observational target is the condensation: the strongly connected component (SCC) partition and all edges between components.
This target is invariant across the relevant observational equivalence class \citep{sharifian2025near,madaleno2026coarsening}.
We ask which structural sample costs are unavoidable and which population principles enable recovery with a structurally matching sample bound.

We establish the first sample-complexity lower bounds for exact condensation recovery in cyclic LiNG models.
For $p$ variables, let $s_{\max}$ be the maximum SCC size and $d_B$ the maximum number of external parent variables of an SCC.
Over regular model classes, these bounds imply the necessary structural sample cost
$
    s_{\max}\log\left({ep}/{s_{\max}}\right)
    +d_B\log\left({ep}/{d_B}\right),
$
where $e=\exp(1)$ and $\log$ denotes the natural logarithm.
The two terms capture unavoidable costs of identifying SCC membership and selecting external parents, respectively.
The first term remains even when there are no edges between SCCs: identifying which variables belong to the same SCC alone incurs a sample cost.

We complement these lower bounds with population principles for identifying unknown SCCs and their external parents.
Under principal invertibility, our \textit{block-exogeneity} shows that, after adjusting for previously recovered variables, residual independence characterizes blocks with no incoming edges from the remaining variables, without correlation faithfulness.
The inclusion-minimal nonempty blocks satisfying this condition are exactly the remaining root SCCs, allowing their identification without a known SCC partition.
Small adjustment sets further suffice to identify both SCCs and their direct external parents.
This sparse-adjustment characterization avoids regressing on all previously recovered variables and controls regression dimensions through SCC size and external-parent count.
These principles yield BlockExo, an algorithm that attains a structurally matching sample bound under regularity and separation conditions, even when $s_{\max}$ and $d_B$ are unknown.

\paragraph{Related work.} GroupLiNGAM characterizes exogenous sets through residual independence under correlation faithfulness \citep{kawahara2010grouplingam}.
For exact condensation recovery, Coarsening provides an ICA-based method with a finite-sample recovery guarantee and polynomial-time computation \citep{madaleno2026coarsening}.
OptLiNGAM establishes necessary and achievable structural sample costs for LiNGAMs \citep{oh2025optimal}.
Distributional equivalence has also been characterized for LiNG models with latent variables and cycles \citep{dai2026glvling}.
Appendix~\ref{sec:related} gives detailed comparisons.

Our contributions are summarized as follows.
\begin{itemize}
    \item \textbf{Information-theoretic lower bounds.}
    We establish the first sample-complexity lower bounds for exact condensation recovery in cyclic LiNG models, separating the costs of SCC membership and external-parent selection over regular model classes (Theorem~\ref{thm:matching-lower}).
    \item \textbf{Block exogeneity for root SCC identification.}
    We characterize ancestral blocks by residual independence and identify root SCCs through inclusion minimality, under principal invertibility and without correlation faithfulness (Theorem~\ref{thm:block-exogeneity} and Corollary~\ref{cor:minimal-scc}).
    \item \textbf{Sample-efficient recovery through sparse adjustment.}
    We show that small adjustment sets suffice to identify ancestral blocks and their external parents (Proposition~\ref{prop:sparse-passing}).
    BlockExo attains a structurally matching sample bound without knowing $s_{\max}$ or $d_B$ under suitable conditions (Theorem~\ref{thm:finite-recovery}), and is minimax optimal in sample complexity on a separated model class for fixed structural bounds (Corollary~\ref{cor:uniform-recovery-optimality}).
\end{itemize}

Proofs of all theoretical results are provided in the appendix.

\section{Model and condensation recovery target}
\label{sec:setup}

\paragraph{Cyclic LiNG model.}
Let $G=(V,E)$ be a directed graph on $V=[p]:=\{1,\ldots,p\}$.
Its weighted adjacency matrix $B\in\mathbb R^{p\times p}$ satisfies $\operatorname{diag}(B)=0$, with $B_{ij}\ne0$ if and only if $j\to i\in E$.
We consider cyclic LiNG models \citep{lacerda2008discovering} of the form
\begin{equation}
    X=BX+\eps,
    \label{eq:sem}
\end{equation}
where $X,\eps\in\mathbb R^p$.
The coordinates of $\eps$ are mutually independent and centered, with positive finite variances, and at most one is Gaussian.
With $I$ denoting the identity matrix, we assume the following.

\begin{assumption}[Principal invertibility]
\label{ass:PI}
Every principal submatrix of $I-B$ is invertible.
\end{assumption}
Such well-posedness conditions are standard in the literature on linear cyclic causal models \citep{hyttinen2012learning,bongers2021foundations,dai2024local}.
This condition is automatic for DAGs and is implied by absolute stability, $\rho(|B|)<1$ \citep{misiakos2026stablespin}, where $\rho$ denotes the spectral radius and $|B|$ is the entrywise absolute value of $B$.
We observe $n$ i.i.d.\ copies of $X=(I-B)^{-1}\eps$ and write
$\Sigma=\operatorname{Cov}(X)=A\operatorname{Cov}(\eps)A^\top\succ0$,
where $A:=(I-B)^{-1}$.

For index sets $S,T\subseteq V$, write $X_S$ for the corresponding coordinate subvector, $G[S]$ for the induced subgraph, and $B_{S,T}$ for the submatrix with rows in $S$ and columns in $T$.
We use the same indexing convention for other matrices and omit braces for singleton indices.

\paragraph{Recovery target.}
An SCC is an inclusion-maximal set of mutually reachable nodes.
Write $\Pi=\{S_1,\ldots,S_m\}$ for the SCC partition of $G$.
The target is the condensation $G^{\rm sc}=(\Pi,E^{\rm sc})$, where
\[
    E^{\rm sc}
    = \{S\to T:S,T\in\Pi,\ S\ne T,\ B_{T,S}\ne0\}.
\]
Recovery requires the SCC partition and every edge of the condensation, including direct edges not determined by reachability alone.
This target is invariant across the observational equivalence class of a cyclic LiNG model \citep{sharifian2025near,madaleno2026coarsening}.
For a directed acyclic graph (DAG), every SCC is a singleton, so exact condensation recovery is equivalent to graph recovery.

\paragraph{Structural parameters.}
For a nonempty set $C\subseteq V$, define its external parent set by
\[
    P_{\rm ext}(C)
    = \{k\in V\setminus C:B_{C,k}\ne0\}.
\]
The maximum SCC size and maximum number of external parent variables per SCC are, respectively,
\[
    s_{\max}=\max_{S\in\Pi}|S|,
    \qquad
    d_B=\max_{S\in\Pi}|P_{\rm ext}(S)|.
\]
We count distinct external parent variables, regardless of how many edges they send into the SCC or whether they belong to the same parent SCC.
For DAGs, $s_{\max}=1$ and $d_B$ is the maximum indegree.

Our sample complexity results use the following regularity condition.
\begin{assumption}[Sub-Gaussianity and covariance eigenvalue bounds]
\label{ass:sg-ev}
There exist constants $K>0$ and $\lambda\ge1$, independent of
$p,s_{\max},d_B$, such that, for every $a\in\mathbb R^p$,
\[
    \|a^\top X\|_{\psi_2}
    \le K\sqrt{a^\top\Sigma a},
    \qquad
    \lambda^{-1}
    \le\lambda_{\min}(\Sigma)
    \le\lambda_{\max}(\Sigma)
    \le\lambda.
\]
Here $\|U\|_{\psi_2}:=\inf\{a>0:\E\exp(U^2/a^2)\le2\}$
denotes the sub-Gaussian norm of a random variable $U$.
$\lambda_{\min}$ and $\lambda_{\max}$ denote
the minimum and maximum eigenvalues, respectively.
\end{assumption}

\section{Necessary structural sample costs}
\label{sec:lower}

Exact condensation recovery requires recovering the SCC partition and identifying external parents.
We establish separate information-theoretic lower bounds for these two learning tasks.
Their combined structural dependence is attained by the recovery guarantee in Section~\ref{sec:finite} under suitable conditions.

For integers $s\ge1$ and $d\ge0$, let $\mathcal M_{p,s,d}(K,\lambda)$ be the class of cyclic LiNG models on $p$ variables satisfying Assumptions~\ref{ass:PI} and~\ref{ass:sg-ev} with constants $(K,\lambda)$, and $s_{\max}\le s$, $d_B\le d$.
For each model $M$, let $P_M$ denote the law of one observation, write $P_M^n:=P_M^{\otimes n}$ for the law of the $n$ i.i.d.\ observations, and let $G^{\rm sc}(M)$ be its true condensation.

\begin{theorem}[Sample-complexity lower bound]
\label{thm:matching-lower}
There exist universal constants $K,c_{\rm lb}>0$ and $\lambda\ge1$ such that the following holds.
For any integers $p\ge4$, $1\le s, d\le p/2$, and $n\ge1$, and any $0<\delta<1$, if
\begin{equation}
    n\le \frac{1-\delta}{c_{\rm lb}}
    \left[
        s\log\frac{ep}{s}+d\log\frac{ep}{d}
    \right],
    \label{eq:lower-sample-size}
\end{equation}
then every estimator $\widehat G^{\rm sc}$ satisfies
\begin{equation*}
    \sup_{M\in\mathcal M_{p,s,d}(K,\lambda)}
    P_M^n\!\left(
        \widehat G^{\rm sc}\ne G^{\rm sc}(M)
    \right)
    \ge\delta.
\end{equation*}
\end{theorem}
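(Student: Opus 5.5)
We reduce the statement to two separate Fano-type lower bounds and then combine them. Since $\max(a,b)\ge (a+b)/2$, it suffices to exhibit, for a suitable universal $c$, two finite subfamilies $\mathcal F_1,\mathcal F_2\subseteq\mathcal M_{p,s,d}(K,\lambda)$, each with pairwise distinct condensations. On $\mathcal F_1$ any estimator must fail with probability at least $\delta$ once $n\le (1-\delta)\,s\log(ep/s)/c$. On $\mathcal F_2$ the same holds once $n\le(1-\delta)\,d\log(ep/d)/c$. Taking $c_{\rm lb}=2c$ (after absorbing constants) then yields the theorem, because \eqref{eq:lower-sample-size} forces $n$ below one of the two thresholds. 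For each family we use Fano's inequality in the form
\[
\inf_{\widehat G}\max_{j}P_j^n(\widehat G\ne G_j)\ \ge\ 1-\frac{n\max_{j,k}\mathrm{KL}(P_j\|P_k)+\log 2}{\log N},
\]
where $N$ is the size of the family. The key requirement is therefore $\max\mathrm{KL}(P_j\|P_k)\le C$ uniformly, together with $\log N\gtrsim s\log(ep/s)$ or $d\log(ep/d)$. The $\log 2$ term is handled by restricting to $p\ge4$ and adjusting constants. When $\log N$ is small we instead use a two-point Le Cam bound.

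For the noise we fix one non-Gaussian law with smooth positive density and finite Fisher information, for example a standardized logistic or a Gaussian mixture. This guarantees that every model satisfies the non-Gaussianity requirement, and that $\mathrm{KL}$ between two linear images $A_j\eps$, $A_k\eps$ is bounded by a constant times $\|A_j-A_k\|_F^2$ whenever the perturbations have bounded Frobenius size. We bound the KL by a smoothness/second-order argument on the density of $X=A\eps$, or more simply by data processing from $\eps\mapsto(\eps, A\eps)$ via a coupling on the residuals $(I-B)X$.

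\textbf{Family 1 (SCC membership).} Take all subsets $S\subseteq[p]$ with $|S|=s$ drawn from a Gilbert--Varshamov packing of size $\log N\gtrsim s\log(ep/s)$. For each $S$ place a directed cycle on $S$ with weights $\pm\eta/\sqrt{s}$, and leave all other nodes isolated, with no between-component edges. Then the condensation is determined by $S$ and distinct sets give distinct partitions. Here $\|B\|_2\le\eta/\sqrt s<1/2$ gives absolute stability, hence Assumption~\ref{ass:PI}. Sub-Gaussianity holds with a universal $K$ since $X$ is a bounded-norm linear image of independent sub-Gaussian noise. The eigenvalues of $\Sigma$ lie in $[(1+\eta)^{-2},(1-\eta)^{-2}]$. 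The cycle matrix has $\|B\|_F^2=\eta^2$, so $\mathrm{KL}(P_S\|P_{S'})\le C\eta^2$, and choosing $\eta$ a small universal constant gives the bound.

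\textbf{Family 2 (external parents).} Fix a target node $i$ and a disjoint pool of $p/2$ candidate nodes, which is valid since $d\le p/2$. For each $d$-subset $T$ in a packing of size $\log N\gtrsim d\log(ep/d)$, set $B_{i,k}=\eta/\sqrt d$ for $k\in T$. This is a DAG, so $s_{\max}=1$ and $d_B\le d$, and the condensations differ exactly in $P_{\rm ext}(\{i\})$. Again $\|B\|_F^2=\eta^2$ keeps the KL bounded and the covariance eigenvalues within universal bounds.

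The main obstacle is the KL bound. Non-Gaussian KL has no closed form, so we must show $\mathrm{KL}(P_{A_j\eps}\|P_{A_k\eps})\le C\|B_j-B_k\|_F^2$ with $C$ universal and independent of $p$. We first try writing the log-likelihood ratio as $\sum_i[\log f(\eps_i')-\log f(\eps_i)]$ with $\eps'=(I-B_k)A_j\eps$; the determinant terms vanish since $\det(I-B)=1$ for a cycle with small weights only up to a factor, and we track $\log|\det(I-B)|$ separately. We then Taylor-expand using bounded score and Fisher information to obtain $C\,\E\|(B_j-B_k)X\|^2\le C\lambda\|B_j-B_k\|_F^2$. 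If determinant control for cycles is messy, the fallback is to use cycle weights $\eta/s$ instead, for which $|\det(I-B)-1|\le\eta^s$ is negligible.
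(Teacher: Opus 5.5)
Your architecture matches the paper's: an SCC-membership family with $d_B=0$, a single-target parent family on a DAG, Fano on each, and $\max\{h_s,h_d\}\ge(h_s+h_d)/2$ to combine, where $h_s=s\log(ep/s)$ and $h_d=d\log(ep/d)$. Your directed $s$-cycles with weights $\pm\eta/\sqrt s$ are a fine substitute for the paper's reciprocal star. However, three steps fail as written.

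First, in Family~2 you draw parents from a pool of $p/2$ nodes, so $N\le\binom{\lfloor p/2\rfloor}{d}$. This equals $1$ at $d=p/2$, which the theorem allows, and it is far below $\exp\{c\,d\log(ep/d)\}$ whenever $d$ is comparable to $p/2$. Family~1 cannot compensate when $s$ is small. Take the pool to be all of $[p]\setminus\{i\}$, as the paper does: then $\binom{p-1}{d}\ge((p-1)/d)^d$ with $(p-1)/d\ge3/2$ for $p\ge4$ and $d\le p/2$.

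Second, for $s=1$ your Family~1 does not exist. A one-node cycle is a self-loop, excluded by $\operatorname{diag}(B)=0$, and all-singleton models never change the partition. You need the observation that $x\mapsto x\log(ep/x)$ is increasing on $[1,p]$, so $\log(ep)\le d\log(ep/d)$ and Family~2 alone covers this case.

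Third, standardized logistic noise has only exponential tails. Those models violate the $\psi_2$ condition of Assumption~\ref{ass:sg-ev} and lie outside $\mathcal M_{p,s,d}(K,\lambda)$, so you must use a sub-Gaussian law such as the Gaussian mixture. The property the Taylor step needs is $\sup_x|(\log f)''(x)|<\infty$, not a bounded score. The paper's mixture has an unbounded score but $(\log f)''=-2\tanh^2(\sqrt2x)$.

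The KL step, which you rightly call the crux, is missing its mechanism. With $\Delta=B_j-B_k$, $A_j=(I-B_j)^{-1}$, $W=\Delta A_j\eps$ and $g=\log f$,
\[
\operatorname{KL}(P_j\|P_k)=\log\frac{|\det(I-B_j)|}{|\det(I-B_k)|}+\sum_i\E\bigl[g(\eps_i)-g(\eps_i+W_i)\bigr].
\]
The linear Taylor term does not vanish. The exact identity $\E[g'(\eps_i)\eps_m]=-\mathbf 1\{i=m\}$ gives $-\sum_i\E[g'(\eps_i)W_i]=\operatorname{tr}(\Delta A_j)$, which for a $2$-cycle is of the same order as $\|B\|_F^2$. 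Your sketch ends at $C\,\E\|(B_j-B_k)X\|_2^2$, which silently drops this term. Bounding it instead by Cauchy--Schwarz with the Fisher information gives $\sqrt{\E[g'(\eps_1)^2]}\sum_i(\E W_i^2)^{1/2}$. That is of order $\sqrt{s}\,\eta$ for your cycle family, since up to $2s$ rows are affected, so it is not uniform in $s$.

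The fix is to compute this term exactly and pair it with the determinant. The two combine into $\operatorname{tr}(M)-\log\det(I+M)$ with $M=A_j\Delta$, which is at most a constant times $\|M\|_F^2$ once $\|M\|_{\rm op}$ is bounded away from one. Alternatively, each piece is separately second order because $\operatorname{tr}\Delta=0$ by the zero diagonal.

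The determinant itself is not the obstacle. For your families it is explicit ($1-\prod_e w_e$ for a cycle, $1$ for the DAG), so the detour through weights $\eta/s$ is unnecessary. The paper also avoids pairwise KL altogether. It bounds $\operatorname{KL}(P_B\|P_0)$ against the independent-noise model, where $\log\det(I-B)+\operatorname{tr}\Gamma=\sum_{k\ge2}(1-1/k)\operatorname{tr}(B^k)$ with $\Gamma=(I-B)^{-1}-I$, and then applies Fano with the common center $P_0$. Your pairwise version also works once this cancellation is in place.
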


When the sample size satisfies \eqref{eq:lower-sample-size}, no estimator can achieve worst-case error probability below $\delta$.
For $s\ge2$, identifying SCC membership requires the first cost even when the condensation has no edges.
The second cost arises from external-parent selection even on DAGs, where all SCCs are singletons.
In this case, $s=1$ and $\log(ep)\le d\log(ep/d)$, so the bound has the same order as the parent-selection cost.

\begin{figure}[htbp]
\centering
\definecolor{lowerSCC}{HTML}{355C7D}
\definecolor{lowerParent}{HTML}{AC684B}
\tikzset{
 vertex/.style={circle,draw=black!35,fill=white,minimum size=5mm,inner sep=0pt,font=\footnotesize,text=black!65},
 selected/.style={vertex,draw=lowerActive,fill=lowerActive!8,text=black},
 edge/.style={-{Latex[length=1.5mm]},line width=.65pt,lowerActive},
 heading/.style={font=\small\bfseries,align=center},
 rule/.style={black!22,line width=.45pt},
 detail/.style={font=\footnotesize,align=center}
}
\newcommand{\lowerExample}[4]{%
 \begin{scope}[shift={(#3,#4)}]
  \ifnum#1=0\colorlet{lowerActive}{lowerSCC}\else\colorlet{lowerActive}{lowerParent}\fi
  \ifnum#1=0
   \node[selected] (v1) at (0,0) {$1$};
  \else
   \node[vertex,draw=black!65,text=black] (v1) at (0,0) {$1$};
  \fi
  \node[selected] (v2) at (-.87,0) {$2$};
  \ifnum#2=3
   \node[selected] (v3) at (.87,.65) {$3$};
   \node[vertex] (v4) at (.87,-.65) {$4$};
  \else
   \node[vertex] (v3) at (.87,.65) {$3$};
   \node[selected] (v4) at (.87,-.65) {$4$};
  \fi
  \node[vertex] (v5) at (-.87,.65) {$5$};
  \node[vertex] (v6) at (-.87,-.65) {$6$};
  \foreach \j in {2,#2}{
   \ifnum#1=0
    \draw[edge] (v1) to[bend left=14] (v\j);
    \draw[edge] (v\j) to[bend left=14] (v1);
   \else
    \draw[edge] (v\j) -- (v1);
   \fi
  }
  \ifnum#1=0
   \node[font=\footnotesize,text=lowerActive] at (0,-1.12) {SCC $\{1,2,#2\}$};
  \else
   \node[font=\footnotesize,text=lowerActive] at (0,-1.12) {$\Pa(1)=\{2,#2\}$};
  \fi
 \end{scope}
}
\resizebox{\linewidth}{!}{%
\begin{tikzpicture}
 \node[heading,text=lowerSCC] at (3.45,0) {(a) SCC membership};
 \node[heading,text=lowerParent] at (11.35,0) {(b) External-parent selection};
 \lowerExample{0}{3}{1.8}{-1.12}
 \lowerExample{0}{4}{5.1}{-1.12}
 \lowerExample{1}{3}{9.7}{-1.12}
 \lowerExample{1}{4}{13.0}{-1.12}
 \draw[rule] (7.4,.2) -- (7.4,-3.04);
 \draw[rule] (0,-2.46) -- (6.9,-2.46);
 \draw[rule] (7.9,-2.46) -- (14.8,-2.46);
 \node[detail] at (3.45,-2.80)
    {$N_s=\binom{p-1}{s-1}$ SCC partitions; edge weights $\propto(s-1)^{-1/2}$};
 \node[detail] at (11.35,-2.80)
    {$N_d=\binom{p-1}{d}$ parent sets; edge weights $\propto d^{-1/2}$};
 \draw[rule] (0,-3.14) -- (14.8,-3.14);
 \node[detail,text=lowerSCC] at (3.45,-3.57)
    {Sample cost:\; $\Omega\!\left(s\log\frac{ep}{s}\right)$};
 \node[detail,text=lowerParent] at (11.35,-3.57)
    {Sample cost:\; $\Omega\!\left(d\log\frac{ep}{d}\right)$};
\end{tikzpicture}%
}
\caption{Two lower-bound constructions at $p=6$: (a) SCC size $s=3$;
(b) parent count $d=2$.}
\label{fig:scc-lower-bound}
\end{figure}

Establishing the SCC term requires varying the partition itself, since changing internal edges can leave the condensation unchanged.
Figure~\ref{fig:scc-lower-bound}(a) varies SCC membership through reciprocal connections to a fixed center, with no edges between SCCs (Lemma~\ref{lem:star-packing}).
Figure~\ref{fig:scc-lower-bound}(b) varies the parents of one node in a DAG, keeping the SCC partition fixed (Lemma~\ref{lem:parent-packing}).
In both families, scaling the connections limits the information provided by each observation, yielding the costs through Fano's inequality.

\section{Population principles for exact condensation recovery}
\label{sec:population}

\subsection{Block exogeneity and unknown root SCCs}

Discovering a root SCC requires identifying its unknown boundary.
Its nodes need not be individually exogenous, and ancestry alone does not distinguish the SCC from larger ancestral sets.

A set is ancestral if it contains every parent, equivalently every ancestor, of each of its nodes.
Let $F\subseteq V$ be the already recovered ancestral set and $R=V\setminus F$.
An SCC is called a root SCC if its node in the condensation has no incoming edge.
For candidate blocks in $R$, ancestry and root status refer to $G[R]$; a current root SCC may therefore have external parents in $F$.

For centered random vectors $Y$ and $W$ with $\operatorname{Cov}(W)\succ0$, define the population least-squares projection by
\[
    \operatorname{Proj}(Y\mid W)
    :=
    \operatorname{Cov}(Y,W)
    \operatorname{Cov}(W)^{-1}W.
\]
For disjoint index sets $Q,C,D\subseteq V$, with $C\ne\varnothing$, define
\[
\begin{aligned}
    e_C(Q)
    :=X_C-\operatorname{Proj}(X_C\mid X_Q),\qquad
    r_{Q,C}(D)
    :=e_D(Q)-\operatorname{Proj}\bigl(e_D(Q)\mid e_C(Q)\bigr).
\end{aligned}
\]
We denote their coordinates by $e_c(Q)$ for $c\in C$ and $r_{Q,C}(\ell)$ for $\ell\in D$.
Projection onto an empty predictor set is zero.
The predictor covariance matrices are positive definite because $\Sigma\succ0$.

We first take $Q=F$.
Since $F$ is ancestral, $B_{F,R}=0$, and the structural equations give
\begin{equation}
    X_R=H_RX_F+(I-B_{R,R})^{-1}\eps_R,
    \qquad
    H_R=(I-B_{R,R})^{-1}B_{R,F}.
\label{eq:block-solution}
\end{equation}
The vector $X_F$ depends only on $\eps_F$, so $\eps_R\indep X_F$.
The population projection onto $X_F$ is therefore $H_RX_F$, so
\begin{equation}
    e_R(F)=(I-B_{R,R})^{-1}\eps_R.
\label{eq:induced-residual-model}
\end{equation}
Thus $e_R(F)$ follows the induced cyclic LiNG model on $R$, so it suffices to characterize its root SCCs.

\begin{theorem}[Block exogeneity]
\label{thm:block-exogeneity}
Suppose the cyclic LiNG model~\eqref{eq:sem} satisfies Assumption~\ref{ass:PI}. Let $F\subseteq V$ be an ancestral set and $R=V\setminus F$. 
For every nonempty $C\subseteq R$, the following are equivalent:
\begin{enumerate}
  \item[(i)] $C$ is ancestral in $G[R]$;
  \item[(ii)] $e_C(F)\indep r_{F,C}(R\setminus C)$;
  \item[(iii)] $e_c(F)\indep r_{F,C}(\ell)$ for all
  $c\in C$ and $\ell\in R\setminus C$.
\end{enumerate}
\end{theorem}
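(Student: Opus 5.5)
Work entirely with the induced residual model \eqref{eq:induced-residual-model}. Write $Y:=e_R(F)=(I-B_{R,R})^{-1}\eps_R$, so $Y=B_{R,R}Y+\eps_R$ is a cyclic LiNG model on $R$ whose coefficient matrix inherits principal invertibility from Assumption~\ref{ass:PI}. Since $X_F$ depends only on $\eps_F$, projecting onto $X_F$ and then onto $e_C(F)$ coincides with projecting $Y_D$ onto $Y_C$, so $r_{F,C}(D)=Y_D-\operatorname{Proj}(Y_D\mid Y_C)$. The implication (ii)$\Rightarrow$(iii) is immediate, because independence of two vectors passes to coordinates. It therefore remains to prove (i)$\Rightarrow$(ii) and (iii)$\Rightarrow$(i). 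Throughout, write $D=R\setminus C$.

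For (i)$\Rightarrow$(ii), suppose $C$ is ancestral in $G[R]$, so $B_{C,D}=0$. Then $Y_C=(I-B_{C,C})^{-1}\eps_C$, which uses principal invertibility of the $C$-block, and $Y_C$ is a function of $\eps_C$ alone. The $D$-equations give $Y_D=(I-B_{D,D})^{-1}(B_{D,C}Y_C+\eps_D)$. Because $\eps_D\indep Y_C$, the projection onto $Y_C$ is exactly $(I-B_{D,D})^{-1}B_{D,C}Y_C$. Hence $r_{F,C}(D)=(I-B_{D,D})^{-1}\eps_D$, which is a function of $\eps_D$, and it is independent of $Y_C$, a function of $\eps_C$.

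For (iii)$\Rightarrow$(i), argue by contraposition. Suppose $B_{C,D}\neq0$. Use the reduced forms $Y_C=\sum_j A_{C,j}\eps_j$ and $r_{F,C}(\ell)=\sum_j M_{\ell j}\eps_j$, where $M=[\,-\Gamma,\ I\,]A$ restricted to rows in $D$ and $\Gamma$ is the population regression matrix. Then apply the Darmois--Skitovich theorem: if $Y_c\indep r(\ell)$, every non-Gaussian $\eps_j$ must have $A_{cj}M_{\ell j}=0$. The linear-algebraic core is to exploit the block structure. Write $A_{R\cdot}$ in block form using the Schur complement $S=I-B_{D,D}-B_{D,C}(I-B_{C,C})^{-1}B_{C,D}$, which is invertible by principal invertibility. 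The goal is to show that $M$ restricted to the $\eps_D$ columns equals $(I-B_{D,D})^{-1}$, up to correction terms, and in particular is invertible. Combined with $A_{C,D}=(I-B_{C,C})^{-1}B_{C,D}S^{-1}\neq0$, this yields some $c$, $\ell$ and $j\in D$ with $A_{cj}M_{\ell j}\neq0$.

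The main obstacle is this last step, for two reasons. First, $\Gamma$ depends on the noise variances, so $M$ is not a clean structural matrix. Second, at most one $\eps_j$ may be Gaussian. My first attempt is to avoid computing $M$ explicitly and argue via uncorrelatedness instead. If every non-Gaussian $\eps_j$ with $j\in D$ satisfied $A_{C,j}M_{\ell j}=0$ for all $\ell$, then the $\eps_D$-contributions to $Y_C$ and to $r(D)$ would live on disjoint coordinates, apart from a single Gaussian index. Together with $\Cov(Y_C,r(D))=0$, this should force the joint $\eps_D$-part of $Y_C$ to vanish, i.e.\ $A_{C,D}=0$. But then $B_{C,D}=(I-B_{C,C})A_{C,D}S\,\cdots=0$, a contradiction. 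The single Gaussian coordinate is handled by noting that it alone cannot carry the full-rank $D$-dependence, since $|D|\ge1$ non-Gaussian directions remain whenever $A_{C,D}$ has more than one nonzero column. The edge case where it has exactly one nonzero column is treated by the covariance identity directly.
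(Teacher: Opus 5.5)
Your arguments for (ii)$\Rightarrow$(iii) and (i)$\Rightarrow$(ii) are correct and coincide with the paper's. The gap is in (iii)$\Rightarrow$(i).

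After Darmois--Skitovitch you need some $j\in D$ with $A_{C,j}\neq0$ \emph{and} $M_{D,j}\neq0$. You never show that such a source also loads on the residual.

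\begin{itemize}
\item The claim that $M_{D,D}$ ``equals $(I-B_{D,D})^{-1}$ up to correction terms, and in particular is invertible'' is not an argument. In fact $M_{D,D}=A_{D,D}-\Gamma A_{C,D}$ with $A_{D,D}=S^{-1}$, and invertibility does not survive unspecified corrections.
\item Your fallback is a non sequitur. If the source supports of $Y_C$ and $r_{F,C}(D)$ are disjoint, then $\Cov(Y_C,r_{F,C}(D))=\sum_k\Var(\eps_k)A_{C,k}M_{D,k}^\top$ vanishes automatically. So the covariance identity carries no information that could force $A_{C,D}=0$.
\item The same hole remains in your Gaussian edge case. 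There the covariance identity only yields $M_{D,j}=0$, and a contradiction is still needed.
\item By applying Darmois--Skitovitch only to $j\in D$, you discard exactly the information from sources in $C$ that closes the argument.
\end{itemize}

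The missing idea is a rank argument that uses principal invertibility of the $D$-block, which your sketch never invokes.

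First, for every $k\in R$, Darmois--Skitovitch applied to each pair in (iii) kills the outer product $A_{C,k}M_{D,k}^\top$ whenever $\eps_k$ is non-Gaussian. Then $\Cov(Y_C,r_{F,C}(D))=0$ kills the single possible Gaussian summand. Hence $A_{C,k}\neq0\Rightarrow M_{D,k}=0$ for \emph{all} $k\in R$, including $k\in C$.

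Second, Jacobi's complementary-minor identity gives $\det A_{C,C}=\det(I-B_{D,D})/\det(I-B_{R,R})\neq0$. So every column of $A_{C,C}$ is nonzero, and therefore $M_{D,C}=0$.

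If $B_{C,D}\neq0$, your Schur-complement formula gives some $k\in D$ with $A_{C,k}\neq0$, hence $M_{D,C\cup\{k\}}=0$. Now consider the invertible matrix (rows ordered as $(C,D)$)
\[
\begin{bmatrix}A_{C,R}\\ M\end{bmatrix}=\begin{bmatrix}I&0\\ -\Gamma&I\end{bmatrix}(I-B_{R,R})^{-1}.
\]
Its $|C|+1$ columns indexed by $C\cup\{k\}$ have zero $D$-rows, so they have rank at most $|C|$, a contradiction. This is the content of Lemma~\ref{lem:residual-ancestry}.

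Alternatively, you can rescue your first route by proving that $M_{D,D}$ is invertible. With $\Omega_R=\Cov(\eps_R)$, the rows of $M\Omega_R^{1/2}$ span the orthogonal complement of the row space of $A_{C,R}\Omega_R^{1/2}$. No nonzero vector supported on $C$ lies in that complement, precisely because $A_{C,C}$ is nonsingular.
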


\begin{corollary}[Root SCC characterization]
\label{cor:minimal-scc}
Under the conditions of Theorem~\ref{thm:block-exogeneity}, with $R\ne\varnothing$, the inclusion-minimal nonempty sets $C\subseteq R$ satisfying condition~(ii) are exactly the root SCCs of $G[R]$.
\end{corollary}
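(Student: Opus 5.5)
By Theorem~\ref{thm:block-exogeneity}, a nonempty $C\subseteq R$ satisfies condition~(ii) if and only if $C$ is ancestral in $G[R]$. The corollary therefore reduces to a purely graph-theoretic claim. In any directed graph $G[R]$ with $R\ne\varnothing$, the inclusion-minimal nonempty ancestral sets are exactly the root SCCs. No further probabilistic argument is needed; I would prove this combinatorial claim in two directions.

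\emph{Root SCCs are minimal.} Let $S$ be a root SCC of $G[R]$.

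First, $S$ is ancestral. Take $j\in S$ and a parent $k\in R$ of $j$. If $k\notin S$, the edge $k\to j$ gives an incoming edge into the node $S$ of the condensation, contradicting root status. Hence every parent lies in $S$.

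Second, $S$ is minimal. Suppose $\varnothing\ne C\subsetneq S$ is ancestral. Pick $c\in C$ and $j\in S\setminus C$. Since $S$ is strongly connected, there is a directed path from $j$ to $c$ within $S$, so $j$ is an ancestor of $c$. Because ancestral sets contain all ancestors, $j\in C$, a contradiction.

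\emph{Minimal ancestral sets are root SCCs.} Let $C$ be an inclusion-minimal nonempty ancestral set.

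First, $C$ contains a root SCC. Ancestral sets are unions of SCCs: if $v\in C$ and $u$ is in the SCC of $v$, then $u$ is an ancestor of $v$, so $u\in C$. The condensation restricted to the SCCs contained in $C$ is a finite DAG, so it has a source $S$, i.e., an SCC in $C$ with no incoming edge from any other SCC in $C$. Since $C$ is ancestral, every parent in $R$ of a node of $S$ lies in $C$, hence in $S$ or in another SCC of $C$; the latter is excluded by the choice of $S$. So $S$ has no incoming condensation edge in $G[R]$ and is a root SCC.

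Second, $C=S$. By the first direction, $S$ is a nonempty ancestral set with $S\subseteq C$, so minimality of $C$ forces $C=S$.

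The only point needing care is the bookkeeping that "ancestral" and "root" are both taken relative to $G[R]$, matching the convention stated before Theorem~\ref{thm:block-exogeneity}. Edges from $F$ are irrelevant to both notions, and the equivalence (i)$\Leftrightarrow$(ii) of the theorem is likewise stated for ancestry in $G[R]$. The finiteness argument for the existence of a source SCC is standard. I expect no real obstacle beyond this consistency check.
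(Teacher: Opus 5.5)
Your proposal is correct and follows essentially the same route as the paper. The paper also invokes Theorem~\ref{thm:block-exogeneity} to reduce the claim to the graph-theoretic fact that root SCCs are exactly the inclusion-minimal nonempty ancestral sets of $G[R]$, and proves that fact (Lemma~\ref{lem:ancestral-root-scc}) by the same arguments you give: strong connectivity for minimality, and a source of the induced condensation together with ancestry for the converse.
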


After adjustment for $F$, the residual of an ancestral block $C$ involves only its own noises, and regressing the adjusted complement on $e_C(F)$ removes the effects transmitted from $C$.
The resulting residual vectors depend on disjoint sets of independent noises, giving $(i)\Rightarrow(ii)$.
Marginalization gives $(ii)\Rightarrow(iii)$.
Conversely, the Darmois--Skitovitch theorem \citep{darmois1953analyse,skitovitch1953property}, combined with a rank argument based on principal invertibility, shows that any incoming edge to $C$ in $G[R]$ forces $e_c(F)$ and $r_{F,C}(\ell)$ to be dependent for some $c\in C$ and $\ell\in R\setminus C$, giving $(iii)\Rightarrow(i)$.
The equivalence of $(ii)$ and $(iii)$ allows block independence to be assessed through scalar independence tests.

\begin{figure}[htbp]
\centering
\definecolor{sccblue}{RGB}{35,83,150}
\definecolor{parentorange}{RGB}{177,88,21}
\definecolor{rejectred}{RGB}{165,55,54}
\resizebox{.94\linewidth}{!}{%
\begin{tikzpicture}[
  vertex/.style={circle,draw=black!55,fill=white,minimum size=6.5mm,
    inner sep=0pt,font=\small},
  edge/.style={-{Latex[length=1.7mm]},line width=.7pt,black!65},
  every node/.style={inner sep=2pt}
]
\path[use as bounding box] (0,1.70) rectangle (12.90,3.68);
\draw[rounded corners=5pt,fill=black!3,draw=black!45,dashed]
  (.30,2.16) rectangle (2.81,3.22);
\draw[rounded corners=6pt,fill=black!1,draw=black!55,dashed]
  (3.65,1.92) rectangle (10.70,3.59);
\draw[rounded corners=5pt,fill=sccblue!5,draw=sccblue,line width=.9pt]
  (3.89,2.04) rectangle (8.37,3.45);
\draw[rounded corners=4pt,fill=rejectred!3,draw=rejectred,line width=.8pt]
  (4.13,2.16) rectangle (6.42,3.35);
\node[font=\small,text=black!65] at (1.56,3.45) {Recovered $F=\{u,v\}$};
\node[font=\small,text=rejectred] at (6.06,3.16) {$C_-$};
\node[font=\small,text=sccblue] at (8.05,3.20) {$S$};
\node[font=\small,text=black!65] at (10.35,3.33) {$C_+$};

\node[vertex] (u) at (.85,2.63) {$u$};
\node[vertex,draw=parentorange,fill=parentorange!8] (v) at (2.28,2.63) {$v$};
\node[vertex] (a) at (4.62,2.63) {$a$};
\node[vertex] (b) at (5.89,2.63) {$b$};
\node[vertex] (c) at (7.64,2.63) {$c$};
\node[vertex] (d) at (10.07,2.63) {$d$};
\node[vertex] (e) at (12.15,2.63) {$e$};
\draw[edge] (u) -- (v);
\draw[edge,parentorange] (v) -- (a);
\draw[edge] (a) to[bend left=23] (b);
\draw[edge] (b) to[bend left=23] (a);
\draw[edge] (b) to[bend left=23] (c);
\draw[edge,rejectred] (c) to[bend left=23] (b);
\draw[edge] (c) -- (d);
\draw[edge] (d) -- (e);
\draw[edge] (u.south) .. controls (1.70,1.62) and (9.30,1.62) .. (d.south);
\draw[edge] (u.south)
  .. controls (1.70,1.62) and (9.30,1.62) .. (d.south);

\node[
  anchor=north east,
  font=\footnotesize,
  text=black!65,
  inner sep=0pt
] at (12.75,1.88) {Remaining $R=\{a,b,c,d,e\}$};
\end{tikzpicture}%
}
\caption{Illustration of block exogeneity and sparse adjustment with overlapping feedback cycles.}
\label{fig:population-principles}
\end{figure}

In Figure~\ref{fig:population-principles}, adjustment for $F=\{u,v\}$ removes their upstream effects, so ancestry is evaluated in $G[R]$.
Although $C_-=\{a,b\}$ contains the cycle $a\leftrightarrow b$, the red edge $c\to b$ enters from outside $C_-$.
Because of this incoming edge, $e_b(F)$ and $r_{F,C_-}(c)$ share contributions from $\varepsilon_b$ and $\varepsilon_c$ and are dependent, so $C_-$ fails condition~(ii).
For $S=\{a,b,c\}$, $e_S(F)$ depends only on $\varepsilon_a,\varepsilon_b,\varepsilon_c$.
Regressing the adjusted variables $d,e$ on $e_S(F)$ removes the effects transmitted from $S$, leaving residuals driven only by $\varepsilon_d,\varepsilon_e$.
These residuals are independent of $e_S(F)$, so $S$ satisfies condition~(ii).

The larger candidate $C_+=S\cup\{d\}$ also has no incoming edge in $G[R]$ and satisfies condition~(ii), but is not inclusion-minimal because its proper subset $S$ satisfies the same condition.
In contrast, every nonempty proper subset of $S$ has an incoming edge from the rest of $S$ and fails the condition, so $S$ is inclusion-minimal.
Corollary~\ref{cor:minimal-scc} therefore identifies $S$ as a root SCC.

GroupLiNGAM characterizes exogenous sets through block residual independence under correlation faithfulness \citep{kawahara2010grouplingam}.
Our block-exogeneity principle shows that independence of every pair of coordinates, one from each of $e_C(F)$ and $r_{F,C}(R\setminus C)$, already suffices to establish ancestry without correlation faithfulness, under principal invertibility and independent structural errors.
This characterization identifies the root SCCs of $G[R]$ as exactly the inclusion-minimal nonempty blocks satisfying these scalar independence conditions.

\subsection{Sparse adjustment and external-parent identification}

Theorem~\ref{thm:block-exogeneity} uses the full recovered set $F$, whose size grows over successive recovery rounds.
To keep regression dimensions small, we seek a small subset $P\subseteq F$ that still permits root-SCC detection and external-parent identification.

For nonempty $C\subseteq R$ and $P\subseteq F$, we use two conditions.
$\mathsf{UP}$ checks whether adjustment for $P$ yields the same residual for $C$ as adjustment for all of $F$.
Once this holds, $\mathsf{OUT}$ checks whether $C$ is ancestral in $G[R]$.
\begin{align}
    e_c(P)&\indep X_k \qquad\forall c\in C,\ k\in F, \tag{\(\mathsf{UP}\)}\label{eq:up}\\
    e_c(P)&\indep r_{P,C}(\ell)
    \qquad\forall c\in C,\ \ell\in R\setminus C.\tag{\(\mathsf{OUT}\)}
\label{eq:out}
\end{align}
We call $(C,P)$ passing if it satisfies \eqref{eq:up} and \eqref{eq:out}, and write $\mathsf{PASS}(C,P)=1$ for passing pairs and zero otherwise.
We call $C$ passing when such a $P$ exists.

\begin{proposition}[Sparse adjustment and external-parent identification]
\label{prop:sparse-passing}
Under the conditions of Theorem~\ref{thm:block-exogeneity}, for every
$\varnothing\ne C\subseteq R$ and every $P\subseteq F$, the pair $(C,P)$ satisfies
\eqref{eq:up}--\eqref{eq:out} if and only if $C$ is ancestral in $G[R]$ and
$P_{\rm ext}(C)\subseteq P$.  Consequently, for an ancestral block $C$,
$P_{\rm ext}(C)$ is the unique minimum-cardinality passing adjustment.
\end{proposition}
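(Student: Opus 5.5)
The plan is to reduce both directions to Theorem~\ref{thm:block-exogeneity} by comparing the $P$-adjusted residuals with the $F$-adjusted ones. Write $e_C(F)=X_C-H_CX_F$, where $H_C$ denotes the $C$-rows of $H_R$ in \eqref{eq:block-solution}. By \eqref{eq:induced-residual-model}, $e_R(F)$ is a linear function of $\eps_R$ alone, while $X_F$ is a function of $\eps_F$ alone. Hence $e_R(F)\indep X_F$, and more strongly $(e_R(F),X_F)$ are functions of disjoint independent noise blocks.

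\emph{Sufficiency.} Suppose $C$ is ancestral in $G[R]$ and $P_{\rm ext}(C)\subseteq P$.
\begin{enumerate}
  \item Ancestrality gives $B_{C,R\setminus C}=0$. So $X_C=(I-B_{C,C})^{-1}(B_{C,F}X_F+\eps_C)$, which uses Assumption~\ref{ass:PI} for invertibility. Also $P_{\rm ext}(C)\subseteq F$, and $B_{C,F}$ has nonzero columns only in $P_{\rm ext}(C)\subseteq P$.
  \item Therefore $X_C$ equals a linear function of $X_P$ plus $(I-B_{C,C})^{-1}\eps_C$, and the second term is independent of $X_F$. It follows that $e_C(P)=e_C(F)=(I-B_{C,C})^{-1}\eps_C$, which is independent of $X_F$. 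This is \eqref{eq:up}.
  \item For $\ell\in R\setminus C$, write $e_\ell(P)=e_\ell(F)+b_\ell^\top X_F$. Since $e_C(F)$ is uncorrelated with $X_F$, the projection coefficients of $e_\ell(P)$ on $e_C(P)$ coincide with those of $e_\ell(F)$ on $e_C(F)$. Hence $r_{P,C}(\ell)=r_{F,C}(\ell)+b_\ell^\top X_F$.
  \item From the proof of $(i)\Rightarrow(ii)$ in Theorem~\ref{thm:block-exogeneity}, $r_{F,C}(\ell)$ depends only on $\eps_{R\setminus C}$. So $e_c(F)$, a function of $\eps_C$, is independent of the pair $(r_{F,C}(\ell),X_F)$, a function of $(\eps_{R\setminus C},\eps_F)$. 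This gives \eqref{eq:out}.
\end{enumerate}

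\emph{Necessity.}
\begin{enumerate}
  \item Write $e_c(P)=e_c(F)+a_c^\top X_F$ for some vector $a_c$. \eqref{eq:up} implies $\Cov(e_c(P),X_F)=0$. Since $\Cov(e_c(F),X_F)=0$, this gives $\Sigma_{F,F}a_c=0$, and $\Sigma\succ0$ forces $a_c=0$. Thus $e_C(P)=e_C(F)$, so $H_CX_F$ lies in the span of $X_P$. Because $\Sigma_{F,F}\succ0$, the columns of $H_C$ indexed by $F\setminus P$ vanish.
  \item By the same computation as in step~3 of sufficiency, $r_{P,C}(\ell)=r_{F,C}(\ell)+b_\ell^\top X_F$. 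By \eqref{eq:out}, this is independent of $e_c(F)$.
  \item The main obstacle is to strip off the $X_F$ term and conclude $e_c(F)\indep r_{F,C}(\ell)$. A characteristic-function division fails when $\varphi_{b_\ell^\top X_F}$ has zeros. Instead I would work at the level of noise coefficients. Both $e_c(F)$ and $r_{P,C}(\ell)$ are linear forms in the independent noises $\eps$. By Darmois--Skitovitch, with the at-most-one-Gaussian caveat handled as in the proof of Theorem~\ref{thm:block-exogeneity}, independence forces no non-Gaussian $\eps_j$ to have nonzero coefficient in both forms. The form $e_c(F)$ involves only $\eps_R$, and on $\eps_R$ the coefficients of $r_{P,C}(\ell)$ equal those of $r_{F,C}(\ell)$. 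So the same disjointness condition holds for the pair $(e_c(F),r_{F,C}(\ell))$. I would show that this disjointness is exactly what the $(iii)\Rightarrow(i)$ argument in Theorem~\ref{thm:block-exogeneity} uses, or equivalently that it implies independence.
  \item Theorem~\ref{thm:block-exogeneity} then gives that $C$ is ancestral in $G[R]$. Consequently $H_C=(I-B_{C,C})^{-1}B_{C,F}$. Since $(I-B_{C,C})^{-1}$ is invertible, the columns of $B_{C,F}$ indexed by $F\setminus P$ also vanish, so $P_{\rm ext}(C)=P_{\rm ext}(C)\cap F\subseteq P$.
\end{enumerate}

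\emph{Minimum cardinality.} For ancestral $C$, every passing $P$ contains $P_{\rm ext}(C)$, and $P=P_{\rm ext}(C)$ itself passes by the sufficiency direction. Hence $P_{\rm ext}(C)$ is the unique minimum-cardinality passing adjustment.
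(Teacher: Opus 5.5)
Your proposal is correct and follows essentially the paper's route: decompose $e_c(P)=e_c(F)+a_c^\top X_F$, show $\mathsf{UP}$ forces $e_C(P)=e_C(F)$, write $r_{P,C}(\ell)=r_{F,C}(\ell)+b_\ell^\top X_F$, and apply Darmois--Skitovitch at the level of noise coefficients, using that $e_c(F)$ involves only $\eps_R$ while the extra term involves only $\eps_F$. The only difference is packaging. The paper builds the extra $L\eps_{V\setminus R}$ term directly into Lemma~\ref{lem:residual-ancestry}, whereas you transfer the coefficient disjointness back to the pair $(e_c(F),r_{F,C}(\ell))$ and re-invoke condition~(iii); this goes through once you note that the zero covariance of each scalar pair, which follows from independence or the normal equations, also rules out a shared Gaussian source.
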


Covariance alone suffices to verify $\mathsf{UP}$, even without Gaussianity.
Linearity and independent components of $\varepsilon$ ensure that any remaining upstream effect is detectable through covariance with $X_F$:
\[
    \mathsf{UP}
    \quad\Longleftrightarrow\quad
    \operatorname{Cov}\bigl(e_C(P),X_F\bigr)=0
    \quad\Longleftrightarrow\quad
    e_C(P)=e_C(F).
\]
Once this holds, $\mathsf{OUT}$ is equivalent to ancestry in $G[R]$, even if upstream effects persist in $R\setminus C$.
Both conditions use regressions only on $P$ or $P\cup C$.
Variables in $F\setminus P$ enter only the $\mathsf{UP}$ covariance checks and are not included as regression predictors.

To see why the minimum adjustment identifies external parents, suppose $C$ is ancestral in $G[R]$.
Then
\[
    X_C
    =
    (I-B_{C,C})^{-1}B_{C,F}X_F+e_C(F).
\]
The invertible factor $(I-B_{C,C})^{-1}$ preserves which columns of $B_{C,F}$ are nonzero.
Feedback may change individual coefficients, but each external parent still appears in at least one of the block's regression equations.
Since $\operatorname{Cov}(X_F)\succ0$, these regression coefficients
are unique, so an omitted external parent cannot be replaced
by other variables in $F$.
Including all external parents in $P$, on the other hand, reproduces the regression of $X_C$ on $X_F$, giving $e_C(P)=e_C(F)$.
Therefore the unique minimum-cardinality passing adjustment
is exactly $P_{\rm ext}(C)$.

In Figure~\ref{fig:population-principles}, adjusting for $u$ leaves the noise $\varepsilon_v$ in $S$, whereas $e_S(\{v\})=e_S(F)$.
The complement residuals retain contributions from $u$ but remain independent of $e_S(\{v\})$:
\[
    r_{\{v\},S}(d)=B_{d,u}e_u(\{v\})+\varepsilon_d
    \quad\indep\quad e_S(\{v\}).
\]
The residual $r_{\{v\},S}(e)$ likewise contains a contribution from $u$ transmitted through $d$ and is independent of $e_S(\{v\})$.
Thus $\{v\}$ satisfies $\mathsf{UP}$ and $\mathsf{OUT}$, while $\varnothing$ and $\{u\}$ fail $\mathsf{UP}$.
Proposition~\ref{prop:sparse-passing} identifies the minimum adjustment $\{v\}$ as $P_{\rm ext}(S)$, recovering the incoming condensation edge $\{v\}\to S$ with adjustment for $v$ alone rather than all of $F$.

\subsection{Exact condensation recovery}

Algorithm~\ref{alg:population} applies these principles sequentially: adding a root SCC of $G[R]$ to $F$ yields another ancestral set, so the characterizations continue to apply.
A passing pair of minimum total size consists of a root SCC and its exact external parent set.
Algorithm~\ref{alg:population} recovers the condensation by repeatedly selecting the first passer in increasing order of $|C|+|P|$.

\begin{algorithm}[H]
\caption{BlockExo: exact condensation recovery}
\label{alg:population}
\begin{algorithmic}[1]
\Require Search bounds $s\ge s_{\max}$ and $d\ge d_B$
\State $F\gets\varnothing$;
       $\widehat\Pi\gets\varnothing$;
       $\widehat E^{\rm sc}\gets\varnothing$
\While{$F\ne V$}
    \State $R\gets V\setminus F$; $S\gets\varnothing$
    \For{$t=1,\ldots,\min\{s,|R|\}+\min\{d,|F|\}$}
        \ForAll{$C\subseteq R$, $P\subseteq F$ with $1\le|C|\le s$, $|P|\le d$, $|C|+|P|=t$, in increasing $|C|$}
            \If{$\mathsf{PASS}(C,P)=1$}
                \State $(S,\widehat P(S))\gets(C,P)$; stop all search loops
            \EndIf
        \EndFor
    \EndFor
    \State \textbf{if} $S=\varnothing$ \textbf{then return} failure

    \State $\displaystyle
    \widehat E^{\rm sc}\gets\widehat E^{\rm sc}
    \cup
    \left\{
    T\to S:
    T\in\widehat\Pi,\
    T\cap\widehat P(S)\ne\varnothing
    \right\}$
    \State $\widehat\Pi\gets\widehat\Pi\cup\{S\}$;
           $F\gets F\cup S$
\EndWhile
\State \Return $\widehat G ^{\rm sc}=(\widehat\Pi,\widehat E^{\rm sc})$
\end{algorithmic}
\end{algorithm}

\begin{theorem}[Population recovery]
\label{thm:population-recovery}
Suppose the cyclic LiNG model~\eqref{eq:sem} satisfies Assumption~\ref{ass:PI}, and $s\ge s_{\max}$ and $d\ge d_B$.
Algorithm~\ref{alg:population} exactly recovers the condensation $G^{\rm sc}$.
\end{theorem}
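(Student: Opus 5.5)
Prove the theorem by induction on the iterations of the while loop, with the invariant that at the start of each iteration $F$ is an ancestral set of $G$ that is a union of true SCCs. It also requires that $\widehat\Pi$ is exactly the set of SCCs contained in $F$, and that $\widehat E^{\rm sc}$ is exactly the set of condensation edges $T\to S$ with both $T,S\subseteq F$. Initially $F=\varnothing$, so the invariant holds trivially. Since each iteration either fails or strictly enlarges $F$, it suffices to show that an iteration never fails and preserves the invariant. Termination with $F=V$ then yields $\widehat G^{\rm sc}=G^{\rm sc}$.

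Fix an iteration with $F\neq V$ and $R=V\setminus F$. Because $F$ is ancestral and a union of SCCs, the SCCs of $G[R]$ are exactly the SCCs of $G$ contained in $R$. The condensation of $G[R]$ is acyclic, so it has a root SCC $S^*$. We have $|S^*|\le s_{\max}\le s$ and $|P_{\rm ext}(S^*)|\le d_B\le d$. Since $S^*$ has no incoming edge in $G[R]$, $P_{\rm ext}(S^*)\subseteq F$. By Proposition~\ref{prop:sparse-passing}, $(S^*,P_{\rm ext}(S^*))$ is passing and lies in the search range, because $t=|S^*|+|P_{\rm ext}(S^*)|\le\min\{s,|R|\}+\min\{d,|F|\}$. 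Hence the search returns some pair $(S,\widehat P(S))$ and does not fail.

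The key step is to show that the first passing pair found has $S$ equal to a root SCC of $G[R]$ and $\widehat P(S)=P_{\rm ext}(S)$. By Proposition~\ref{prop:sparse-passing}, $S$ is ancestral in $G[R]$ and $P_{\rm ext}(S)\subseteq\widehat P(S)$.
\begin{itemize}
\item An ancestral set in $G[R]$ is a union of SCCs of $G[R]$ closed under ancestors, so it contains some root SCC $S'\subseteq S$.
\item $S'$ is ancestral in $G[R]$, so by Proposition~\ref{prop:sparse-passing} the pair $(S',P_{\rm ext}(S'))$ is passing. It also lies in the search range, since $|S'|\le s$ and $|P_{\rm ext}(S')|\le d$.
\item Every edge into $S'$ from $V\setminus S'$ comes from $F$, hence from outside $S$. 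Therefore $P_{\rm ext}(S')\subseteq P_{\rm ext}(S)\subseteq\widehat P(S)$.
\end{itemize}
It follows that $|S'|+|P_{\rm ext}(S')|\le|S|+|\widehat P(S)|=t$, with equality only if $S'=S$ and $P_{\rm ext}(S')=\widehat P(S)$.

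If the inequality were strict, $(S',P_{\rm ext}(S'))$ would have been examined at an earlier value of $t$ and the search would have stopped there, which is a contradiction. So $S=S'$ is a root SCC of $G[R]$ and $\widehat P(S)=P_{\rm ext}(S)$. The ordering by increasing $|C|$ within a fixed $t$ plays no role, since minimality of $t$ alone forces equality. This minimality argument is the delicate point: one must check that the strict containment of either the block or the adjustment set really decreases $t$, and that the smaller pair lies within the search bounds.

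It remains to update the invariant. $F\cup S$ is ancestral, because $S$ is ancestral in $G[R]$ and $F$ is ancestral in $G$. It is also a union of SCCs, and $\widehat\Pi\cup\{S\}$ lists them. For the edges, a condensation edge $T\to S$ exists iff $B_{S,T}\neq0$, i.e., iff $T\cap P_{\rm ext}(S)\neq\varnothing$. Since $P_{\rm ext}(S)\subseteq F$, every parent SCC $T$ of $S$ is already in $\widehat\Pi$. Hence the added edges are exactly the incoming condensation edges of $S$. No edges from $S$ into earlier SCCs exist, because $F$ is ancestral. Edges out of $S$ into later SCCs will be added when those SCCs are processed. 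This preserves the invariant and completes the induction.
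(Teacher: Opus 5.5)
Your proof is correct and follows the paper's argument. You use the same induction with invariants on $F$, $\widehat\Pi$, and $\widehat E^{\rm sc}$, and the same minimality step: any passer contains a root SCC $S'$ with $P_{\rm ext}(S')\subseteq P_{\rm ext}(S)\subseteq\widehat P(S)$, which forces equality at the first passing level. The only difference is organizational: the paper packages that step as Lemma~\ref{lem:minimum-passer} and then invokes eligibility of root-SCC pairs, while you argue directly against the search order, as the paper itself does in the proof of Proposition~\ref{prop:population-insufficient-bounds}.
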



\paragraph{Search bounds.}
The procedure does not require the exact values of $s_{\max}$ and $d_B$.
At the population level, the procedure completes if and only if $s\ge s_{\max}$ and $d\ge d_B$ (Proposition~\ref{prop:population-insufficient-bounds}).
With valid search bounds and correct decisions, every regression uses at most $s_{\max}+d_B$ predictors, and the procedure examines $p^{O(s_{\max}+d_B)}$ candidate pairs in total; neither bound depends on $|F|$, $s$, or $d$.
In finite samples, estimation error can cause failure even with valid search bounds, but these bounds still limit the total number of candidate pairs examined to $p^{O(s+d)}$.
Appendix~\ref{app:search-bound-sensitivity} compares sample costs under exact and larger valid search bounds, confirming exact recovery under both.

\section{Finite-sample recovery and structural optimality}
\label{sec:finite}

We establish a sample bound for exact condensation recovery by Algorithm~\ref{alg:population} with any valid search bounds.
The sample bound depends on the actual SCC size and external-parent count and matches the structural dependence of the lower bound in Theorem~\ref{thm:matching-lower}.

\subsection{Scores and separation}
\label{sec:scores}

The population characterization allows us to assess $\mathsf{UP}$ through covariance and $\mathsf{OUT}$ through residual dependence.
For random variables $Y,Z$ with positive finite variances, let $\mathcal D(Y,Z)$ be a nonnegative dependence functional evaluated after centering and scaling to unit variance, satisfying $\mathcal D(Y,Z)=0$ if and only if $Y\indep Z$.
Taking empty maxima as zero, define for an ancestral set $F$, with $R=V\setminus F$, and a candidate pair $(C,P)$,
\[
    D_F^{\rm up}(C,P)
    :=\max_{c\in C,\ k\in F}
    \left|\operatorname{Cov}\bigl(e_c(P),X_k\bigr)\right|,\quad
    D_F^{\rm out}(C,P)
    :=\max_{c\in C,\ \ell\in R\setminus C}
    \mathcal D\bigl(e_c(P),r_{P,C}(\ell)\bigr).
\]

Then
\[
    D_F^{\rm up}(C,P)=D_F^{\rm out}(C,P)=0
    \quad\Longleftrightarrow\quad
    \mathsf{PASS}(C,P)=1.
\]
By Proposition~\ref{prop:sparse-passing}, this is equivalent to $C$ being ancestral in $G[R]$ and $P_{\rm ext}(C)\subseteq P$.

We define the candidate class
over all ancestral recovery states and the population separation margins:
\[
    \mathcal A:=
    \left\{
    (F,C,P):
        F\subseteq V\text{ is ancestral in }G,\;
        \varnothing\ne C\subseteq V\setminus F,\;
        P\subseteq F,\; |C|+|P|\le s_{\max}+d_B
    \right\}.
\]
\begin{align}
 &\Delta_{\rm par}=\min_{(F,C,P)\in\mathcal A}
 \{D_F^{\rm up}(C,P):e_C(P)\ne e_C(F)\},\nonumber\\
 &\Delta_{\mathcal D}=\min_{(F,C,P)\in\mathcal A}
 \{D_F^{\rm out}(C,P):e_C(P)=e_C(F),\;
 C\text{ is non-ancestral in }G[V\setminus F]\}.
 \label{eq:dep-margin}
\end{align}
Here, empty minima are $+\infty$, with reciprocal zero.
The margin $\Delta_{\rm par}$ quantifies the upstream effect remaining after an insufficient adjustment, and $\Delta_{\mathcal D}$ quantifies residual dependence when adjustment is sufficient but the block is non-ancestral.

Each insufficient adjustment has a positive $\mathsf{UP}$ score.
For a sufficiently adjusted non-ancestral block, Proposition~\ref{prop:sparse-passing} guarantees at least one dependent scalar pair, so the maximum defining its $\mathsf{OUT}$ score is positive.
Since $\mathcal A$ is finite for a fixed model, both margins are positive.
This pointwise positivity does not imply uniform separation over a model class.
Proposition~\ref{prop:weak-cycle} makes this distinction concrete:
as the coefficients of a two-variable cycle vanish, its distribution approaches that of the edgeless model while its SCC partition remains different.

\subsection{Main recovery guarantee}
\label{sec:main-recovery}

We use the tail and covariance conditions in Assumption~\ref{ass:sg-ev}, as is common in high-dimensional linear SEM analysis \citep{gao2022optimal,ghoshal2018learning}.
Sub-Gaussianity provides the concentration rate used below; consistency can also hold under weaker tail conditions.
The covariance eigenvalue bounds control regression estimation and residual standardization.
Every regression needed for candidates in $\mathcal A$ involves at most $s_{\max}+d_B$ predictors and one response, so the analysis only requires uniform eigenvalue bounds for principal submatrices of size at most $s_{\max}+d_B+1$.
The global condition is a concise sufficient assumption.

We fit the regressions and compute the empirical statistics using all $n$ observations, following the construction in Appendix~\ref{app:finite}.
For each scalar pair in $\mathsf{OUT}$, $\mathcal D$ is evaluated on the population residuals, whereas $\widehat{\mathcal D}$ is computed from the corresponding fitted residuals.
The uniform estimation bound below uses $\ell(t)$ defined in Appendix~\ref{app:finite} for confidence level $1-\delta$, where $t=|C|+|P|$.

\begin{assumption}[Uniform estimation error for dependence scores]
\label{ass:uniform-approximation}
There exists $c_{\mathcal D}>0$ such that, uniformly over models satisfying
Assumption~\ref{ass:sg-ev} with fixed $(K,\lambda)$, for every
$1\le t\le p$, $0<\delta<1$, and $n\ge c_{\mathcal D}\ell(t)$,
with probability at least $1-\delta/3$,
\[
 \sup_{\substack{(F,C,P):\,|C|+|P|=t\\
                  c\in C,\ \ell\in V\setminus(F\cup C)}}
 \left|\widehat{\mathcal D}\bigl(e_c(P),r_{P,C}(\ell)\bigr)
       -\mathcal D\bigl(e_c(P),r_{P,C}(\ell)\bigr)\right|
 \le c_{\mathcal D}\sqrt{\frac{\ell(t)}n},
\]
where the supremum ranges over ancestral $F\subseteq V$,
$\varnothing\ne C\subseteq V\setminus F$, and $P\subseteq F$.
\end{assumption}

This bound accounts jointly for errors in estimating the residuals and their dependence.
The constant $c_{\mathcal D}$ may depend on $(K,\lambda)$ and the chosen estimator, including its fixed tuning parameters, but not on $p,s_{\max},d_B,n,\delta$.
Lemmas~\ref{lem:k4-normalized} and~\ref{lem:hsic-admissible} show that the distance-covariance and Gaussian-HSIC estimators, respectively, satisfy this assumption.

Using this estimation rate, we set $\tau_{\rm up}(t)=\kappa_{\rm up}\sqrt{\ell(t)/n}$ and $\tau_{\rm out}(t)=\kappa_{\rm out}\sqrt{\ell(t)/n}$.
Let $\widehat D_F^{\rm up}$ and $\widehat D_F^{\rm out}$ denote the empirical counterparts of the population scores.
Define $\widehat{\mathsf{PASS}}(C,P)=1$ if
\[
\widehat D_F^{\rm up}(C,P)\le\tau_{\rm up}(t)
\quad\text{and}\quad
\widehat D_F^{\rm out}(C,P)\le\tau_{\rm out}(t),
\]
and zero otherwise.

For $0<\delta<1$, define the complexity term used in the sample bound below:
\[
 L:=s_{\max}\log\frac{ep}{s_{\max}}
   +d_B\log\frac{ep}{d_B}+\log p+\log s_{\max}
   +\log\frac{64}{\delta},
\]
with $d_B\log(ep/d_B)=0$ when $d_B=0$.
The first two terms describe the block and adjustment searches.
Although $\mathcal A$ ranges over ancestral recovery states, each scalar statistic depends only on $C$, $P$, and the tested variables, without further dependence on $F$.
Thus the number of ancestral states contributes no additional counting term.
Lemma~\ref{lem:count} gives $\ell(\min\{p,s_{\max}+d_B\})\asymp L$,
connecting the candidate-dependent thresholds to the structural sample bound below.

\begin{theorem}[Achievability of structural sample complexity]
\label{thm:finite-recovery}
Consider the cyclic LiNG model~\eqref{eq:sem}.
Suppose Assumptions~\ref{ass:PI}, \ref{ass:sg-ev}, and
\ref{ass:uniform-approximation} hold, with $\mathcal D$ as in
Section~\ref{sec:scores}.
Let $\widehat G^{\rm sc}$ be the output of
Algorithm~\ref{alg:population} with $s\ge s_{\max}$,
$d\ge d_B$, and $\widehat{\mathsf{PASS}}$ in place of
$\mathsf{PASS}$.
There exist fixed threshold multipliers
$\kappa_{\rm up},\kappa_{\rm out}>0$ and a constant $c_0>0$,
such that, for every $0<\delta<1$,
\begin{equation}
 n\ge c_0L
 \max\{1,\Delta_{\rm par}^{-2},\Delta_{\mathcal D}^{-2}\}
 \quad\Longrightarrow\quad
 \Pr(\widehat G^{\rm sc}=G^{\rm sc})\ge1-\delta.
 \label{eq:recovery-probability}
\end{equation}
The constants $\kappa_{\rm up},\kappa_{\rm out},c_0$ depend only on $K,\lambda,c_{\mathcal D}$.
\end{theorem}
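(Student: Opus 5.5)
The plan is to reduce the finite-sample statement to the population result, Theorem~\ref{thm:population-recovery}. We show that on a single high-probability event $\mathcal E$, every empirical decision $\widehat{\mathsf{PASS}}(C,P)$ agrees with $\mathsf{PASS}(C,P)$ for every triple $(F,C,P)$ the algorithm could actually query, namely along the trajectory of correct decisions. Then an induction over rounds applies. If $F$ is ancestral and all decisions agree, the algorithm selects the same pair as in the population run: a root SCC of $G[R]$ together with $P_{\rm ext}(S)$ (Proposition~\ref{prop:sparse-passing}). Hence $F\cup S$ is again ancestral, and the output equals $G^{\rm sc}$.

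One point needs care. Along the correct trajectory, the search stops at total size $t\le s_{\max}+d_B$, because a root SCC paired with its external parents passes. So only triples with $|C|+|P|\le s_{\max}+d_B$ are ever examined before stopping, and these lie in $\mathcal A$ (for smaller $t$, possibly with $|C|$ up to $s$ but $t$ still bounded). This is exactly why the margins over $\mathcal A$ suffice and why $s,d$ do not enter.

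\textbf{Step 1: the event $\mathcal E$.} $\mathcal E$ is the intersection of three events, each of probability at least $1-\delta/3$.

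\emph{Covariance event.} For every $t\le s_{\max}+d_B$ and all $(C,P)$ with $|C|+|P|=t$, the empirical covariances and regression coefficients on the at most $t+1$ variables involved are within $O(\sqrt{\ell(t)/n})$ of their population values. This follows from sub-Gaussian concentration of quadratic forms (Assumption~\ref{ass:sg-ev}) combined with a union bound over the roughly $\binom{p}{t}\cdot p$ index sets; $\ell(t)$ is designed to absorb this $\log$ count plus $\log(1/\delta)$. The eigenvalue bound $\lambda$ converts covariance errors into regression and residual errors.

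\emph{UP-statistic event.} From the covariance event, $|\widehat D^{\rm up}_F-D^{\rm up}_F|\le c_1\sqrt{\ell(t)/n}$ uniformly. Since $e_c(P)$ is a linear combination of at most $t+1$ coordinates and the covariance with $X_k$ adds one more index, the union is over $O(p^{t+1})$ sets, not over $F$.

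\emph{OUT-statistic event.} This is Assumption~\ref{ass:uniform-approximation}, applied with the sum of $\delta/3$ pieces over $t$ handled by the $\log p$ and $\log s_{\max}$ terms in $L$ (equivalently, by allocating $\delta$ across $t$ inside $\ell(t)$).

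\textbf{Step 2: choice of thresholds.} Take $\kappa_{\rm up}=2c_1$ and $\kappa_{\rm out}=2c_{\mathcal D}$. On $\mathcal E$ the two error directions behave as follows.
\begin{itemize}
\item \emph{True passers.} A passing pair has population scores zero, so its empirical scores are at most half the threshold and the pair passes.
\item \emph{Insufficient adjustment.} If $e_C(P)\ne e_C(F)$, then $\widehat D^{\rm up}\ge \Delta_{\rm par}-c_1\sqrt{\ell(t)/n}>\tau_{\rm up}(t)$, provided $\sqrt{\ell(t)/n}<\Delta_{\rm par}/(3c_1)$.
\item \emph{Sufficient adjustment, non-ancestral $C$.} Symmetrically, $\widehat D^{\rm out}>\tau_{\rm out}$ once $\sqrt{\ell(t)/n}<\Delta_{\mathcal D}/(3c_{\mathcal D})$.
\end{itemize}
Because $\ell$ is monotone and $t\le s_{\max}+d_B$, Lemma~\ref{lem:count} ($\ell(\min\{p,s_{\max}+d_B\})\asymp L$) turns all these requirements, together with $n\ge c_{\mathcal D}\ell(t)$, into $n\ge c_0L\max\{1,\Delta_{\rm par}^{-2},\Delta_{\mathcal D}^{-2}\}$.

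\textbf{Main obstacle.} The hard part is the residual-level concentration in Step 1 for the UP scores, and its compatibility with Assumption~\ref{ass:uniform-approximation}. The union bound must count only $(C,P,c,k)$ and not the ancestral states $F$. I would handle this by noting that each statistic is a deterministic function of the sample restricted to $P\cup C\cup\{k\}$ (respectively $P\cup C\cup\{\ell\}$), independent of $F$. The count is then $\sum_{t'\le t}\binom{p}{t'}p\le \exp(O(s_{\max}\log(ep/s_{\max})+d_B\log(ep/d_B)+\log p))$, and splitting $t'=|C|+|P|$ with $|C|\le s_{\max}$ and $|P|\le d_B$ produces exactly the two entropy terms of $L$.
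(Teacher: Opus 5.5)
Your proposal is correct and follows essentially the same route as the paper. The paper builds a single event on which $\widehat{\mathsf{PASS}}=\mathsf{PASS}$ over $\mathcal A$, by setting thresholds at fixed multiples of the estimation rate and ensuring margins of at least $3\tau$, and then notes that every candidate examined up to the first population passer lies in $\mathcal A$, so the induction of Theorem~\ref{thm:population-recovery} carries over.

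The only loose point is your counting heuristic. Examined candidates satisfy only $|C|+|P|\le s_{\max}+d_B$, not $|C|\le s_{\max}$ and $|P|\le d_B$ separately. Still, the total-size count you also write yields the same $L$, once combined with the entropy bound $t\log(ep/t)\le s_{\max}\log(ep/s_{\max})+d_B\log(ep/d_B)$ and with the paper's allocation $\delta_t=\delta/[t(t+1)]$ across sizes, which at most doubles $\ell$.
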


The choice of $\mathcal D$ affects $\Delta_{\mathcal D}$ and $c_{\mathcal D}$, but not the structural dependence of the sample bound.

\subsection{Structural optimality}
\label{sec:structural-optimality}

We establish a uniform recovery guarantee over model classes whose separation margins are bounded away from zero.
Fix $\mathcal D$ as in Section~\ref{sec:scores}. For
$\gamma_{\rm par},\gamma_{\mathcal D}>0$, define
\[
 \begin{aligned}
 \mathcal M^{\rm sep}_{p,s,d}(K,\lambda;\gamma_{\rm par},\gamma_{\mathcal D})
 :=\bigl\{M\in\mathcal M_{p,s,d}(K,\lambda):
  \Delta_{\rm par}(M)\ge\gamma_{\rm par},\;
 \Delta_{\mathcal D}(M)\ge\gamma_{\mathcal D}\bigr\}.
 \end{aligned}
\]
Here $(s,d)$ denotes the structural bounds defining the model class.
The margins are evaluated over $\mathcal A$ with $s_{\max}+d_B$ replaced by $s+d$.
In the corollary below, $L$ is taken at $(s,d)$ instead of $(s_{\max},d_B)$.

\begin{corollary}[Uniform recovery and optimality]
\label{cor:uniform-recovery-optimality}
Fix $1\le s\le p/2$ and $0\le d\le p/2$.
Suppose Assumption~\ref{ass:uniform-approximation} holds over
$\mathcal M^{\rm sep}_{p,s,d} (K,\lambda;\gamma_{\rm par},\gamma_{\mathcal D})$.
Let $\widehat G^{\rm sc}$ be the output of Algorithm~\ref{alg:population} with search bounds at least $s$ and $d$, respectively, using $\widehat{\mathsf{PASS}}$ and the threshold multipliers of Theorem~\ref{thm:finite-recovery}.
With $c_0$ as in that theorem, for every $0<\delta<1$,
\begin{equation}
 n\ge c_0L
 \max\{1,\gamma_{\rm par}^{-2},\gamma_{\mathcal D}^{-2}\}
 \quad\Longrightarrow\quad
 \sup_{M\in\mathcal M^{\rm sep}_{p,s,d}
 (K,\lambda;\gamma_{\rm par},\gamma_{\mathcal D})}
 P_M^n\!\left(\widehat G^{\rm sc}\ne G^{\rm sc}(M)\right)
 \le\delta.
 \label{eq:main-rate}
\end{equation}
Under Assumption~\ref{ass:uniform-approximation}, the separated class of Proposition~\ref{prop:fixed-separation-lower} has minimax sample complexity $\Theta(\log p)$ for fixed structural bounds and fixed target error probability $0<\delta<1/2$.
\end{corollary}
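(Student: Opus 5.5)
The corollary has two parts, uniform recovery and minimax optimality; I will prove them separately.

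\textbf{Uniform recovery.} The plan is to apply Theorem~\ref{thm:finite-recovery} to each model in the separated class and check that every model-dependent quantity is dominated by a class-level counterpart. Fix $M\in\mathcal M^{\rm sep}_{p,s,d}(K,\lambda;\gamma_{\rm par},\gamma_{\mathcal D})$, so $s_{\max}(M)\le s$ and $d_B(M)\le d$.

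First I will verify that the search bounds are valid. The algorithm's bounds are at least $s\ge s_{\max}(M)$ and at least $d\ge d_B(M)$, so the hypothesis of Theorem~\ref{thm:finite-recovery} holds.

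Second, I will compare the complexity terms. The maps $x\mapsto x\log(ep/x)$ and $\log x$ are nondecreasing on $[1,p]$, and the first is also nondecreasing on $[0,p]$ with the convention $0$ at $x=0$. Hence $L(M)\le L(s,d)$.

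Third, I will compare the margins. The separation condition is stated over the candidate class $\mathcal A$ with $s+d$ in place of $s_{\max}+d_B$. This class contains the class defined with $s_{\max}(M)+d_B(M)$, so each minimum over the larger set is at most the minimum over the smaller set. This gives $\Delta_{\rm par}(M)\ge\gamma_{\rm par}$ and $\Delta_{\mathcal D}(M)\ge\gamma_{\mathcal D}$.

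Combining the three steps, the hypothesis $n\ge c_0L(s,d)\max\{1,\gamma_{\rm par}^{-2},\gamma_{\mathcal D}^{-2}\}$ implies the per-model condition in \eqref{eq:recovery-probability}. The theorem then bounds the error probability by $\delta$.

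The constants $c_0,\kappa_{\rm up},\kappa_{\rm out}$ depend only on $(K,\lambda,c_{\mathcal D})$. Assumption~\ref{ass:uniform-approximation} holds uniformly over the class by hypothesis, so these constants are the same for every $M$. Taking the supremum over $M$ gives \eqref{eq:main-rate}.

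One subtlety needs checking. The proof of Theorem~\ref{thm:finite-recovery} uses thresholds $\tau(t)$ with $t$ bounded by the actual $s_{\max}+d_B$. With the larger search bounds, candidates with larger $t$ are examined only if no earlier candidate passes. On the good event, the true minimal pair passes before any such candidate is reached, so these candidates do not matter.

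\textbf{Minimax optimality.} I will prove matching upper and lower bounds for the separated class of Proposition~\ref{prop:fixed-separation-lower}.

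For the upper bound, fix $s,d$ and the margins. Then $L(s,d)=O(\log p+\log(1/\delta))$, which is $O(\log p)$ for fixed $\delta$. The first part therefore gives sample complexity $O(\log p)$.

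For the lower bound, I will invoke Proposition~\ref{prop:fixed-separation-lower}. I expect it to supply a Fano-type bound of order $\log p$ over a packing inside the separated class with fixed margins, and it is presumably stated for $\delta<1/2$. This gives $\Omega(\log p)$.

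\textbf{Main obstacle.} The delicate step is ensuring the separated class is nonempty and contains the lower-bound packing. This requires the packing models to satisfy both margin constraints with constants independent of $p$, and I will take that from Proposition~\ref{prop:fixed-separation-lower} as given. Aside from that, I must confirm that the monotonicity of $L$ and the margin comparison survive the edge cases $d=0$ and $s=1$.
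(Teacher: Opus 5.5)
Your proposal is correct and follows the paper's proof essentially step for step. For the first part, you apply Theorem~\ref{thm:finite-recovery} model by model, using $L\le L_{s,d}$ and bounding the margins by taking minima over the larger candidate class. For optimality, you apply the uniform bound at the class constants $(K_0,\lambda_0,\tau_0)$ of Proposition~\ref{prop:fixed-separation-lower} and combine it with that proposition's Fano lower bound.

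One small precision. That proposition is stated for all $0<\delta<1$ (with $s_0\ge2$, $d_0\ge1$), and it only yields worst-case error $\ge\delta$. To get error strictly above the target $\delta$, the paper applies it with $(1+\delta)/2$ in place of $\delta$, which just changes the threshold to $(1-\delta)/(2c_{\rm lb,0})$ times the structural term.
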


For fixed $\delta,K,\lambda,\gamma_{\rm par},\gamma_{\mathcal D}$, the sample bound in~\eqref{eq:main-rate} matches the structural dependence $s\log(ep/s)+d\log(ep/d)$ of the lower bound in Theorem~\ref{thm:matching-lower}.
For fixed structural bounds, we further establish minimax sample complexity
$\Theta(\log p)$ on the separated model class of Proposition~\ref{prop:fixed-separation-lower}.
Larger valid search bounds also preserve this optimal rate: uniform accuracy over the candidate class ensures that each round selects a passing pair with $|C|+|P|\le s_{\max}+d_B$.

For DAGs, $s=1$ and $d$ bounds the maximum indegree, so the guarantee applies to exact DAG recovery with structural dependence $\log(ep)+d\log(ep/d)\asymp d\log(ep/d)$ for $d\ge1$.
This agrees with the optimal dimension and sparsity dependence for LiNGAMs in \citet{oh2025optimal}.

\section{Simulations}
\label{sec:experiments}

We empirically examine sample cost and compare sample efficiency across methods using sub-Gaussian linear SEMs.
We consider disjoint 2-cycles for $s_{\max}=2$ and overlapping 2-cycles sharing a node for $s_{\max}=3,4$.
Each point reports the exact-recovery rate across 20 replicates.
Implementation details and additional experiments are provided in Appendix~\ref{app:empirical}.

\subsection{Structural sample costs}

Figure~\ref{fig:main-evidence} provides empirical support for the structural and separation dependence of our sample bound.
Recovery curves that differ on the raw sample-size axis align more closely after normalization by the corresponding structural and separation factors.
The alignment is stronger in panel~(b), which varies dimension
at fixed SCC size, than in panel~(a), which varies SCC size
at fixed dimension.
Appendix~\ref{app:margin-audit} details this normalization.

\begin{figure}[t]
\centering
\includegraphics[width=\linewidth]{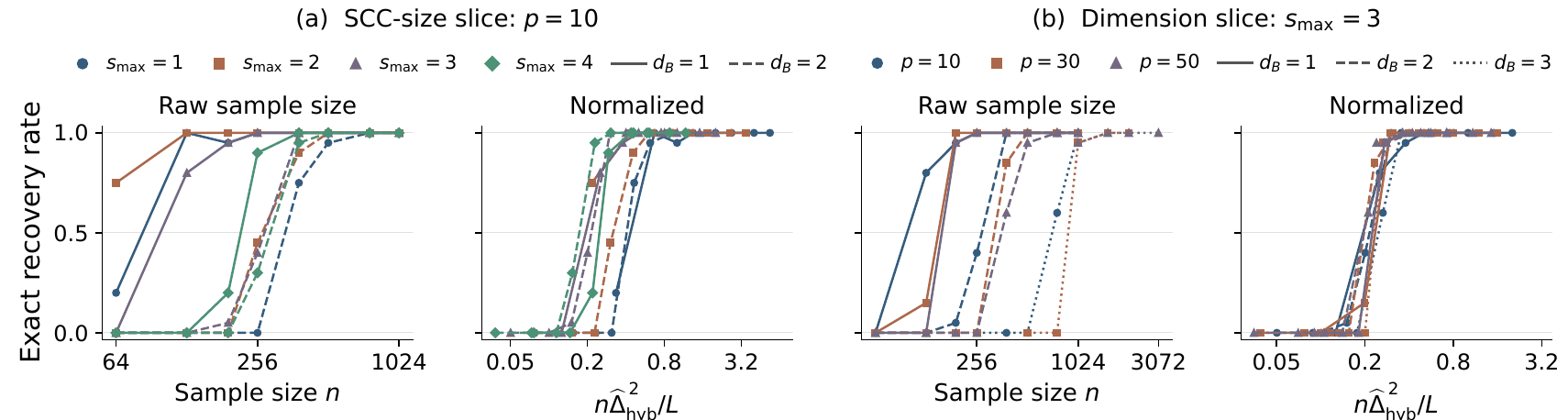}
\caption{Structural sample costs for exact condensation recovery with
exact search bounds.}
\label{fig:main-evidence}
\end{figure}

\subsection{Sample efficiency comparison across methods}

BlockExo achieves sample-efficient exact condensation recovery across disjoint and overlapping cycle structures (Figure~\ref{fig:fullsample-baseline-m4-preview}).
At $p=50$, we compare it with Coarsening \citep{madaleno2026coarsening} at fixed thresholds $0.1$ and $0.2$, DisjointCycles \citep{drton2025disjoint}, and StableSpIn \citep{misiakos2026stablespin}.
BlockExo uses common search bounds $(s,d)=(3,2)$ across all four settings.
DisjointCycles and StableSpIn achieve no exact recoveries over the evaluated grid.
GroupLiNGAM \citep{kawahara2010grouplingam} is infeasible under our
memory budget (Appendix~\ref{app:fullsample-protocol}).

BlockExo achieves sample-efficient exact recovery even without the exact values of $s_{\max}$ and $d_B$.
With $d_B=1$, it achieves exact recovery in all 20 replicates at $n=384$ for disjoint cycles and $n=512$ for overlapping cycles, compared with 12 and 13 of 20 for Coarsening $0.2$, respectively.
With $d_B=2$, BlockExo reaches 20 of 20 at $n=512$ for disjoint cycles, compared with 14 of 20 for Coarsening $0.2$, and at $n=768$ for overlapping cycles, matching Coarsening $0.2$.
BlockExo's median runtimes range from 8 to 34 seconds across the four settings: longer than Coarsening, shorter than or comparable to StableSpIn, and comparable to DisjointCycles on disjoint cycles (Appendix~\ref{app:runtime}).


\begin{figure}[t]
\centering
\includegraphics[width=\linewidth]{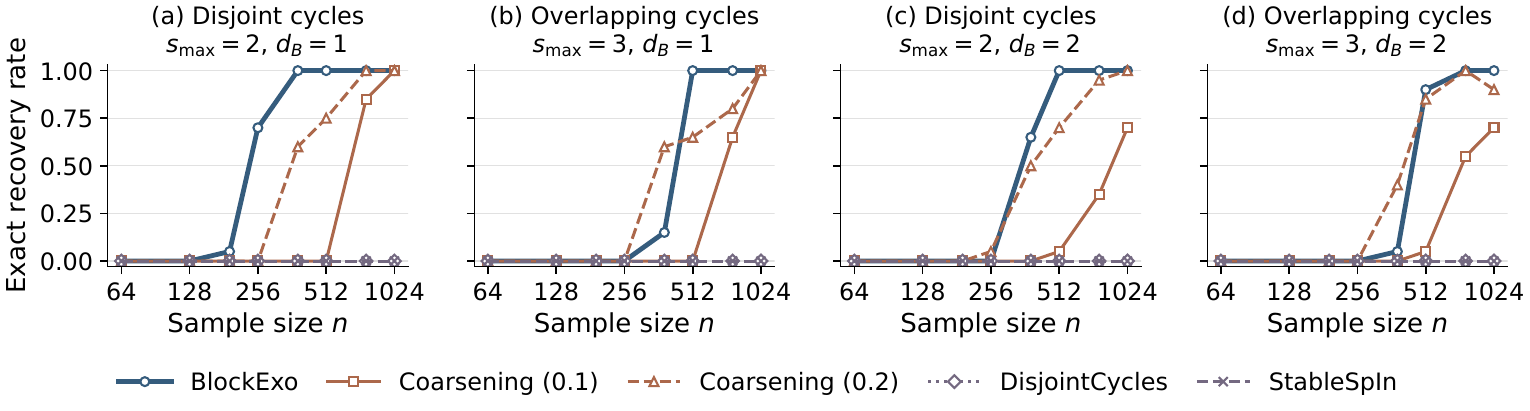}
\caption{Exact condensation recovery comparison at $p=50$.}
\label{fig:fullsample-baseline-m4-preview}
\end{figure}

\section{Discussion}
\label{sec:discussion}

We establish information-theoretic lower bounds for exact condensation recovery and a procedure whose sample bound matches their structural dependence.
The lower bounds distinguish the unavoidable sample costs of SCC membership and external-parent selection.
BlockExo achieves this matching under suitable conditions, even when $s_{\max}$ and $d_B$ are unknown.
These necessary and achievable structural sample costs provide a concrete target for the design and analysis of methods for exact condensation recovery.

Block exogeneity provides a foundation for developing new algorithms for cyclic causal discovery.
Scalar residual independence and inclusion minimality identify root SCCs without a known partition or correlation faithfulness.
Sparse adjustment further permits identification of SCCs and their direct external parents without regressing on all previously recovered variables.
These characterizations open a path to new algorithms that are both sample-efficient and computationally efficient.

Characterizing the separation margins $\Delta_{\rm par}$ and
$\Delta_{\mathcal D}$ in terms of graph structure, coefficients, and noise distributions is an important next step in explaining sample cost.
Our sample bound distinguishes the structural costs from the additional difficulty captured by the separation margins.
Relating these margins to model properties would help explain differences in condensation recovery difficulty and identify conditions that guarantee uniform separation.

\subsection*{AI use statement}
We used generative AI tools to assist with drafting proofs, designing
simulation experiments, developing code for synthetic-data generation and method implementation, interpreting empirical results, preparing figures, and revising the manuscript.
The authors supplied key prior work, including references needed for the proofs, and guidance on the intended mathematical results and assumptions. The authors reviewed and revised the AI-assisted proofs, refining their mathematical arguments, exposition, and logical organization. The authors take responsibility for the final content of this work.

\subsection*{Ethics statement}
This work develops theoretical results for causal discovery and evaluates them using synthetically generated data.
It does not involve human participants or personal data.
Applications of the proposed recovery principles should assess the stated modeling assumptions, including causal sufficiency, before using the inferred structure to inform substantive decisions.

\subsection*{Reproducibility statement}
Sections~\ref{sec:setup}--\ref{sec:finite} specify the model assumptions and recovery procedures. Appendices~\ref{app:lower}--\ref{app:separation} provide proofs of the theoretical results, including the separation and optimality analysis. Appendix~\ref{app:empirical} describes the synthetic graph constructions, noise distributions, estimator and baseline settings, evaluation protocol, and reference calculations used for normalization. We will publicly release the implementation, experiment configurations, and scripts and data for reproducing the figures.

\bibliography{references}
\bibliographystyle{iclr2027_conference}

\appendix
\section{Related work: recovery targets and statistical guarantees}
\label{sec:related}

Table~\ref{tab:positioning} summarizes related LiNG methods.

\begin{table}[H]
\centering
\normalsize
\caption{Recovery targets, population principles, and statistical guarantees of related LiNG methods.}
\label{tab:positioning}
\vskip\baselineskip
\begin{tabularx}{\linewidth}{@{}l>{\hsize=.79\hsize\linewidth=\hsize\raggedright\arraybackslash}X>{\hsize=.96\hsize\linewidth=\hsize\raggedright\arraybackslash}X>{\hsize=1.25\hsize\linewidth=\hsize\raggedright\arraybackslash}X@{}}
\toprule
Method & Recovery target & Population basis & Statistical guarantees \\
\midrule
OptLiNGAM & sparse DAG & residual independence; parent selection & Structurally matching bounds: $d_{\rm in}\log(ep/d_{\rm in})$ \\
\addlinespace
GroupLiNGAM & block partition and order & exogenous-set residual independence & Population characterization \\
\addlinespace
DisjointCycles & cycle-disjoint structure & moment relations for root cycles & Structure recovery consistency \\
\addlinespace
StableSpIn & stable graph representative & stability; sparse-input conditions & MLE consistency \\
\addlinespace
Coarsening & condensation & ICA; SCC invariance & Sample bound: $p/(\beta_{\min}^{2}\sqrt{\delta})$ \\
\midrule
\textbf{BlockExo} & condensation & block exogeneity; sparse adjustment & Structurally matching bounds: $s\log(ep/s)+d\log(ep/d)$ \\
\bottomrule\end{tabularx}

\par\smallskip
{\justifying
The displayed rates omit confidence and distributional factors
for OptLiNGAM and BlockExo, and other model-dependent constants
for Coarsening.
The Coarsening bound uses threshold $\beta_{\min}/2$, where
$\beta_{\min}:=\min_{B_{ij}\ne0}|B_{ij}|$
\citep{madaleno2026coarsening}.\par}
\end{table}

\paragraph{Acyclic sample complexity.}
Our structural sample bound matches OptLiNGAM's dimension and sparsity
dependence for DAGs and adds the cost of identifying SCC membership
in cyclic graphs.
OptLiNGAM establishes optimal sample complexity for sparse DAG recovery
under sub-Gaussianity, without faithfulness or a known indegree
\citep{oh2025optimal}.
For DAGs, $s_{\max}=1$ and $d_B=d_{\rm in}$, the maximum indegree,
so our structural factor reduces to order
$d_{\rm in}\log(ep/d_{\rm in})$ for $d_{\rm in}\ge1$.

\paragraph{Population characterizations of exogenous blocks.}
GroupLiNGAM characterizes exogenous sets through block residual
independence, whereas our block-exogeneity principle establishes
ancestry through scalar residual independence.
GroupLiNGAM tests independence between a candidate variable subset
and the residual obtained by regressing the remaining variables on it;
recursive splitting identifies an ordered partition
\citep{kawahara2010grouplingam}.
Our characterization requires only independence of every scalar pair
between $e_C(F)$ and $r_{F,C}(R\setminus C)$ after ancestral adjustment.
Inclusion minimality then identifies root SCCs without a known partition.
Sparse adjustment further permits ancestry testing and direct-parent
identification using small adjustment sets
(Proposition~\ref{prop:sparse-passing}).

GroupLiNGAM assumes correlation faithfulness and permits dependent
external influences within a group, allowing within-group latent
confounding. Our characterization uses principal invertibility and
independent structural errors, without correlation faithfulness.
Related work estimates a causal order through residual independence
when the multivariate groups are specified in advance
\citep{entner2012groups}.

\paragraph{Recovery with disjoint cycles.}
DisjointCycles recovers structure when directed cycles share no nodes,
whereas our block-exogeneity characterization also applies to SCCs
with overlapping cycles.
For the disjoint-cycle setting, \citet{drton2025disjoint} characterize
distributional equivalence and use quadratic and cubic moment relations
to locate root cycles. Decorrelation and multivariate regression then
yield a block-topological order and a consistent structure-learning
procedure. Our approach identifies root SCCs through residual
independence and inclusion minimality.

\paragraph{Exact condensation recovery.}
Coarsening provides ICA-based condensation recovery, while our results
establish information-theoretic lower bounds for this target and attain
a structurally matching sample bound.
Early ICA-based work enumerates cyclic LiNG models compatible with an
observed distribution \citep{lacerda2008discovering}.
\citet[Theorem~4.4 and Corollary~4.5]{sharifian2025near} establish that
the SCC partition and condensation graph are invariant across
distribution-equivalent graphs, establishing identifiability of the
condensation from observational data.
Coarsening recovers the condensation without enumerating the
variable-level equivalence class; its finite-sample guarantee follows
from control of unmixing-matrix and edge-support estimation
\citep{madaleno2026coarsening}.
Our lower bounds distinguish the unavoidable costs of SCC membership
and external-parent selection (Theorem~\ref{thm:matching-lower}), and
our recovery guarantee matches their structural dependence under
regularity and separation conditions
(Corollary~\ref{cor:uniform-recovery-optimality}).

StableSpIn targets a stable variable-level graph representative under
sparse-input conditions and establishes MLE consistency
\citep{misiakos2026stablespin}; our target is the SCC partition and
all edges between components.
Distributional equivalence has also been characterized for LiNG models
with latent variables and cycles \citep{dai2026glvling}, whereas we
study condensation sample complexity under causal sufficiency.

\paragraph{Computational complexity.}
The exact GroupLiNGAM procedure examines exponentially many subsets in $p$
\citep{kawahara2010grouplingam}.
For OptLiNGAM, the distance-covariance implementation has cost
$O(n\log n\,p^{d_{\rm in}+1}d_{\rm in}^{4}(p-d_{\rm in}))$
when all independence decisions are correct \citep[Lemma~18]{oh2025optimal}.
With valid search bounds and correct decisions, BlockExo examines
$p^{O(s_{\max}+d_B)}$ candidate pairs and uses at most
$s_{\max}+d_B$ predictors in each regression, independently of the
input search bounds. Its empirical worst-case candidate count is
$p^{O(s+d)}$, with regressions and scalar tests for each pair.
Coarsening has cost $O(knp^2+p^3)$, where $k$ is the number of FastICA
iterations \citep[Theorem~2]{madaleno2026coarsening}.
StableSpIn's optimization costs $O(np^2+p^3)$ per iteration for a fixed
power-iteration count \citep{misiakos2026stablespin}.

\section{Structural lower-bound proofs}
\label{app:lower}

For a minimax lower bound, it suffices to establish the claim on a suitable
subclass of the model class. Throughout this section, we therefore restrict
attention to models whose structural noises are i.i.d.\ from the following
fixed distribution. Let $\phi_\sigma$ denote the $N(0,\sigma^2)$ density and
set
\[
 f(x)=\frac12\phi_{1/\sqrt2}\left(x-\frac1{\sqrt2}\right)
     +\frac12\phi_{1/\sqrt2}\left(x+\frac1{\sqrt2}\right).
\]
If $\eta\sim f$, then $\eta=Z+R/\sqrt2$ in distribution, where
$Z\sim N(0,1/2)$, $R$ is independent of $Z$, and
$\Pp(R=1)=\Pp(R=-1)=1/2$. Hence
$\E\eta=0$, $\Var(\eta)=1$,
$\E e^{t\eta}=e^{t^2/4}\cosh(t/\sqrt2)\le e^{t^2/2}$, and
$\E\eta^4=5/2\ne3$. Thus $f$ is centered, variance one, non-Gaussian,
and sub-Gaussian.
Writing $g=\log f$, we have
$g(x)=\mathrm{const}-x^2+\log\cosh(\sqrt2x)$ and
$g''(x)=-2\tanh^2(\sqrt2x)$, so $\|g''\|_\infty=2$.

For each coefficient matrix $B$, let $P_B$ denote the law of
\[
 X=(I-B)^{-1}\eps,
\]
and write $G^{\rm sc}(B)$ for the condensation of the graph determined by $B$.
We first establish common KL and regularity bounds, then construct separate packings for SCC membership and external-parent selection.
The generalized-Fano argument follows the sparse-DAG lower-bound framework of \citet{gao2022optimal} and \citet{oh2025optimal}.

\subsection{Common KL and regularity bounds}

\begin{lemma}[Local KL control]
\label{lem:kl-control}
For every constant $\rho_0\in(0,1)$ independent of dimension $p$ and every
zero-diagonal matrix $B$ satisfying $\|B\|_{\rm op}\le\rho_0$,
\[
 \operatorname{KL}(P_B\|P_0)
 \le
 \left\{
   \frac{1}{1-\rho_0}
   +\frac{1}{(1-\rho_0)^2}
 \right\}
 \|B\|_F^2.
\]
\end{lemma}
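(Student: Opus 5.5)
The plan is a change-of-variables computation followed by a second-order Taylor expansion of $g=\log f$. I expand around the noise vector rather than around a parametric perturbation, so that the zero diagonal of $B$ and the score identities of $f$ cancel the first-order terms.

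\textbf{Step 1: reduce to an expectation under $P_B$.} Since $\|B\|_{\rm op}\le\rho_0<1$, the matrix $I-B$ is invertible and $A=(I-B)^{-1}=\sum_{k\ge0}B^k$. Also $\det(I-B)>0$, because the eigenvalues of $I-B$ lie in the disk $\{z:|z-1|\le\rho_0\}$ and complex ones come in conjugate pairs. The density of $P_B$ is
\[
 p_B(x)=\det(I-B)\prod_i f\bigl(((I-B)x)_i\bigr),
\]
and $P_0$ is the product law $\prod_i f(x_i)$. Evaluating at $X=A\eps\sim P_B$, where $(I-B)X=\eps$, gives
\[
 \operatorname{KL}(P_B\|P_0)=\log\det(I-B)+\sum_i\E\bigl[g(\eps_i)-g(X_i)\bigr].
\]

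\textbf{Step 2: expand $g(X_i)$ around $\eps_i$.} Write $X-\eps=(A-I)\eps=BA\eps$. Taylor's theorem with $\|g''\|_\infty=2$ gives
\[
 g(\eps_i)-g(X_i)\le -g'(\eps_i)(X_i-\eps_i)+(X_i-\eps_i)^2 .
\]
For the linear term, use independence of the noise coordinates together with two identities. First, $\E g'(\eps_j)=\int f'=0$. Second, integrating by parts, $\E[g'(\eps_i)\eps_i]=\int xf'(x)\,dx=-1$. Only the diagonal entries of $A-I$ survive, so
\[
 -\sum_i\E\bigl[g'(\eps_i)(X_i-\eps_i)\bigr]=\operatorname{tr}(A-I)=\sum_{k\ge2}\operatorname{tr}(B^k),
\]
where the sum starts at $k=2$ because $\operatorname{tr}B=0$. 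For the quadratic term, since $\Cov(\eps)=I$,
\[
 \E\|(A-I)\eps\|^2=\|BA\|_F^2\le\|B\|_F^2\|A\|_{\rm op}^2\le\frac{\|B\|_F^2}{(1-\rho_0)^2}.
\]

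\textbf{Step 3: combine with the log-determinant.} Expand $\log\det(I-B)=-\sum_{k\ge2}\operatorname{tr}(B^k)/k$, again using $\operatorname{tr}B=0$. Adding this to the linear term from Step 2 gives $\sum_{k\ge2}(1-1/k)\operatorname{tr}(B^k)$. Each trace satisfies
\[
 |\operatorname{tr}(B^k)|\le\|B\|_F\,\|B^{k-1}\|_F\le\|B\|_F^2\rho_0^{k-2}.
\]
Summing the geometric series bounds the combined first-order contribution by $\|B\|_F^2/(1-\rho_0)$. Together with the quadratic bound from Step 2, this yields the stated constant $\frac{1}{1-\rho_0}+\frac{1}{(1-\rho_0)^2}$.

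\textbf{Main obstacle.} The delicate point is the first-order cancellation. Bounded separately, the log-determinant term and the linear Taylor term are each only $O(\|B\|_F)$ in general. They become $O(\|B\|_F^2)$ only because $\operatorname{tr}B=0$ and because of the exact identity $\E[g'(\eps)\eps]=-1$. I would verify the integration by parts using the Gaussian-type tails of $f$, and I would justify exchanging the series with the trace and the expectation using $\rho_0<1$.
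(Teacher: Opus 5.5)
Your proposal is correct and follows the paper's proof essentially step for step. It uses the same change of variables, the same Taylor bound with $\|g''\|_\infty=2$, the score identities $\E g'(\eps_i)=0$ and $\E[g'(\eps_i)\eps_i]=-1$ yielding $\operatorname{tr}(A-I)$, and the same trace bound $|\operatorname{tr}(B^k)|\le\|B\|_F^2\rho_0^{k-2}$. The only differences are cosmetic: an eigenvalue argument for $\det(I-B)>0$ instead of the paper's continuity argument, and $\|BA\|_F\le\|B\|_F\|A\|_{\rm op}$ instead of summing $\|B^k\|_F$.

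One small correction to your closing remark: $\operatorname{tr}B=0$ is not what drives the cancellation. In $\log\det(I-B)+\operatorname{tr}(A-I)=\sum_{k\ge1}(1-1/k)\operatorname{tr}(B^k)$, the $k=1$ term vanishes for every $B$, which is exactly how the paper writes the combination.
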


\begin{proof}
The change of variables $\eps=(I-B)x$ gives
$p_B(x)=|\det(I-B)|\prod_{i=1}^p f(((I-B)x)_i)$ and
$p_0(x)=\prod_{i=1}^p f(x_i)$.
For every $t\in[0,1]$, the bound $\|tB\|_{\rm op}<1$ implies
$\det(I-tB)\ne0$. Continuity of $t\mapsto\det(I-tB)$ and
$\det(I)=1$ therefore give $\det(I-B)>0$.

Let $\Gamma=(I-B)^{-1}-I$. Under $P_B$,
$X=\eps+\Gamma\eps$, and hence
\[
 \log\frac{p_B(X)}{p_0(X)}
 =\log\det(I-B)
   +\sum_{i=1}^p
   \{g(\eps_i)-g(\eps_i+(\Gamma\eps)_i)\}.
\]
Taylor's theorem and $-g''(x)=2\tanh^2(\sqrt2x)\le2$ give
\[
 g(\eps_i)-g(\eps_i+(\Gamma\eps)_i)
 \le -g'(\eps_i)(\Gamma\eps)_i+(\Gamma\eps)_i^2.
\]
Since $f(x),xf(x)\to0$ as $|x|\to\infty$, integration by parts gives
$\E[g'(\eps_i)]=0$ and $\E[g'(\eps_i)\eps_i]=-1$.
Together with independence, centering, and $\Cov(\eps)=I$, this yields
\begin{align*}
 \E[g'(\eps_i)\eps_j]
 =-\mathbf 1\{i=j\}, \quad
 -\sum_{i=1}^p\E[g'(\eps_i)(\Gamma\eps)_i]
 =\operatorname{tr}(\Gamma), \quad
 \E\|\Gamma\eps\|_2^2
 =\operatorname{tr}(\Gamma\Gamma^\top)
 =\|\Gamma\|_F^2.
\end{align*}
Taking expectations in the Taylor bound and applying these identities gives
\begin{equation}
 \operatorname{KL}(P_B\|P_0)
 \le \log\det(I-B)+\operatorname{tr}(\Gamma)
 +\|\Gamma\|_F^2.
 \label{eq:local-kl-intermediate}
\end{equation}

Since $\|B\|_{\rm op}<1$, the log-determinant and Neumann series give
\[
\log\det(I-B)
=-\sum_{k\ge1}\frac{\operatorname{tr}(B^k)}{k},
\qquad
\Gamma=\sum_{k\ge1}B^k.
\]
For $k\ge 1$, we have 
\begin{align*}
&\|B^k\|_F
\le \|B\|_F\|B\|_{\rm op}^{k-1}
\le \|B\|_F\rho_0^{k-1},\\
&|\operatorname{tr}(B^k)|
=|\langle B^\top,B^{k-1}\rangle_F|
\le\|B\|_F\|B^{k-1}\|_F.
\end{align*}
These bounds give
\begin{align*}
&\left|\log\det(I-B)+\operatorname{tr}(\Gamma)\right|
=\left|\sum_{k\ge2}\left(1-\frac1k\right)
  \operatorname{tr}(B^k)\right|
\le \|B\|_F^2\sum_{k\ge2}\rho_0^{k-2}
=\frac{\|B\|_F^2}{1-\rho_0},\\
&\|\Gamma\|_F
\le\sum_{k\ge1}\|B^k\|_F
\le\|B\|_F\sum_{k\ge1}\rho_0^{k-1}
=\frac{\|B\|_F}{1-\rho_0}.
\end{align*}
Substituting these bounds into \eqref{eq:local-kl-intermediate} yields
\begin{align*}
 \operatorname{KL}(P_B\|P_0)
 \le \frac{\|B\|_F^2}{1-\rho_0}
 +\frac{\|B\|_F^2}{(1-\rho_0)^2}
 =\left\{
   \frac1{1-\rho_0}
   +\frac1{(1-\rho_0)^2}
   \right\}\|B\|_F^2.
\end{align*}
\end{proof}

\begin{lemma}[Uniform packing regularity]
\label{lem:packing-regularity}
Under the fixed noise distribution above, suppose
$\|B\|_{\rm op}\le\rho_0<1$. Then
\[
 (1+\rho_0)^{-2} \le \lambda_{\min}(\Sigma)\le
 \lambda_{\max}(\Sigma)\le(1-\rho_0)^{-2},
\]
where
\[
 \Sigma=(I-B)^{-1}(I-B)^{-\top}.
\]
Moreover, there exists a universal constant $K>0$ such that, for every
deterministic $a\in\mathbb R^p$,
\[
 \|a^\top X\|_{\psiTwo}
 \le K\sqrt{a^\top\Sigma a}.
\]

\end{lemma}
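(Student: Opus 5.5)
The plan is to treat the two parts separately: the eigenvalue bounds follow from singular values of $I-B$, and the sub-Gaussian bound follows from writing $a^\top X$ as a weighted sum of independent noises.

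\textbf{Eigenvalue bounds.} Since $\Cov(\eps)=I$ under the fixed noise law, $\Sigma=(I-B)^{-1}(I-B)^{-\top}$. Its eigenvalues are the inverse squared singular values of $I-B$. For any unit vector $u$, the triangle inequality gives
\[
 1-\rho_0\le\|(I-B)u\|_2\le1+\rho_0 ,
\]
because $\|B\|_{\rm op}\le\rho_0<1$. Hence every singular value of $I-B$ lies in $[1-\rho_0,1+\rho_0]$; in particular $I-B$ is invertible. It follows that
\[
 \lambda_{\min}(\Sigma)\ge(1+\rho_0)^{-2},\qquad \lambda_{\max}(\Sigma)\le(1-\rho_0)^{-2},
\]
which is exactly the claimed pair of bounds.

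\textbf{Sub-Gaussian bound.} Write $a^\top X=w^\top\eps$ with $w=(I-B)^{-\top}a$. Then $a^\top\Sigma a=\|w\|_2^2$, so it suffices to show $\|w^\top\eps\|_{\psi_2}\le K\|w\|_2$. I will use the moment generating function computed above: each coordinate satisfies $\E e^{t\eps_i}\le e^{t^2/2}$. By independence,
\[
 \E e^{t\,w^\top\eps}=\prod_i \E e^{tw_i\eps_i}\le e^{t^2\|w\|_2^2/2}.
\]
The standard equivalence between an MGF bound of this form and the $\psi_2$ norm then gives $\|w^\top\eps\|_{\psi_2}\le C\|w\|_2$ for a universal $C$; one can take $C=\sqrt{8/3}$, e.g.\ via $\E e^{U^2/s^2}\le(1-2\sigma^2/s^2)^{-1/2}$ for a $\sigma^2$-sub-Gaussian $U$. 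Setting $K=C$ finishes the proof. The constant is universal: it depends neither on $\rho_0$ nor on $p$, because the noise law is fixed and the identity $a^\top\Sigma a=\|w\|_2^2$ absorbs $B$ entirely.

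\textbf{Main obstacle.} The only step needing care is the passage from the MGF bound to the $\psi_2$ bound with an explicit universal constant. I would handle it with the Gaussian-integration trick: introduce $g\sim N(0,1)$ independent of $\eps$ and compute $\E\exp(\theta g\,w^\top\eps)$ to bound $\E\exp(U^2/s^2)$. Everything else is direct.
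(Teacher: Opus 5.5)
Your proof is correct and follows the paper's route: singular values of $I-B$ give the eigenvalue bounds, and writing $a^\top X=w^\top\eps$ with $w=(I-B)^{-\top}a$ plus independence of the noises gives the sub-Gaussian bound. The only difference is minor. You multiply the coordinate bounds $\E e^{t\eps_i}\le e^{t^2/2}$ directly and convert to a $\psi_2$ bound once, obtaining the explicit $K=\sqrt{8/3}$. The paper instead bounds each $\|\eps_i\|_{\psi_2}$ and invokes the general Hoeffding-type inequality for $\psi_2$ norms of independent sums, which leaves the universal constant unspecified.
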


\begin{proof}
For every $x\in\mathbb R^p$,
\[
 (1-\rho_0)\|x\|_2
 \le\|(I-B)x\|_2
 \le(1+\rho_0)\|x\|_2.
\]
Thus the singular values of $(I-B)^{-1}$ lie in
$[(1+\rho_0)^{-1},(1-\rho_0)^{-1}]$. Hence the eigenvalues of $\Sigma$
lie in the squared interval
$[(1+\rho_0)^{-2},(1-\rho_0)^{-2}]$.

For any deterministic $a\in\mathbb R^p$, let $c=(I-B)^{-\top}a$.
The moment-generating-function bound above implies
$\|\eps_i\|_{\psiTwo}\le C_0$ for a universal constant $C_0$.
The standard inequality for sums of independent centered sub-Gaussian
variables therefore gives
\[
 \|a^\top X\|_{\psiTwo}
 =\|c^\top\eps\|_{\psiTwo}
 \le C\Bigl(\sum_{i=1}^p c_i^2\|\eps_i\|_{\psiTwo}^2\Bigr)^{1/2}
 \le CC_0\|c\|_2
 =CC_0\sqrt{a^\top\Sigma a}.
\]
Taking $K:=CC_0$ proves the claim.
\end{proof}

For the remainder of this section, fix universal constants
$c_{\rm KL},K>0$ and $\lambda\ge1$ such that every $B$ with
$\|B\|_{\rm op}\le1/2$ satisfies
\[
 \operatorname{KL}(P_B\|P_0)\le c_{\rm KL}\|B\|_F^2,
 \qquad
 \lambda^{-1}\le\lambda_{\min}(\Sigma)
 \le\lambda_{\max}(\Sigma)\le\lambda,
\]
as well as the sub-Gaussian bound in
Lemma~\ref{lem:packing-regularity}. Such constants exist by the two lemmas
above.

\subsection{SCC and external-parent packings}

\begin{lemma}[Reciprocal-star SCC packing]
\label{lem:star-packing}
For every $2\le s\le p/2$ and every
$S\subseteq\{2,\ldots,p\}$ with $|S|=s-1$, define $B_S$ by
\[
 (B_S)_{1j}=(B_S)_{j1}
 =\frac{1}{2\sqrt{s-1}},
 \qquad j\in S,
\]
with all other entries equal to zero. Then:

\begin{enumerate}
 \item[(i)] The model defined by $B_S$ and $f$ belongs to
 $\mathcal M_{p,s,d}(K,\lambda)$ for every $d\ge0$. Its only nontrivial
 SCC is $\{1\}\cup S$, and $d_B=0$.

 \item[(ii)] Writing $N_s:=\binom{p-1}{s-1}$,
 the packing contains $N_s\ge3$ distinct condensations and
 \[
  \log N_s
  \ge \frac{\log2}{2(1+\log2)}s\log\frac{ep}{s}.
 \]

 \item[(iii)] The observation laws satisfy
 \[
  \max_S\operatorname{KL}(P_{B_S}^n\|P_0^n)
  \le\frac{c_{\rm KL}n}{2}.
 \]
\end{enumerate}
\end{lemma}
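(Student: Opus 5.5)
The plan is to compute $\|B_S\|_{\rm op}$ and $\|B_S\|_F$ explicitly, then obtain (i) from Lemma~\ref{lem:packing-regularity} and (iii) from Lemma~\ref{lem:kl-control}. Part (ii) is a binomial-coefficient estimate.

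\textbf{Norms of $B_S$.} Write $u=e_1$ and $v=\frac{1}{\sqrt{s-1}}\mathbf 1_S$. Both are unit vectors, and they are orthogonal because $1\notin S$. Then
\[
B_S=\tfrac12(uv^\top+vu^\top),
\]
whose nonzero eigenvalues are $\pm\frac12$ with eigenvectors $(u\pm v)/\sqrt2$. Hence $\|B_S\|_{\rm op}=1/2$. The matrix has $2(s-1)$ nonzero entries, each equal to $1/(2\sqrt{s-1})$, so $\|B_S\|_F^2=2(s-1)\cdot\frac{1}{4(s-1)}=\tfrac12$.

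\textbf{Part (i).} Since $\|B_S\|_{\rm op}=1/2<1$, every principal submatrix $(B_S)_{T,T}$ has operator norm at most $1/2$. Therefore $I-(B_S)_{T,T}$ is invertible, and Assumption~\ref{ass:PI} holds. Assumption~\ref{ass:sg-ev} with the fixed universal $(K,\lambda)$ follows from Lemma~\ref{lem:packing-regularity} with $\rho_0=1/2$. Node $1$ and every $j\in S$ are joined by edges $1\to j$ and $j\to1$, so $\{1\}\cup S$ is strongly connected. Every other node has no incident edges and forms a singleton SCC. Thus $s_{\max}=s$, the condensation has no edges, and $d_B=0\le d$.

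\textbf{Part (ii).} Distinct sets $S$ give distinct nontrivial SCCs, so there are $N_s=\binom{p-1}{s-1}$ distinct condensations. Since $p\ge4$ and $2\le s\le p/2$, we have $N_s\ge p-1\ge3$. For the log bound, I will use $\binom{m}{k}\ge(m/k)^k$ with $m=p-1$ and $k=s-1$:
\[
\log N_s\ge (s-1)\log\frac{p-1}{s-1}.
\]
Two comparisons remain.
\begin{itemize}
\item First, $s-1\ge s/2$.
\item Second, $\frac{p-1}{s-1}\ge\frac{p}{s}\ge2$, because $p\ge s$ and $s\le p/2$. This gives $\log\frac{p-1}{s-1}\ge\log\frac ps$.
\end{itemize}
Next, the function $x\mapsto \log x/(1+\log x)$ is increasing, so for $x=p/s\ge2$,
\[
\log\frac ps\ge\frac{\log2}{1+\log2}\Bigl(1+\log\frac ps\Bigr)=\frac{\log2}{1+\log2}\log\frac{ep}{s}.
\]
Combining the three inequalities gives
\[
\log N_s\ge\frac{s}{2}\cdot\frac{\log 2}{1+\log 2}\log\frac{ep}{s},
\]
which is the claimed constant. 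This is the most delicate step, but only because of constant bookkeeping.

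\textbf{Part (iii).} Tensorization of KL gives $\operatorname{KL}(P_{B_S}^n\|P_0^n)=n\operatorname{KL}(P_{B_S}\|P_0)$. By the standing constant $c_{\rm KL}$, which applies since $\|B_S\|_{\rm op}\le1/2$, this is at most $n\,c_{\rm KL}\|B_S\|_F^2=c_{\rm KL}n/2$ uniformly in $S$.
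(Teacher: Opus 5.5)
Your proposal is correct and follows the paper's proof essentially step for step. The paper uses the same ingredients in the same order: the rank-two symmetric structure giving $\|B_S\|_{\rm op}=1/2$, principal-submatrix norms for Assumption~\ref{ass:PI}, Lemma~\ref{lem:packing-regularity} at $\rho_0=1/2$, the chain $(s-1)\log\frac{p-1}{s-1}\ge\frac{s}{2}\log\frac{p}{s}\ge\frac{\log2}{2(1+\log2)}s\log\frac{ep}{s}$, and $\|B_S\|_F^2=1/2$ with Lemma~\ref{lem:kl-control}. Your explicit check that $N_s\ge\binom{p-1}{1}=p-1\ge3$ fills in a detail the paper leaves implicit.
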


\begin{proof}
We first verify statement~(i).
The matrix $B_S$ represents a weighted reciprocal star with center $1$ and leaves $S$, with all other nodes isolated.
Let $e_j$ denote the standard coordinate vectors and set $u_S=(s-1)^{-1/2}\sum_{j\in S}e_j$.
Then
\[
B_Se_1=\tfrac12u_S,\qquad
B_Su_S=\tfrac12e_1,\qquad
B_Sv=0\quad\text{for }\;v\perp\operatorname{span}\{e_1,u_S\}.
\]
Thus the symmetric matrix $B_S$ has nonzero eigenvalues
$1/2$ and $-1/2$, giving $\|B_S\|_{\rm op}=1/2$.
Every principal submatrix of $B_S$ also has operator norm at most
$1/2$, so every principal submatrix of $I-B_S$ is invertible.
Thus Assumption~\ref{ass:PI} holds.

The nodes in $\{1\}\cup S$ are mutually reachable through node~$1$,
whereas all other nodes are isolated. Hence $\{1\}\cup S$ is the only
nontrivial SCC and no SCC has an external parent. Since
$\|B_S\|_{\rm op}=1/2$, Lemma~\ref{lem:packing-regularity} with
$\rho_0=1/2$ and the constants fixed above verifies
Assumption~\ref{ass:sg-ev}.
Therefore the model belongs to $\mathcal M_{p,s,d}(K,\lambda)$ for every $d\ge0$.

For statement~(ii), distinct sets $S$ produce distinct nontrivial SCCs and
therefore distinct condensations. For $1\le k\le N$,
\begin{equation}
 \binom Nk
 =\prod_{j=0}^{k-1}\frac{N-j}{k-j}
 \ge\left(\frac Nk\right)^k,
 \label{eq:binomial-lower-bound}
\end{equation}
because $(N-j)/(k-j)\ge N/k$ is equivalent to $j(N-k)\ge0$.
Applying \eqref{eq:binomial-lower-bound} with $(N,k)=(p-1,s-1)$ gives
\[
 \log N_s
 \ge(s-1)\log\frac{p-1}{s-1}
 \ge\frac{s}{2}\log\frac ps
 \ge\frac{\log2}{2(1+\log2)}s\log\frac{ep}{s}.
\]
The second inequality uses $s-1\ge s/2$ and
$(p-1)/(s-1)\ge p/s$. The third follows from $p/s\ge2$ and
$\log(p/s)/\log(ep/s)\ge\log2/(1+\log2)$, proving statement~(ii).

For statement~(iii), $\|B_S\|_F^2=1/2$, so
Lemma~\ref{lem:kl-control} and tensorization give
\[
 \operatorname{KL}(P_{B_S}^n\|P_0^n)
 \le\frac{c_{\rm KL}n}{2}.
\]
This bound holds for every $S$, completing the proof.
\end{proof}

\begin{lemma}[Parent-star external-parent packing]
\label{lem:parent-packing}
For every $p\ge4$, $1\le d\le p/2$, and
$Q\subseteq\{2,\ldots,p\}$ with $|Q|=d$, define $B_Q$ by
\[
 (B_Q)_{1q}=\frac{1}{2\sqrt d},
 \qquad q\in Q,
\]
with all other entries equal to zero. Then:

\begin{enumerate}
 \item[(i)] The model defined by $B_Q$ and $f$ belongs to
 $\mathcal M_{p,s,d}(K,\lambda)$ for every $s\ge1$. It is acyclic,
 satisfies $s_{\max}=1$, and has $d_B=d$.

 \item[(ii)] Writing $N_d:=\binom{p-1}{d}$,
 the packing contains $N_d\ge3$ distinct condensations and
 \[
  \log N_d
  \ge \frac{\log2}{2(1+\log2)}d\log\frac{ep}{d}.
 \]

 \item[(iii)] The observation laws satisfy
 \[
  \max_Q\operatorname{KL}(P_{B_Q}^n\|P_0^n)
  \le\frac{c_{\rm KL}n}{4}.
 \]
\end{enumerate}
\end{lemma}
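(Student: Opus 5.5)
The proof follows the reciprocal-star packing of Lemma~\ref{lem:star-packing}, and the three statements are handled in order.

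For statement~(i), observe that $B_Q$ has nonzero entries only in row~$1$, in the columns $q\in Q$, and that $1\notin Q$. Then $B_Q^2=0$: every nonzero entry of $B_Q$ lies in row~$1$, while column~$1$ of $B_Q$ is zero. Hence the graph has only the edges $q\to1$ for $q\in Q$, so it is acyclic and every SCC is a singleton, giving $s_{\max}=1$. Node~$1$ has external parent set exactly $Q$, every other node has no parents, and therefore $d_B=d$. Acyclicity makes Assumption~\ref{ass:PI} automatic, since every principal submatrix of $I-B_Q$ is unit triangular after a suitable permutation. Alternatively, every principal submatrix of $B_Q$ has operator norm at most $\|B_Q\|_{\rm op}$.

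For the operator norm, write $B_Q=e_1w^\top$ with $w=(2\sqrt d)^{-1}\sum_{q\in Q}e_q$. This is a rank-one matrix, so $\|B_Q\|_{\rm op}=\|e_1\|_2\|w\|_2=\sqrt{d}/(2\sqrt d)=1/2$. Lemma~\ref{lem:packing-regularity} with $\rho_0=1/2$, together with the fixed constants $(K,\lambda)$, then verifies Assumption~\ref{ass:sg-ev}. Thus the model lies in $\mathcal M_{p,s,d}(K,\lambda)$ for every $s\ge1$, because $s_{\max}=1\le s$.

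For statement~(ii), distinct sets $Q$ give distinct edge sets into the singleton SCC $\{1\}$. Since the condensation of a DAG is the graph itself, the condensations are distinct. We have $N_d=\binom{p-1}{d}\ge p-1\ge3$, because $1\le d\le p-2$ (using $d\le p/2$ and $p\ge4$). For the counting bound, apply \eqref{eq:binomial-lower-bound} with $(N,k)=(p-1,d)$ to get $\log N_d\ge d\log\frac{p-1}{d}$. It remains to compare this with $\frac{\log2}{2(1+\log2)}\,d\log\frac{ep}{d}$, and this comparison is the only mildly delicate step. From $d\le p/2$ we get $(p-1)/d\ge (p-1)\cdot 2/p\ge 3/2$ when $p\ge4$, which is weaker than the factor~$2$ available in the SCC case, so we proceed as follows. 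Since $p-1\ge \tfrac{3}{4}p$, we have $\log\frac{p-1}{d}\ge\log\frac pd-\log\frac43$. Then, using $p/d\ge2$, we aim to show $\log\frac pd-\log\frac43\ge\frac12\log\frac pd$, which holds because $\frac12\log\frac pd\ge\frac12\log2\ge\log\frac43$, as $\sqrt2\ge 4/3$. This gives $\log N_d\ge\frac d2\log\frac pd$, and the same final step as in Lemma~\ref{lem:star-packing}, namely $\log(p/d)/\log(ep/d)\ge\log2/(1+\log2)$ for $p/d\ge2$, yields the stated constant.

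For statement~(iii), $\|B_Q\|_F^2=d\cdot\frac{1}{4d}=\frac14$ and $\|B_Q\|_{\rm op}=1/2$. Therefore the fixed bound $\operatorname{KL}(P_{B_Q}\|P_0)\le c_{\rm KL}\|B_Q\|_F^2$, derived from Lemma~\ref{lem:kl-control}, gives a per-sample KL of at most $c_{\rm KL}/4$. Tensorization over the $n$ i.i.d.\ observations gives $\operatorname{KL}(P_{B_Q}^n\|P_0^n)\le c_{\rm KL}n/4$ uniformly in $Q$.
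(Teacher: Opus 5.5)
Your proposal is correct and follows the paper's proof essentially step for step: acyclicity for Assumption~\ref{ass:PI}, $\|B_Q\|_{\rm op}=1/2$ for Lemma~\ref{lem:packing-regularity}, the binomial bound for the count, and $\|B_Q\|_F^2=1/4$ with tensorization for the KL bound. The only difference is cosmetic and lies in (ii): you pass through $\log N_d\ge\frac d2\log\frac pd$ and reuse the ratio bound from Lemma~\ref{lem:star-packing}, whereas the paper writes $\log\frac{p-1}{d}\ge\log\frac32+\log\frac{p}{2d}$ and compares term by term; both chains are valid.
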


\begin{proof}
We first verify statement~(i). The only directed edges are $q\to1$ for
$q\in Q$, so the graph is acyclic
with $s_{\max}=1$ and $d_B=d$.
Acyclicity also implies Assumption~\ref{ass:PI}: after a topological
ordering, every principal submatrix of $I-B_Q$ is triangular with unit
diagonal.
Since $B_Q$ has only one nonzero row,
\[
 \|B_Q\|_{\rm op}^2=\|B_Q\|_F^2
 =d\left(\frac1{2\sqrt d}\right)^2=\frac14.
\]
Lemma~\ref{lem:packing-regularity} at $\rho_0=1/2$ and the constants fixed
above therefore imply
that the model defined by $B_Q$ and $f$ belongs to
$\mathcal M_{p,s,d}(K,\lambda)$ for every $s\ge1$.

For statement~(ii), distinct sets $Q$ give different parent sets of
node~$1$, so the
packing contains exactly $N_d=\binom{p-1}{d}$ distinct condensations.
Since $1\le d\le p-2$, $N_d\ge p-1\ge3$.
Applying \eqref{eq:binomial-lower-bound} with $(N,k)=(p-1,d)$ yields
\begin{align*}
 \log N_d
 \ge d\log\frac{p-1}{d}
 &\ge d\left\{\log(3/2)+\log\frac{p}{2d}\right\}\\
 &\ge \frac{\log2}{2(1+\log2)}d
       \left\{1+\log2+\log\frac{p}{2d}\right\}
 =\frac{\log2}{2(1+\log2)}d\log\frac{ep}{d}.
\end{align*}
The second inequality uses $p-1\ge3p/4$ for $p\ge4$.
The third uses $\frac12\log2<\log(3/2)$,
$\log2/[2(1+\log2)]<1$, and
$\log(p/(2d))\ge0$.

For statement~(iii), Lemma~\ref{lem:kl-control} at $\rho_0=1/2$ and
tensorization give
\[
 \operatorname{KL}(P_{B_Q}^n\|P_0^n)
 =n\operatorname{KL}(P_{B_Q}\|P_0)
 \le c_{\rm KL}n\|B_Q\|_F^2
 =\frac{c_{\rm KL}n}{4}.
\]
This bound holds for every $Q$, completing the proof.
\end{proof}

\subsection{Proof of Theorem~\ref{thm:matching-lower}}

\begin{proof}[Proof of Theorem~\ref{thm:matching-lower}]
The parent family lies in $\mathcal M_{p,s,d}(K,\lambda)$ for every $s\ge1$,
and the SCC family does so for $s\ge2$, with the universal constants fixed
above. Write
$h_s=s\log(ep/s)$ and $h_d=d\log(ep/d)$.
Lemmas~\ref{lem:star-packing} and~\ref{lem:parent-packing} give the following
bounds, with the SCC bound applying only for $s\ge2$:
\begin{align}
 \log N_s
 &\ge \frac{\log2}{2(1+\log2)}h_s,
 &
 \max_S\operatorname{KL}(P_{B_S}^n\|P_0^n)
 &\le\frac{c_{\rm KL}n}{2},
 \label{eq:scc-packing-summary}\\
 \log N_d
 &\ge \frac{\log2}{2(1+\log2)}h_d,
 &
 \max_Q\operatorname{KL}(P_{B_Q}^n\|P_0^n)
 &\le\frac{c_{\rm KL}n}{4}.
 \label{eq:parent-packing-summary}
\end{align}

Set
\[
 c_{\rm lb}
 :=
 \frac{2(1+\log2)c_{\rm KL}}{\log2}
 +4(1+\log2).
\]
Since the models within each packing have distinct condensations, generalized
Fano's inequality \citep{yu1997assouad,tsybakov2009introduction}, applied
separately to \eqref{eq:scc-packing-summary} and
\eqref{eq:parent-packing-summary}, gives
\begin{align*}
 \max_S P_{B_S}^n\left(\widehat G^{\rm sc}\ne G^{\rm sc}(B_S)\right)
 &\ge
 1-
 \frac{(1+\log2)c_{\rm KL}n}{\log2\,h_s}
 -
 \frac{2(1+\log2)}{h_s}
 \ge 1-\frac{c_{\rm lb}n}{2h_s},
 \qquad s\ge2,
 \\
 \max_Q P_{B_Q}^n\left(\widehat G^{\rm sc}\ne G^{\rm sc}(B_Q)\right)
 &\ge
 1-
 \frac{(1+\log2)c_{\rm KL}n}{2\log2\,h_d}
 -
 \frac{2(1+\log2)}{h_d}
 \ge 1-\frac{c_{\rm lb}n}{2h_d}.
\end{align*}
The last inequalities use $n\ge1$. The first bound is obtained on models
with $d_B=0$, and the second is obtained on acyclic models; hence they
also establish the two separate necessity claims in Section~\ref{sec:lower}.

It remains to combine these bounds over the full model class. If $s\ge2$,
use the packing corresponding to $\max\{h_s,h_d\}$. Since both packings are
subsets of $\mathcal M_{p,s,d}(K,\lambda)$,
\[
 \sup_{M\in\mathcal M_{p,s,d}(K,\lambda)}
 P_M^n\left(\widehat G^{\rm sc}\ne G^{\rm sc}(M)\right)
 \ge 1-\frac{c_{\rm lb}n}{2\max\{h_s,h_d\}}
 \ge 1-\frac{c_{\rm lb}n}{h_s+h_d}.
\]
If $s=1$, the SCC packing is unavailable, but the function
$x\mapsto x\log(ep/x)$ is increasing on $[1,p/2]$, and therefore
$h_s=\log(ep)\le h_d$. The parent-packing bound then gives
\[
 \sup_{M\in\mathcal M_{p,1,d}(K,\lambda)}
 P_M^n\left(\widehat G^{\rm sc}\ne G^{\rm sc}(M)\right)
 \ge 1-\frac{c_{\rm lb}n}{2h_d}
 \ge 1-\frac{c_{\rm lb}n}{h_s+h_d}.
\]
Thus the same bound holds for every $1\le s\le p/2$. Under the sample-size
condition in \eqref{eq:lower-sample-size}, its right-hand side is at least
$\delta$.
\end{proof}

\section{Population proof details}
\label{app:population-proofs}

Throughout this section, we consider the cyclic LiNG model
\eqref{eq:sem} on $V=[p]$ under Assumption~\ref{ass:PI}, using the notation
of Section~\ref{sec:setup}. Thus $B$ is the weighted adjacency matrix,
$A=(I-B)^{-1}$, and the coordinates of $\eps$ are mutually independent,
with at most one Gaussian coordinate.

\subsection{Block exogeneity and root SCCs}

The following lemma isolates the argument used to rule out incoming edges.

\begin{lemma}[Residual independence forces ancestry]
\label{lem:residual-ancestry}
Let $R\subseteq V$ be partitioned as $R=C\mathbin{\cup}D$, with
$C\ne\varnothing$, and write $A^{(R)}=(I-B_{R,R})^{-1}$. For matrices $Q$ and $L$ of compatible dimensions, let
\[
 Y=A^{(R)}_{C,R}\eps_R,
 \qquad
 Z=(A^{(R)}_{D,R}-QA^{(R)}_{C,R})\eps_R+L\eps_{V\setminus R}.
\]
If $\Cov(Y,Z)=0$ and $Y_c\indep Z_\ell$ for every $c\in C$ and
$\ell\in D$, then $C$ is ancestral in $G[R]$.
\end{lemma}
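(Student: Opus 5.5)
The plan is to reduce everything to the coefficient matrices of $\eps_R$ and then use a rank count. Write $M:=A^{(R)}_{C,R}$ and $N:=A^{(R)}_{D,R}-QA^{(R)}_{C,R}$, so that $Y=M\eps_R$ and $Z=N\eps_R+L\eps_{V\setminus R}$. The noises $\eps_{V\setminus R}$ appear only in $Z$, so they play no role in the argument below. The goal is to show that $B_{C,D}=0$, which is exactly ancestry of $C$ in $G[R]$.

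\textbf{Step 1: Column-wise disjoint supports.} Fix $c\in C$ and $\ell\in D$. Both $Y_c$ and $Z_\ell$ are linear combinations of the mutually independent coordinates of $\eps$. By the Darmois--Skitovitch theorem, independence $Y_c\indep Z_\ell$ forces $M_{cj}N_{\ell j}=0$ for every non-Gaussian $\eps_j$ with $j\in R$.

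At most one coordinate $j_0\in R$ is Gaussian. The covariance condition reads
\[
\sum_{j\in R}M_{cj}N_{\ell j}\Var(\eps_j)=0 .
\]
All terms with $j\ne j_0$ already vanish, and $\Var(\eps_{j_0})>0$. Hence $M_{cj_0}N_{\ell j_0}=0$ as well.

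Since $c$ and $\ell$ were arbitrary, for each $j\in R$ either the column $M_{\cdot j}$ is zero or the column $N_{\cdot j}$ is zero. Let $J:=\{j\in R:M_{\cdot j}\ne0\}$. Then $N_{\cdot j}=0$ for $j\in J$, and $M_{\cdot j}=0$ for $j\notin J$.

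\textbf{Step 2: Rank count.} Stack the two coefficient matrices:
\[
\begin{pmatrix}M\\ N\end{pmatrix}
=\begin{pmatrix}I&0\\-Q&I\end{pmatrix}A^{(R)} .
\]
This matrix is invertible, because $A^{(R)}$ exists by Assumption~\ref{ass:PI} and the left factor is unit triangular.

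By Step 1, its columns indexed by $J$ are supported on the $C$-rows. Its columns indexed by $R\setminus J$ are supported on the $D$-rows. Full rank therefore forces $|J|\le|C|$ and $|R\setminus J|\le|D|$. Since $|C|+|D|=|R|$, both inequalities are equalities, so $|J|=|C|$ and the block $M_{\cdot,J}=A^{(R)}_{C,J}$ is an invertible $|C|\times|C|$ matrix. Moreover $A^{(R)}_{C,R\setminus J}=0$.

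\textbf{Step 3: Identify $J=C$.} This is where principal invertibility enters a second time, and I expect it to be the main obstacle. Suppose some $j\in C\setminus J$ exists. Then $A^{(R)}_{cj}=0$ for all $c\in C$, and in particular $A^{(R)}_{jj}=0$.

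By the cofactor formula,
\[
A^{(R)}_{jj}=\frac{\det\bigl(I-B_{R\setminus\{j\},R\setminus\{j\}}\bigr)}{\det\bigl(I-B_{R,R}\bigr)} .
\]
The numerator is nonzero by Assumption~\ref{ass:PI}, which is a contradiction. Hence $C\subseteq J$, and since $|J|=|C|$ we get $J=C$. Consequently $A^{(R)}_{C,D}=0$ and $A^{(R)}_{C,C}$ is invertible.

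\textbf{Step 4: Conclude ancestry.} Read off the $(C,D)$ block of the identity $A^{(R)}(I-B_{R,R})=I$:
\[
A^{(R)}_{C,C}(-B_{C,D})+A^{(R)}_{C,D}(I-B_{D,D})=0 .
\]
The second term vanishes because $A^{(R)}_{C,D}=0$. This leaves $A^{(R)}_{C,C}B_{C,D}=0$, and invertibility of $A^{(R)}_{C,C}$ gives $B_{C,D}=0$. Thus no node of $D=R\setminus C$ is a parent of a node of $C$, so $C$ is ancestral in $G[R]$.
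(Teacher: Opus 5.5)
Your proof is correct. Step~1 is the same as the paper's (Darmois--Skitovitch on each scalar pair, then the covariance identity to remove the single possible Gaussian summand), but the rank step is organized differently.

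The paper first uses Jacobi's complementary-minor identity to get both $A^{(R)}_{C,C}$ and $A^{(R)}_{D,D}$ nonsingular. It then argues by contradiction. If $B_{C,D}\ne0$, the $(C,D)$ block of $(I-B_{R,R})A^{(R)}=I$, together with nonsingularity of $A^{(R)}_{D,D}$, produces a column $k\in D$ with $A^{(R)}_{C,k}\ne0$. The $|C|+1$ columns of $A^{(R)}$ indexed by $C\cup\{k\}$ then all factor through $\left[\begin{smallmatrix}I\\Q\end{smallmatrix}\right]$. They therefore have rank at most $|C|$, contradicting invertibility.

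You instead count globally. The columns of the invertible matrix $\left[\begin{smallmatrix}I&0\\-Q&I\end{smallmatrix}\right]A^{(R)}$ split into $C$-supported and $D$-supported ones, which forces the split to have sizes $|C|$ and $|D|$. The nonzero diagonal of $A^{(R)}$ then identifies the split as exactly $C$ versus $D$. The $(C,D)$ block of $A^{(R)}(I-B_{R,R})=I$ finishes the argument directly.

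Multiplying on that side is what spares you any need for $A^{(R)}_{D,D}$ (equivalently $I-B_{C,C}$) to be invertible. Invertibility of $A^{(R)}_{C,C}$ also comes out of your count rather than from Jacobi's identity.

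What each route buys:
\begin{itemize}
\item Your route uses principal invertibility only through $I-B_{R,R}$ and the submatrices $I-B_{R\setminus\{j\},R\setminus\{j\}}$ for $j\in C$. It also yields the block-triangular structure $A^{(R)}_{C,D}=0$ as a byproduct.
\item The paper's route is shorter and more local, needing only one offending column. The cost is invoking the complementary-minor identity for both blocks.
\end{itemize}
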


\begin{proof}
The claim is immediate when $D=\varnothing$, so assume $D\ne\varnothing$.
Assumption~\ref{ass:PI} and Jacobi's complementary-minor identity
\citep[Section~0.8.5]{horn2013matrix} imply
that the principal submatrices $A^{(R)}_{C,C}$ and $A^{(R)}_{D,D}$ are
nonsingular: indeed,
$\det(A^{(R)}_{C,C})=\det(I-B_{D,D})/\det(I-B_{R,R})$ and
$\det(A^{(R)}_{D,D})=\det(I-B_{C,C})/\det(I-B_{R,R})$ are nonzero.

Since $Y$ depends only on $\eps_R$, independence of the sources and
$\Cov(Y,Z)=0$ give
\[
 \sum_{k\in R}\Var(\eps_k)A^{(R)}_{C,k}
 \bigl(A^{(R)}_{D,k}-QA^{(R)}_{C,k}\bigr)^\top=0.
\]
The Darmois--Skitovitch theorem
\citep{darmois1953analyse,skitovitch1953property,kagan1973characterization},
applied to each pair $(Y_c,Z_\ell)$, gives
\[
 A^{(R)}_{C,k}
 \bigl(A^{(R)}_{D,k}-QA^{(R)}_{C,k}\bigr)^\top=0
\]
for every non-Gaussian source $\eps_k$, $k\in R$.
There is at most one Gaussian source. Hence either every summand already
vanishes, or the covariance identity forces the sole remaining summand to
vanish.
Therefore, for every $k\in R$,
\begin{equation}
 A^{(R)}_{C,k}\ne0
 \quad\Longrightarrow\quad
 A^{(R)}_{D,k}-QA^{(R)}_{C,k}=0.
 \label{eq:no-shared-source}
\end{equation}

Nonsingularity of $A^{(R)}_{C,C}$ and~\eqref{eq:no-shared-source} imply
\[
 A^{(R)}_{D,C}-QA^{(R)}_{C,C}=0.
\]
If $C$ were non-ancestral, then $B_{C,D}\ne0$. The $(C,D)$ block of
$I=(I-B_{R,R})A^{(R)}$ is zero, which gives
\[
 0=(I-B_{C,C})A^{(R)}_{C,D}-B_{C,D}A^{(R)}_{D,D}.
\]
Since $A^{(R)}_{D,D}$ is nonsingular, $A^{(R)}_{C,D}\ne0$. Choose
$k\in D$ with $A^{(R)}_{C,k}\ne0$. Equation~\eqref{eq:no-shared-source}
then yields
\[
 A^{(R)}_{R,C\cup\{k\}}
 =\begin{bmatrix}I_{|C|}\\Q\end{bmatrix}
 A^{(R)}_{C,C\cup\{k\}}.
\]
The matrix $A^{(R)}_{C,C\cup\{k\}}$ has rank $|C|$ because it contains
the nonsingular submatrix $A^{(R)}_{C,C}$, and
$\left[\begin{smallmatrix}I_{|C|}\\Q\end{smallmatrix}\right]$ has full
column rank. Hence the right-hand side has rank $|C|$. In contrast, the
left-hand side consists of $|C|+1$ columns of the invertible matrix
$A^{(R)}$ and therefore has rank $|C|+1$, a contradiction. Thus
$B_{C,D}=0$, so $C$ is ancestral in $G[R]$.
\end{proof}

\begin{proof}[Proof of Theorem~\ref{thm:block-exogeneity}]
Statement $(ii)$ implies statement $(iii)$ by marginalization. We prove the
other two implications. Write $D=R\setminus C$. Since $F$ is ancestral,
\eqref{eq:induced-residual-model} gives
\begin{equation}
 e_C(F)=A_{C,R}\eps_R,
 \qquad
 e_D(F)=A_{D,R}\eps_R,
 \qquad
 A_{R,R}=(I-B_{R,R})^{-1}.
 \label{eq:common-block-residuals}
\end{equation}
Let $\Omega_R=\Cov(\eps_R)$, which is diagonal and positive definite, and
define
\begin{equation}
 Q_C
 :=\Cov(e_D(F),e_C(F))\Cov(e_C(F))^{-1}
 =A_{D,R}\Omega_RA_{C,R}^{\top}
   \bigl(A_{C,R}\Omega_RA_{C,R}^{\top}\bigr)^{-1}.
 \label{eq:general-QC}
\end{equation}
The inverse exists because $A_{C,C}$ is nonsingular and
$\Omega_R\succ0$. By definition,
\begin{equation}
 r_{F,C}(D)=(A_{D,R}-Q_CA_{C,R})\eps_R.
 \label{eq:common-complement-residual}
\end{equation}

For $(i)\Rightarrow(ii)$, suppose that $C$ is ancestral in $G[R]$. Then
$B_{C,D}=0$ and
\[
 A_{R,R}=
 \begin{bmatrix}A_{C,C}&0\\A_{D,C}&A_{D,D}\end{bmatrix}.
\]
Consequently,
\[
 e_C(F)=A_{C,C}\eps_C,
 \qquad
 Q_C=A_{D,C}A_{C,C}^{-1},
 \qquad
 r_{F,C}(D)=A_{D,D}\eps_D.
\]
$e_C(F)$ and $r_{F,C}(D)$ depend on disjoint independent source vectors, proving
statement $(ii)$.

For $(iii)\Rightarrow(i)$, suppose that statement $(iii)$ holds. Since
$r_{F,C}(D)$ is the projection residual,
$\Cov(e_C(F),r_{F,C}(D))=0$. The representations
in~\eqref{eq:common-block-residuals} and
\eqref{eq:common-complement-residual} satisfy
Lemma~\ref{lem:residual-ancestry} with $Q=Q_C$ in \eqref{eq:general-QC} and no additional source
term. The lemma therefore shows that $C$ is ancestral in $G[R]$, proving
statement $(i)$.
\end{proof}

The graph-theoretic part of the characterization will also be used below.

\begin{lemma}[Ancestral sets and root SCCs]
\label{lem:ancestral-root-scc}
In any finite directed graph, the root SCCs are exactly the
inclusion-minimal nonempty ancestral sets. Moreover, every nonempty
ancestral set contains a root SCC.
\end{lemma}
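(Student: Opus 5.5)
The plan is to argue purely graph-theoretically, working with the condensation DAG of the finite directed graph. The key reformulation is that a set $C$ is ancestral if and only if it is a union of SCCs that is closed upward in the condensation. Indeed, if $C$ contains a node $v$ and is ancestral, it contains every ancestor of $v$. Every node of the SCC of $v$ is an ancestor of $v$, so $C$ contains that whole SCC. It also contains every SCC having an edge into a member of $C$.

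With this in hand, I would take the steps in the following order.

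First, I show that every root SCC $S$ is ancestral. If some $u\notin S$ were a parent of a node in $S$, then the SCC containing $u$ would have an edge into $S$ in the condensation, contradicting the assumption that $S$ is a root.

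Second, I show that every nonempty ancestral set $C$ contains a root SCC. Choose $v\in C$. Because the condensation is a finite DAG, the SCC of $v$ has an ancestor SCC $S$ with no incoming edges, that is, a root SCC. Any node of $S$ is an ancestor of $v$, so by the reformulation above $S\subseteq C$. This already proves the ``moreover'' claim.

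Third, I prove minimality, which gives both directions of the main claim.
\begin{enumerate}
\item A root SCC $S$ is ancestral by the first step. Suppose $\varnothing\ne C'\subsetneq S$ were ancestral. Then $C'$ would be a union of SCCs contained in $S$, hence equal to $S$ itself, a contradiction. An equivalent direct argument: pick $v\in C'$ and $w\in S\setminus C'$; since $w$ is an ancestor of $v$, ancestrality forces $w\in C'$. So $S$ is inclusion-minimal.
\item Conversely, let $C$ be an inclusion-minimal nonempty ancestral set. By the second step it contains a root SCC $S$, and $S$ is itself ancestral and nonempty. Minimality then forces $C=S$.
\end{enumerate}

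I expect no serious obstacle here. The only point needing care is the finiteness and acyclicity of the condensation, which guarantees that backward chains of SCCs terminate at a root. I would justify this by noting that a directed cycle among distinct SCCs would merge them into one, contradicting the maximality of SCCs.
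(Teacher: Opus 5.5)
Your proposal is correct and follows essentially the same argument as the paper. Both show that a root SCC is ancestral and, by strong connectivity, has no nonempty proper ancestral subset; that every nonempty ancestral set contains a root SCC; and that minimality then forces equality. The one cosmetic difference is how the root SCC is found: you trace back from the SCC of a chosen $v\in C$ to a root of the whole condensation and use ancestrality to place it inside $C$, whereas the paper takes a source of the condensation restricted to the SCCs inside $U$ and uses ancestrality to exclude incoming edges from outside.
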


\begin{proof}
A root SCC is ancestral because it has no incoming edge from another SCC.
For every nonempty proper subset of a root SCC, strong connectivity provides a path from outside the subset to a node inside it, so the subset is not ancestral.

Now let $U$ be a nonempty ancestral set.
Then $U$ is a union of SCCs, and the condensation subgraph induced by these SCCs has a source $S$.
No SCC inside $U$ enters $S$ by the source property, and no SCC outside $U$ enters $S$ by ancestry.
Thus $S$ is a root SCC of the full graph.
Every nonempty ancestral set therefore contains a root SCC, and inclusion minimality forces equality.
\end{proof}

\begin{proof}[Proof of Corollary~\ref{cor:minimal-scc}]
By Theorem~\ref{thm:block-exogeneity}, the nonempty sets satisfying
condition~$(ii)$ are exactly the nonempty ancestral sets of $G[R]$.
Lemma~\ref{lem:ancestral-root-scc} identifies their inclusion-minimal
elements as the root SCCs of $G[R]$.
\end{proof}

\subsection{Sparse adjustment and external parents}

\begin{proof}[Proof of Proposition~\ref{prop:sparse-passing}]
If $F=\varnothing$, then $P=\varnothing$ and $\mathsf{UP}$ holds automatically.
Theorem~\ref{thm:block-exogeneity} gives the equivalence between
$\mathsf{OUT}$ and ancestry, and an ancestral $C$ has
$P_{\rm ext}(C)=\varnothing$.
Thus all claims hold in this case.
Assume henceforth that $F\ne\varnothing$.

Write $D=R\setminus C$. For $T=C,D$, let $H_T$ denote the rows
of $H_R$ indexed by $T$ in~\eqref{eq:block-solution}. Then
\begin{equation}
 X_T=H_TX_F+e_T(F),
 \qquad T=C,D.
 \label{eq:sparse-block-decomposition}
\end{equation}
Here $e_T(F)\indep X_F$ by the ancestry of $F$.
Define $G_T=G_T(P)$ by
\[
 G_TX_F:=H_TX_F-\Proj(H_TX_F\mid X_P).
\]
Since $X_P$ is a subvector of $X_F$, independence gives
$\Proj(e_T(F)\mid X_P)=0$. Therefore
\[
\begin{aligned}
 e_T(P)
 &=X_T-\Proj(X_T\mid X_P)\\
 &=e_T(F)+H_TX_F-\Proj(H_TX_F\mid X_P)\\
 &=e_T(F)+G_TX_F.
\end{aligned}
\]

If $\mathsf{UP}$ holds, the independence in that condition gives
\[
 0=\Cov(e_C(P),X_F)=G_C\Cov(X_F).
\]
Positive definiteness of $\Cov(X_F)$ gives $G_C=0$. Conversely,
$e_C(P)=e_C(F)$ implies $e_C(P)\indep X_F$. Hence
\begin{equation}
 \mathsf{UP}\quad\Longleftrightarrow\quad e_C(P)=e_C(F).
 \label{eq:up-exact-adjustment}
\end{equation}

Under $\mathsf{UP}$, we have $e_C(P)=e_C(F)$.
Since $X_F\indep e_C(F)$, the projection coefficients of $e_D(P)$
and $e_D(F)$ onto the common candidate residual are both $Q_C$
from~\eqref{eq:general-QC}.
Consequently,
\begin{equation}
 r_{P,C}(D)
 =e_D(F)+G_DX_F-Q_Ce_C(F)
 =r_{F,C}(D)+G_DX_F.
 \label{eq:sparse-complement-residual}
\end{equation}

For the forward implication, suppose that $\mathsf{UP}$ and $\mathsf{OUT}$
hold. Since $F$ is ancestral, $X_F=A_{F,F}\eps_F$. The representations
\eqref{eq:common-block-residuals},
\eqref{eq:common-complement-residual}, and
\eqref{eq:sparse-complement-residual} have the form in
Lemma~\ref{lem:residual-ancestry}. Condition $\mathsf{OUT}$ supplies the
required scalar independences, while the projection normal equation gives
$\Cov(e_C(P),r_{P,C}(D))=0$. Thus
Lemma~\ref{lem:residual-ancestry} applies with $Q=Q_C$ and
$L=G_DA_{F,F}$. Hence $C$ is ancestral in $G[R]$.

For this ancestral $C$, block triangularity gives
$H_C=(I-B_{C,C})^{-1}B_{C,F}$. Because $(I-B_{C,C})^{-1}$ is invertible
and ancestry rules out parents in $D$,
\[
 \{j\in F:H_{C,j}\ne0\}
 =\{j\in F:B_{C,j}\ne0\}
 =P_{\rm ext}(C).
\]
Moreover, $\mathsf{UP}$ and~\eqref{eq:up-exact-adjustment} give
\[
 \Proj(X_C\mid X_P)=H_CX_F.
\]
Positive definiteness of $\Cov(X_F)$ makes the
linear coefficients in $X_F$ unique. Since the coefficient matrix of the
left-hand side has zero columns outside $P$, every nonzero column of $H_C$
must be indexed by $P$.
Using the support identity above, we obtain
$P_{\rm ext}(C)\subseteq P$.

For the reverse implication, suppose that $C$ is ancestral in $G[R]$ and
$P_{\rm ext}(C)\subseteq P$. The support identity above uses only ancestry,
so $H_CX_F$ is a linear function of $X_P$. By
\eqref{eq:sparse-block-decomposition} and $e_C(F)\indep X_F$, $\Proj(X_C\mid X_P)=H_CX_F$, and hence
$e_C(P)=e_C(F)$.
Equation~\eqref{eq:up-exact-adjustment} gives $\mathsf{UP}$.

The proof of Theorem~\ref{thm:block-exogeneity} gives
\[
 e_C(F)=A_{C,C}\eps_C,
 \qquad
 r_{F,C}(D)=A_{D,D}\eps_D.
\]
Since $X_F=A_{F,F}\eps_F$, equation~\eqref{eq:sparse-complement-residual}
gives
\[
 r_{P,C}(D)=A_{D,D}\eps_D+G_DA_{F,F}\eps_F,
 \qquad e_C(P)=A_{C,C}\eps_C.
\]
Independence of the noise coordinates therefore gives
$e_C(P)\indep r_{P,C}(D)$, proving $\mathsf{OUT}$.

The minimum-cardinality claim follows because the passing adjustments for
an ancestral $C$ are exactly the supersets of $P_{\rm ext}(C)$.
\end{proof}

\subsection{Minimum passers and population recovery}

\begin{lemma}[Minimum passer]
\label{lem:minimum-passer}
Fix an ancestral $F$ with $R=V\setminus F\ne\varnothing$.
Let $(C,P)$ minimize $|C|+|P|$ among passing pairs.
Then $C$ is a root SCC of $G[R]$ and $P=P_{\rm ext}(C)$.
\end{lemma}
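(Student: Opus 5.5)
The proof combines Proposition~\ref{prop:sparse-passing} with Lemma~\ref{lem:ancestral-root-scc} through an exchange argument on the sum $|C|+|P|$.

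First, passing pairs exist, so the minimizer is well defined. Since $R\ne\varnothing$, Lemma~\ref{lem:ancestral-root-scc} applied to $G[R]$ gives a root SCC $S$ of $G[R]$. It is ancestral in $G[R]$. By Proposition~\ref{prop:sparse-passing}, the pair $(S,P_{\rm ext}(S))$ is passing, since $P_{\rm ext}(S)\subseteq F$ by ancestry of $S$ in $G[R]$.

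Now let $(C,P)$ be a minimizing passing pair. By Proposition~\ref{prop:sparse-passing}, $C$ is ancestral in $G[R]$ and $P_{\rm ext}(C)\subseteq P$. By Lemma~\ref{lem:ancestral-root-scc}, $C$ contains a root SCC $S$ of $G[R]$. The key step is the monotonicity $P_{\rm ext}(S)\subseteq P_{\rm ext}(C)$. To see this, take $k\in P_{\rm ext}(S)$. Then $k\notin S$ and $B_{S,k}\ne0$. Because $S$ is a root SCC of $G[R]$, it has no parents in $R\setminus S$, so $k\in F$. In particular $k\notin C$, and $B_{C,k}\ne0$ since $S\subseteq C$; hence $k\in P_{\rm ext}(C)$.

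Therefore $(S,P_{\rm ext}(S))$ is passing and satisfies
\[
|S|+|P_{\rm ext}(S)|\le |C|+|P_{\rm ext}(C)|\le |C|+|P|,
\]
where the first inequality is strict whenever $S\subsetneq C$. Minimality of $|C|+|P|$ then forces $S=C$, so $C$ is a root SCC of $G[R]$. Equality in the second inequality forces $|P|=|P_{\rm ext}(C)|$, and together with $P_{\rm ext}(C)\subseteq P$ this gives $P=P_{\rm ext}(C)$.

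The only delicate point is the parent-monotonicity step. It needs $C\subseteq R$, which ensures that parents of $S$ lying in $F$ are external to $C$, and it needs the root property of $S$ within $G[R]$, which rules out parents of $S$ lying in $C\setminus S$. Both facts hold under the hypotheses, so the argument goes through.
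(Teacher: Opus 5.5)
Your proof is correct and follows essentially the same route as the paper. Proposition~\ref{prop:sparse-passing} gives ancestry of $C$ and $P_{\rm ext}(C)\subseteq P$; Lemma~\ref{lem:ancestral-root-scc} supplies a root SCC $S\subseteq C$ with $P_{\rm ext}(S)\subseteq P_{\rm ext}(C)$; and minimality applied to the passing pair $(S,P_{\rm ext}(S))$ forces $C=S$ and $P=P_{\rm ext}(C)$. Your explicit checks of parent monotonicity and of the existence of a minimizer only spell out steps the paper states briefly.
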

\begin{proof}
Let $(C,P)$ be any passing pair. By Proposition~\ref{prop:sparse-passing}, $C$ is ancestral in $G[R]$ and $P_{\rm ext}(C)\subseteq P$.
By Lemma~\ref{lem:ancestral-root-scc}, $C$ contains a root SCC $S$ of $G[R]$.
Every external parent of $S$ lies in $F$ and is also an external parent of $C$, so
\[
 P_{\rm ext}(S)\subseteq P_{\rm ext}(C)\subseteq P,
 \qquad
 |S|+|P_{\rm ext}(S)|\le |C|+|P|.
\]
The pair $(S,P_{\rm ext}(S))$ passes by Proposition~\ref{prop:sparse-passing}.
If $(C,P)$ is a minimizer, equality must hold, forcing $C=S$ and $P=P_{\rm ext}(S)$.
This proves the claim.

Since every root SCC $S$ yields the passing pair $(S,P_{\rm ext}(S))$, the same argument also gives
\[
 \min_{(C,P):\,\mathsf{PASS}(C,P)=1}(|C|+|P|)
 =\min_{S:\,\text{root SCC of }G[R]}
   (|S|+|P_{\rm ext}(S)|).
\]
The equality case above identifies all minimizers.
\end{proof}


\begin{proof}[Proof of Theorem~\ref{thm:population-recovery}]
We prove by induction that the following invariants hold before every round:
\begin{align}
 &\widehat\Pi\subseteq\Pi,\qquad
 F=\bigcup_{T\in\widehat\Pi}T,
 \qquad F\text{ is ancestral in }G,
 \tag{I1}\label{eq:population-partition-invariant}\\
 &\widehat E^{\rm sc}
 =\{T\to U\in E^{\rm sc}:T,U\in\widehat\Pi\}.
 \tag{I2}\label{eq:population-edge-invariant}
\end{align}
They hold initially because $F$, $\widehat\Pi$, and
$\widehat E^{\rm sc}$ are empty.

Suppose that~\eqref{eq:population-partition-invariant} and~\eqref{eq:population-edge-invariant} hold before a round and that $F\ne V$.
Write $R=V\setminus F$.
Since $F$ is an ancestral union of true SCCs, the SCCs of $G[R]$ are exactly
the unrecovered SCCs
\begin{equation}
 \Pi_R:=\{T\in\Pi:T\subseteq R\}
 =\Pi\setminus\widehat\Pi.
 \label{eq:remainder-true-sccs}
\end{equation}
Indeed, removing $F$ preserves each remaining SCC and cannot create new
directed paths between distinct SCCs.

Every root SCC $S$ of $G[R]$ satisfies
\[
 |S|\le s_{\max}\le s,\qquad
 P_{\rm ext}(S)\subseteq F,\qquad
 |P_{\rm ext}(S)|\le d_B\le d.
\]
Thus the minimizing pair in Lemma~\ref{lem:minimum-passer} is eligible.
Write the first passing pair selected by the algorithm as
$(S,\widehat P(S))$. By the search order and
Lemma~\ref{lem:minimum-passer}, $S$ is a root SCC of $G[R]$,
$\widehat P(S)=P_{\rm ext}(S)$, and
\begin{equation}
 |S|+|\widehat P(S)|
 =\min_{U:\,\text{root SCC of }G[R]}(|U|+|P_{\rm ext}(U)|)
 \le s_{\max}+d_B.
 \label{eq:population-first-passing-size}
\end{equation}
Thus the failure branch is not reached. For every $T\in\widehat\Pi$,
\[
 T\to S\in E^{\rm sc}
 \quad\Longleftrightarrow\quad
 T\cap P_{\rm ext}(S)\ne\varnothing
 \quad\Longleftrightarrow\quad
 T\cap\widehat P(S)\ne\varnothing.
\]
The update therefore adds exactly the condensation edges involving $S$ and previously recovered SCCs: the displayed equivalence identifies all incoming edges, while ancestry of $F$ rules out edges from $S$ into $F$.
Thus \eqref{eq:population-edge-invariant} is preserved.
Since $S$ is a root SCC of $G[R]$, $F\cup S$ remains ancestral.
Together with adding the unrecovered true SCC $S$ to $\widehat\Pi$, this preserves \eqref{eq:population-partition-invariant}.

Each round adds one unrecovered SCC, so the algorithm reaches $F=V$ after
at most $|\Pi|$ rounds. The two invariants then give
$\widehat\Pi=\Pi$ and $\widehat E^{\rm sc}=E^{\rm sc}$.
\end{proof}

\subsection{Population recovery with arbitrary search bounds}
\label{app:population-bounds}

\begin{proposition}[Insufficient search bounds]
\label{prop:population-insufficient-bounds}
Suppose the cyclic LiNG model~\eqref{eq:sem} and Assumption~\ref{ass:PI} hold.
Run Algorithm~\ref{alg:population} with arbitrary integers $s\ge1$, $d\ge0$ and population $\mathsf{PASS}$ decisions.
The algorithm returns failure if and only if $s<s_{\max}$ or $d<d_B$.
\end{proposition}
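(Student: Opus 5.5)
Both directions reduce to analysing the first round at which the algorithm could stop. For the "if" direction, suppose $s\ge s_{\max}$ and $d\ge d_B$. Then the proof of Theorem~\ref{thm:population-recovery} applies verbatim: invariants \eqref{eq:population-partition-invariant}--\eqref{eq:population-edge-invariant} hold before every round. The minimizing root SCC $S$ of $G[R]$ satisfies $|S|\le s$ and $|P_{\rm ext}(S)|\le d$, and $|P_{\rm ext}(S)|\le|F|$ since $P_{\rm ext}(S)\subseteq F$, so the loop over $t$ reaches it. Hence the failure branch is never taken and the algorithm terminates with the true condensation. This gives the contrapositive of "failure $\Rightarrow s<s_{\max}$ or $d<d_B$".

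For the converse, suppose $s<s_{\max}$ or $d<d_B$. I first show that, with arbitrary bounds, every successful round still selects a true root SCC of $G[R]$ and its exact external parent set. Any pair $(C,P)$ the algorithm accepts passes in the population. By Proposition~\ref{prop:sparse-passing}, $C$ is ancestral in $G[R]$ and $P_{\rm ext}(C)\subseteq P$. By Lemma~\ref{lem:ancestral-root-scc}, $C$ contains a root SCC $S'$, and $(S',P_{\rm ext}(S'))$ passes with $|S'|\le|C|\le s$ and $|P_{\rm ext}(S')|\le|P|\le d$. This pair is therefore inside the search range. Moreover $|S'|+|P_{\rm ext}(S')|\le|C|+|P|$, and whenever $S'\ne C$ or $P_{\rm ext}(S')\ne P$ either the total size is strictly smaller or, at equal total, $|S'|<|C|$. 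The search order (increasing $t$, then increasing $|C|$) would therefore have found $(S',P_{\rm ext}(S'))$ first, or a pair at least as early. Iterating this comparison shows that the selected pair equals $(S',P_{\rm ext}(S'))$ for a root SCC $S'$. One subtlety: equal $t$ and equal $|C|$ with $C\supsetneq S'$ is impossible, since that would force $|C|=|S'|$. So invariant \eqref{eq:population-partition-invariant} is preserved in every round regardless of the bounds.

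Consequently, if the algorithm never fails, it adds every true SCC $T\in\Pi$ at some round, together with $P_{\rm ext}(T)$ as $\widehat P(T)$. Each such pair was examined within the loop, which requires $|T|\le s$ and $|P_{\rm ext}(T)|\le d$; note $P_{\rm ext}(T)\subseteq F$ at that time because $F$ is ancestral and contains all ancestors outside $T$. Taking maxima over $T$ gives $s\ge s_{\max}$ and $d\ge d_B$. Since each round removes a nonempty set, the algorithm must either fail or cover $V$, so under $s<s_{\max}$ or $d<d_B$ it fails.

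The main obstacle is the second step: the argument that an accepted pair, even under truncated bounds, is exactly a root SCC with its exact parent set. This depends on the tie-breaking of the search order together with the subset inequalities from Lemma~\ref{lem:minimum-passer}. I would handle it by reproving that lemma restricted to the search window $\{|C|\le s,|P|\le d\}$, noting that the window is closed under passing from $(C,P)$ to $(S',P_{\rm ext}(S'))$.
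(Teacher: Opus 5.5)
Your proposal is correct and follows the paper's proof essentially step for step. Sufficiency comes from Theorem~\ref{thm:population-recovery}. For the converse, each completed round yields a root SCC $S'\subseteq C$ with $P_{\rm ext}(S')\subseteq P$ whose pair stays in the search window, so the search order forces $(C,P)=(S',P_{\rm ext}(S'))$, and successful termination then requires $s\ge s_{\max}$ and $d\ge d_B$. Two small remarks: the tie-breaking discussion is unnecessary, because the inclusions already give $|S'|+|P_{\rm ext}(S')|<|C|+|P|$ whenever the two pairs differ; and what you label the ``if'' direction is really the ``only if'' direction, as your own contrapositive remark indicates.
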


\begin{proof}
If $s\ge s_{\max}$ and $d\ge d_B$,
Theorem~\ref{thm:population-recovery} rules out failure.
For the converse, we show that termination without failure
requires $s\ge s_{\max}$ and $d\ge d_B$.

We use an argument similar to the proof of Theorem~\ref{thm:population-recovery}.
The invariants hold initially; suppose \eqref{eq:population-partition-invariant} and~\eqref{eq:population-edge-invariant} hold before a round with $F\ne V$, and write $R=V\setminus F$.
If the round completes, let $(C,P)$ be its first passing pair.
As in the proof of Lemma~\ref{lem:minimum-passer}, $C$ contains a root SCC $S$ of $G[R]$ such that
\[
 S\subseteq C,\qquad P_{\rm ext}(S)\subseteq P,
 \qquad \mathsf{PASS}(S,P_{\rm ext}(S))=1.
\]
Since
\[
 |S|\le |C|\le s,\qquad
 |P_{\rm ext}(S)|\le |P|\le d,
\]
this pair is also eligible, and the search order gives
\[
 |C|+|P|
 \le |S|+|P_{\rm ext}(S)|
 \le |C|+|P|.
\]
Equality and the inclusions above force
$C=S$ and $P=P_{\rm ext}(S)$.
Under the invariants, $S$ is an unrecovered true SCC.
The update therefore preserves both invariants by the same
argument as in Theorem~\ref{thm:population-recovery}.

If the algorithm does not return failure, it reaches $F=V$ because each completed round adds an unrecovered true SCC.
Every true SCC and its exact external parent set have therefore been selected within the search bounds.
Hence
\[
 s_{\max}=\max_{S\in\Pi}|S|\le s,
 \qquad
 d_B=\max_{S\in\Pi}|P_{\rm ext}(S)|\le d.
\]
\end{proof}

\paragraph{Computational count.}
With valid search bounds and correct decisions, every examined pair has $|C|+|P|\le s_{\max}+d_B$ by \eqref{eq:population-first-passing-size}.
At a round with sets $F,R$, the number of eligible pairs of total size $t$ is at most
\[
 \sum_{\substack{1\le q\le |R|\\0\le t-q\le |F|}}
 \binom{|R|}{q}\binom{|F|}{t-q}
 \le \binom pt.
\]
Over at most $p$ rounds, the total number of examined pairs is at most
\[
 p\sum_{t=1}^{\min\{p,s_{\max}+d_B\}}\binom pt
 \le p^{O(s_{\max}+d_B)}.
\]
Each candidate evaluation requires regressions and scalar statistics, with at most $s_{\max}+d_B$ predictors in any regression.
In general, the same counting argument gives a worst-case bound of $p^{O(s+d)}$.


\section{Finite-sample proof details}
\label{app:finite}

This section first counts the scalar tests at each candidate size and establishes uniform bounds for residuals fitted on the same observations.
We then prove Theorem~\ref{thm:finite-recovery} and Corollary~\ref{cor:uniform-recovery-optimality}.
Finally, we verify Assumption~\ref{ass:uniform-approximation} for normalized distance covariance and Gaussian HSIC.
The upper-bound analysis requires only the natural bounds
$s_{\max}\le p$ and $d_B\le p-1$.

We establish uniform bounds at each total candidate size and then combine them over the sizes needed for recovery.
For $1\le h\le p$, let
\[
 \mathcal A(h):=\{(F,C,P):F\text{ ancestral},\
 \varnothing\ne C\subseteq V\setminus F,\ P\subseteq F,\ |C|+|P|=h\},
\]
and define
\[
\begin{aligned}
 \mathcal I_{\rm up}(h)
 &:=\{(F,C,P,c,k):(F,C,P)\in\mathcal A(h),\ c\in C,\ k\in F\},\\
 \mathcal I_{\rm out}(h)
 &:=\{(F,C,P,c,\ell):(F,C,P)\in\mathcal A(h),\ c\in C,\
       \ell\in V\setminus(F\cup C)\}.
\end{aligned}
\]
Thus $\mathcal A=\bigcup_{h=1}^{\min\{p,s_{\max}+d_B\}}\mathcal A(h)$.
For $0<\alpha<1$, define
\[
 \ell(h;\alpha)
 :=h\log\frac{ep}{h}+h\log2+\log p+\log h
   +\log\frac{64}{\alpha}+\log[h(h+1)].
\]
For the target error probability $\delta$, write $\ell(h)=\ell(h;\delta)$.

All sample regressions use empirically centered responses and predictors, and empirical covariances are computed from centered arrays.
Fitted residual arrays are therefore already centered.
Constants depend only on the stated tail, covariance, and fixed estimator parameters.

The recovery procedure returns failure if a required sample regression is singular or a residual scale is zero.
For definiteness, the affected statistics are assigned zero;
these cases are excluded on the regularity events used below.


\subsection{Candidate count}

\begin{lemma}[Candidate count and complexity]
\label{lem:count}
The following statements hold.
\begin{enumerate}
 \item[(i)] For every $1\le h\le p$, the number of distinct scalar
 statistics indexed by
 $\mathcal I_{\rm up}(h)\cup\mathcal I_{\rm out}(h)$ satisfies
 \[
  N_{\rm test}\le2ph\left(\frac{2ep}{h}\right)^h.
 \]

 \item[(ii)] For every $1\le h\le p$ and $0<\alpha<1$,
 \[
  \log\frac{32(N_{\rm test}\vee1)}{\alpha}\le\ell(h;\alpha),
  \quad \text{ and} \quad h+1\le\ell(h;\alpha).
 \]

 \item[(iii)] For every fixed $0<\alpha<1$,
 $h\mapsto\ell(h;\alpha)$ is increasing on $[1,p]$.

 \item[(iv)] For $1\le s\le p$ and $0\le d\le p-1$, define
 \[
  L_{s,d}:=s\log\frac{ep}{s}+d\log\frac{ep}{d}
  +\log p+\log s+\log\frac{64}{\delta},
 \]
 with $d\log(ep/d)=0$ when $d=0$. Then
 \[
  \frac12L_{s,d}
  \le\ell(\min\{p,s+d\};\delta)
  \le3L_{s,d}.
 \]
\end{enumerate}
\end{lemma}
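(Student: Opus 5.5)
The plan is to prove the four statements in order, by direct counting and elementary inequalities for $x\mapsto x\log(ep/x)$. The one structural point is this: each scalar statistic depends only on the tuple $(C,P,c,k)$ or $(C,P,c,\ell)$, not on the ancestral set $F$. So $N_{\rm test}$ counts distinct such tuples rather than elements of $\mathcal I_{\rm up}(h)\cup\mathcal I_{\rm out}(h)$, and $F$ contributes no factor.

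For (i), I count disjoint pairs $(C,P)$ with $|C|+|P|=h$ by first choosing the union $U=C\cup P$, in $\binom ph\le(ep/h)^h$ ways, and then assigning each element of $U$ to $C$ or $P$, in at most $2^h$ ways. For each pair I choose $c\in C$ (at most $h$ choices) and a partner index $k$ or $\ell$ (at most $p$ choices). A further factor $2$ accounts for the two statistic types, giving $N_{\rm test}\le 2ph\,2^h(ep/h)^h=2ph(2ep/h)^h$.

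For (ii), I take logarithms in (i):
\[
\log\frac{32N_{\rm test}}{\alpha}\le \log\frac{64}{\alpha}+\log p+\log h+h\log\frac{ep}{h}+h\log2 .
\]
This is at most $\ell(h;\alpha)$ because $\log[h(h+1)]\ge0$. For the second claim, $h\le p$ gives $h\log(ep/h)\ge h$, and $\log(64/\alpha)>1$, so $h+1\le\ell(h;\alpha)$.

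For (iii), the map $h\mapsto h\log(ep/h)$ has derivative $\log(p/h)\ge0$ on $[1,p]$. The remaining terms $h\log2$, $\log h$ and $\log[h(h+1)]$ are each increasing, so $\ell(\cdot\,;\alpha)$ is increasing.

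For (iv), set $h=\min\{p,s+d\}$ and $S_{s,d}=s\log(ep/s)+d\log(ep/d)$.

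For the lower bound, note that $s\le h$ and $d\le h$ hold in both cases $h=p$ and $h=s+d$. Monotonicity from (iii) then gives
\[
h\log(ep/h)\ge\max\{s\log(ep/s),\,d\log(ep/d)\}\ge S_{s,d}/2 .
\]
The $d=0$ convention is consistent with this, since the $d$-term vanishes. Since $\log h\ge\log s$ and the $\log p$ and $\log(64/\delta)$ terms appear in both expressions, this yields $\ell\ge L_{s,d}/2$.

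For the upper bound, the function $x\mapsto x\log(ep/x)$ is concave on $[0,p]$ and vanishes at $0$, hence subadditive. Together with monotonicity this gives $h\log(ep/h)\le S_{s,d}$. Next, $\log(ep/x)\ge1>\log2$ for $x\le p$, so
\[
h\log2\le(s+d)\log2\le S_{s,d}.
\]
For the last term, $\log[h(h+1)]\le h\le S_{s,d}$, using $\log(h(h+1))\le h$ for integers $h\ge1$. Finally $\log h\le\log p$.

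Summing these bounds gives
\[
\ell\le 3S_{s,d}+2\log p+\log(64/\delta)\le 3L_{s,d}.
\]

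I expect the main obstacle to be this bookkeeping in (iv): checking that the truncation $h=p$ still gives $s,d\le h$ for the lower bound, and that the extra terms $h\log2$, $\log[h(h+1)]$ and the duplicated $\log p$ fit within the factor $3$. If $\log[h(h+1)]\le h$ fails at small $h$, I would instead absorb it into $\log(64/\delta)\ge\log 64$.
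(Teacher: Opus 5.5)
Your proof is correct. Parts (i)--(iii) follow the paper's argument essentially verbatim: choose $C\cup P$ and then split it, take logarithms, and differentiate to get $\log(p/h)\ge0$.

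The difference is in (iv). Write $\varphi(x)=x\log(ep/x)$ and $u=\min\{p,s+d\}$.

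The paper uses the binary-entropy identity to get $\varphi(u)\le S_{s,d}\le\varphi(u)+u\log2$ when $s+d\le p$, and uses $\varphi(p)=p<s+d\le S_{s,d}$ otherwise. Its lower bound is then a case split. In the first regime it gives $L_{s,d}\le\ell(u;\delta)$ outright. In the second it uses $S_{s,d}\le2p$ to get $L_{s,d}\le2\ell(p;\delta)$.

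You replace the entropy identity by concavity and subadditivity of $\varphi$ for the upper bound. You get the lower bound in one step from monotonicity, $\varphi(u)\ge\max\{\varphi(s),\varphi(d)\}\ge S_{s,d}/2$, since $s,d\le u$ in both regimes.

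Your route avoids the case split and the entropy computation. The paper's route is sharper: it gives factor $1$ instead of $1/2$ in the lower bound when $s+d\le p$, and coefficient $2+\log2$ instead of $3$ on $S_{s,d}$ in the upper bound. Neither refinement is needed for the stated constants.

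Two small remarks:
\begin{itemize}
\item In the truncated case $u=p<s+d$, subadditivity applied to $s+d$ only bounds $\varphi(s+d)$, which is smaller than $\varphi(p)$ there. You should write out $\varphi(p)\le\varphi(s)+\varphi(p-s)\le\varphi(s)+\varphi(d)$, using $0\le p-s<d$. Alternatively, just use $\varphi(p)=p<s+d\le S_{s,d}$. Your phrase ``together with monotonicity'' points at this but leaves it implicit.
\item Your fallback is unnecessary. The bound $\log[h(h+1)]\le h$ holds for every real $h\ge1$, since $h-\log h-\log(h+1)$ has its minimum on $[1,\infty)$ at $h=(1+\sqrt5)/2$, where it is about $0.17$.
\end{itemize}
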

\begin{proof}
For statement~(i), choose the union $C\cup P$ and then its nonempty subset $C$.
Since $C$ and $P$ are disjoint, the number of candidate pairs is
\[
 \sum_{q=1}^h\binom pq\binom{p-q}{h-q}
 =\binom ph\sum_{q=1}^h\binom hq
 =(2^h-1)\binom ph.
\]
For each pair, there are at most $h$ choices of $c$ and $p$ choices of the
tested variable for each of the two scores. Since $F$ only restricts
eligibility,
\[
 N_{\rm test}
 \le2ph(2^h-1)\binom ph
 \le2ph\left(\frac{2ep}{h}\right)^h,
\]
where the last inequality uses $2^h-1\le2^h$ and
$\binom ph\le(ep/h)^h$. This proves statement~(i).

For statement~(ii), taking logarithms in statement~(i) gives the first
inequality; the case $N_{\rm test}=0$ is immediate. The second follows from
$h\log(ep/h)\ge h$ and $\log(64/\alpha)>1$.

For statement~(iii), the derivative of $h\log(ep/h)$ is $\log(p/h)$.
All remaining $h$-dependent terms in $\ell(h;\alpha)$ are strictly
increasing, so the claim follows.

For statement~(iv), put $u=\min\{p,s+d\}$ and
$J=s\log(ep/s)+d\log(ep/d)$. Then
$L_{s,d}=J+\log p+\log s+\log(64/\delta)$.
Since $s,d\le p$, we have $J\ge s+d\ge u$.

If $s+d\le p$, then $u=s+d$ and
\[
J-u\log(ep/u)
=u\left(-\frac{s}{u}\log\frac{s}{u}
        -\frac{d}{u}\log\frac{d}{u}\right)
\in[0,u\log2].
\]
Then the binary-entropy bound gives $u\log(ep/u)\le J\le u\log(ep/u)+u\log2$.

If $s+d>p$, then $u=p$ and
$u\log(ep/u)=p\le J$.
Thus, in both cases, $u\log(ep/u)\le J$.

Also, $u\ge1$ and $2\log u\le u$ imply
\[
 \log[u(u+1)]\le2\log u+\log2\le u+\log2.
\]
Using $u\le p$ and $u\le J$, we obtain
\begin{align*}
 \ell(u;\delta)
 &\le J+(1+\log2)u
       +2\log p+\log(64/\delta)+\log2\\
 &\le(2+\log2)J
       +2\log p+\log(64/\delta)+\log2 \le3L_{s,d},
\end{align*}
where the last inequality uses $\log s\ge0$,
$\log2<1$, and $\log(64/\delta)\ge\log2$.

For the reverse inequality, if $s+d\le p$, the entropy bound
and $\log s\le\log u$ give 
\[
L_{s,d}\le u\log(ep/u)+u\log2+\log p+\log u+\log(64/\delta)
\le\ell(u;\delta).
\]
If $s+d>p$, the function $x\log(ep/x)$ is increasing on $(0,p]$,
so $J\le2p$. Then
\[
 L_{s,d}\le2p+2\log p+\log(64/\delta)
 \le2\ell(p;\delta).
\]
\end{proof}

\subsection{Residual estimation on the same observations}

The next lemma establishes uniform residual and scale bounds over our candidate collection.

The required response--predictor pairs are
\[
\begin{aligned}
 \mathcal R(h)
 :={}&\bigl\{(c,P):(F,C,P,c,k)\in\mathcal I_{\rm up}(h)
                    \text{ for some }(F,C,k)\bigr\}\\
 &\mathrel{\phantom{:={}}}\cup
 \bigl\{(c,P):(F,C,P,c,\ell)\in\mathcal I_{\rm out}(h)
                    \text{ for some }(F,C,\ell)\bigr\}\\
 &\mathrel{\phantom{:={}}}\cup
 \bigl\{(\ell,P\cup C):(F,C,P,c,\ell)
                    \in\mathcal I_{\rm out}(h)\bigr\}.
\end{aligned}
\]
For $u\in\mathbb R^n$, write
\[
 \|u\|_n:=\left(\frac1n\sum_{i=1}^n u_i^2\right)^{1/2}.
\]
For any $(v,Q)\in\mathcal R(h)$, write
\[
 Y=e_v(Q),
 \qquad
 y=(Y^{(i)}-\overline Y)_{i=1}^n,
\]
let $\widehat y$ be the residual array from least-squares regression of
$X_v$ on $X_Q$ with an intercept, and set
\[
 \sigma_Y^2=\Var(Y),
 \qquad
 \widehat\sigma_y=\|y\|_n,
 \qquad
 \widehat\sigma_{\widehat y}=\|\widehat y\|_n.
\]

\begin{lemma}[Uniform residual prediction]
\label{lem:residual-prediction}
Under Assumption~\ref{ass:sg-ev}, for $0<\alpha<1$, there is a constant
$c_{\rm res}>0$, depending only on
$(K,\lambda)$, such that, if
$n\ge c_{\rm res}\ell(h;\alpha)$, there is an event
$\mathcal E_{\rm res}(\alpha)$ with
$\Pr(\mathcal E_{\rm res}(\alpha))\ge1-\alpha$ on which the following hold
simultaneously:
\begin{enumerate}
 \item[(i)] Every required sample regression is nonsingular, and for every
 $(v,Q)\in\mathcal R(h)$,
 \[
  \|\widehat y-y\|_n
  \le c_{\rm res}\sqrt{\ell(h;\alpha)/n}.
 \]

 \item[(ii)] For every $(v,Q)\in\mathcal R(h)$,
 \[
  \widehat\sigma_y^2,\widehat\sigma_{\widehat y}^2
  \in[\lambda^{-1}/2,2\lambda],
  \qquad
  |\widehat\sigma_y/\sigma_Y-1|
  \le c_{\rm res}\sqrt{\ell(h;\alpha)/n}.
 \]

 \item[(iii)] For every observed variable $X_k$ entering a required
 $\mathsf{UP}$ statistic, writing $\sigma_{X_k}$ and
 $\widehat\sigma_{X_k}$ for its population and empirical standard
 deviations,
 \[
  \widehat\sigma_{X_k}^2\in[\lambda^{-1}/2,2\lambda],
  \qquad
  |\widehat\sigma_{X_k}/\sigma_{X_k}-1|
  \le c_{\rm res}\sqrt{\ell(h;\alpha)/n}.
 \]
\end{enumerate}
\end{lemma}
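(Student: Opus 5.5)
The proof reduces every statement to a finite collection of low-dimensional covariance estimates and then takes a union bound. Every pair $(v,Q)\in\mathcal R(h)$ has $|Q|\le h$: the $\mathsf{UP}$ and first $\mathsf{OUT}$ regressions use $P$, and the second uses $P\cup C$, with $|P\cup C|=h$. Hence each regression involves a response and at most $h$ predictors, i.e.\ an index set $T=\{v\}\cup Q$ with $|T|\le h+1$. Each such regression is controlled by one principal submatrix $\Sigma_{T,T}$ and its empirical counterpart $\widehat\Sigma_{T,T}$ computed from centered data. By Assumption~\ref{ass:sg-ev}, every $a^\top X_T$ is sub-Gaussian with $\psi_2$-norm at most $K\sqrt{a^\top\Sigma a}$. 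The standard covariance concentration bound (an $\epsilon$-net on the sphere of dimension $|T|$ plus Bernstein for sub-exponential squares, with the centering correction handled by a separate sub-Gaussian mean bound) then gives, with probability at least $1-\alpha'$,
\[
\bigl\|\Sigma_{T,T}^{-1/2}\widehat\Sigma_{T,T}\Sigma_{T,T}^{-1/2}-I\bigr\|_{\rm op}\le C\Bigl(\sqrt{\tfrac{|T|+\log(1/\alpha')}{n}}+\tfrac{|T|+\log(1/\alpha')}{n}\Bigr).
\]

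The union bound runs over all distinct regressions and all single variables $X_k$, and it is matched against $\ell(h;\alpha)$. The number of distinct regressions is at most the number of distinct scalar statistics bounded in Lemma~\ref{lem:count}(i), plus $p$ single variables. So I take $\alpha'=\alpha/(32(N_{\rm test}\vee1))$ together with a $p$-term, and Lemma~\ref{lem:count}(ii) gives $|T|+\log(1/\alpha')\lesssim\ell(h;\alpha)$. With $n\ge c_{\rm res}\ell(h;\alpha)$ and $c_{\rm res}$ large, the relative deviation is at most $c\sqrt{\ell/n}\le 1/4$ uniformly. This yields nonsingularity of $\widehat\Sigma_{Q,Q}$, since $\lambda_{\min}(\Sigma_{Q,Q})\ge\lambda^{-1}$. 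It also yields statement (iii) directly, since a single variable is the case $T=\{k\}$, and $\sigma_{X_k}^2\in[\lambda^{-1},\lambda]$ gives the scale bounds after shrinking $1/4$.

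For (i) and (ii) I use the population coefficient $\beta=\Sigma_{Q,Q}^{-1}\Sigma_{Q,v}$ and the sample coefficient $\widehat\beta$. Then $\widehat y-y=-(\mathbf X_Q^c)(\widehat\beta-\beta)$, where $\mathbf X_Q^c$ is the centered design, so
\[
\|\widehat y-y\|_n^2=(\widehat\beta-\beta)^\top\widehat\Sigma_{Q,Q}(\widehat\beta-\beta).
\]
Moreover $\widehat\beta-\beta=\widehat\Sigma_{Q,Q}^{-1}\widehat\Sigma_{Q,Y}$, where $\widehat\Sigma_{Q,Y}$ is the empirical covariance between $X_Q$ and $y$, and $Y=e_v(Q)$ is uncorrelated with $X_Q$. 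The cross-covariance block of the joint matrix on $T$, whose population value is zero in the transformed coordinates $(X_Q,Y)$, is bounded by the same operator-norm deviation times $\lambda$-factors. This gives $\|\widehat\beta-\beta\|_2\lesssim\sqrt{\ell/n}$ and hence (i). For (ii), $\sigma_Y^2=\Sigma_{vv}-\Sigma_{v,Q}\Sigma_{Q,Q}^{-1}\Sigma_{Q,v}$ is a Schur complement and lies in $[\lambda^{-1},\lambda]$. The ratio $\widehat\sigma_y^2/\sigma_Y^2$ is a quadratic form in the direction $a=(-\beta,1)$, which is covered by the relative deviation bound, so $|\widehat\sigma_y/\sigma_Y-1|\lesssim\sqrt{\ell/n}$. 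Finally, $\widehat\sigma_{\widehat y}\in[\widehat\sigma_y-\|\widehat y-y\|_n,\widehat\sigma_y+\|\widehat y-y\|_n]$ by the triangle inequality, and both endpoints lie in $[\lambda^{-1}/2,2\lambda]$ once $c_{\rm res}$ is large.

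The main obstacle is the bookkeeping in the union bound. The regressions are indexed by pairs $(v,Q)$ with $|Q|$ varying up to $h$, not just by the tests, and the probability cost must be absorbed exactly into the $\log(32N_{\rm test}/\alpha)$ term of Lemma~\ref{lem:count}(ii). The centering correction also needs care. I would handle this by bounding the number of distinct $T$ by $N_{\rm test}+p$, applying concentration for each $T$ at level $\alpha/(2(N_{\rm test}+p))$, and checking that $\log(2(N_{\rm test}+p)/\alpha)\le\ell(h;\alpha)+\log 2$. Any leftover factor is absorbed into $c_{\rm res}$.
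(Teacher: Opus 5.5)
Your proposal is correct and follows the paper's proof: uniform covariance concentration over at most $N_{\rm test}$ principal submatrices of size at most $h+1$, with the union-bound cost absorbed through Lemma~\ref{lem:count}(ii), then a Loewner sandwich for nonsingularity, the identity $\|\widehat y-y\|_n^2=(\widehat\beta-\beta)^\top\widehat\Sigma_{Q,Q}(\widehat\beta-\beta)$, and quadratic-form (Schur-complement) bounds for the scales. The differences are only technical. You use a whitened deviation bound and the orthogonality identity $\widehat\beta-\beta=\widehat\Sigma_{Q,Q}^{-1}\widehat\Sigma_{Q,Y}$, where the paper uses an absolute operator-norm bound and a cited least-squares perturbation result; your extra singleton sets $\{k\}$ are harmless but unnecessary, since each such $X_k$ already lies in a set $P\cup\{c,k\}$.
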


\begin{proof}
Assume $\mathcal R(h)\ne\varnothing$ so that $N_{\rm test}\ge1$, since otherwise
all three statements hold automatically.
We first construct a covariance event common to the three statements.

	Write $\widehat\Sigma$ for covariance centered by the empirical mean, with
	divisor $n$. Let $c_{\rm cov}>0$ denote the universal numerical constant in
	the mean-centered sub-Gaussian covariance concentration inequality
	\citep{vershynin2026high,chakrabortty2021inference}. Thus, for every index set $H$ with $m_H:=|H|\le h+1$ and every $t>0$,
	\begin{equation}
	\Pr\left(
	\|\widehat\Sigma_{H,H}-\Sigma_{H,H}\|_{\rm op}
	\le
	c_{\rm cov}K^2\lambda
	\left(
	\sqrt{\frac{m_H+t}{n}}
	+\frac{m_H+t}{n}
	\right)
	\right)
	\ge 1-2e^{-t}.
	\label{eq:covariance-concentration}
	\end{equation}

	Associate to each scalar $\mathsf{UP}$ statistic the covariance matrix on $P\cup\{c,k\}$
    and to each scalar $\mathsf{OUT}$ statistic the covariance matrix on $P\cup C\cup\{\ell\}$.
    These matrices contain all submatrices needed for
	their regressions.
    After identifying duplicates, there are at most $N_{\rm test}$ required index sets; denote their collection by $\mathcal H$.

	Set $t=\log(2N_{\rm test}/\alpha)$. By
	Lemma~\ref{lem:count}(ii), $t\le\ell(h;\alpha)$ and
	$m_H\le h+1\le\ell(h;\alpha)$, so
	$m_H+t\le2\ell(h;\alpha)$.
	Hence, by a union bound,
	\[
	\Pr\left(
	\max_{H\in\mathcal H}
	\|\widehat\Sigma_{H,H}-\Sigma_{H,H}\|_{\rm op}
	\le
	c_{\rm cov}K^2\lambda
	\left(
	\sqrt{\frac{2\ell(h;\alpha)}{n}}
	+\frac{2\ell(h;\alpha)}{n}
	\right)
	\right)
	\ge
	1-2N_{\rm test}e^{-t}
	\ge
	1-\alpha.
	\]
	Denote this event by $\mathcal E_{\rm res}(\alpha)$.

	Define the covariance error bound
	\[
	\eta_\Sigma
	:=
	c_{\rm cov}K^2\lambda
	\left(
	\sqrt{\frac{2\ell(h;\alpha)}{n}}
	+\frac{2\ell(h;\alpha)}{n}
	\right).
	\]
	Assume $n\ge C_{\rm samp}\ell(h;\alpha)$,
	where $C_{\rm samp}:=\max\{2,32c_{\rm cov}^2K^4\lambda^4\}$. Then
	\begin{equation}
		\eta_\Sigma
		\le
		2\sqrt{2}\,c_{\rm cov}K^2\lambda
		\sqrt{\frac{\ell(h;\alpha)}{n}}
		\le \frac{\lambda^{-1}}{2}.
		\label{eq:covariance-error-rate}
	\end{equation}
	Since $\lambda_{\min}(\Sigma_{H,H})\ge\lambda^{-1}$, this operator-norm
	bound gives
	\[
	\frac12\Sigma_{H,H}
	\preceq
	\widehat\Sigma_{H,H}
	\preceq
	\frac32\Sigma_{H,H}
	\]
	on every required submatrix.
	In particular, every required sample regression is nonsingular.

	For statement~(i), fix $(v,Q)\in\mathcal R(h)$.
    Here, assume $Q\ne\varnothing$, since $\widehat y=y$ when $Q=\varnothing$.
	For response $X_v$ and predictors $X_Q$, write
	$\beta=\Sigma_{Q,Q}^{-1}\Sigma_{Q,v}$ and
	$\widehat\beta=\widehat\Sigma_{Q,Q}^{-1}\widehat\Sigma_{Q,v}$.
	On the event $\mathcal E_{\rm res}(\alpha)$, for $H=Q\cup\{v\}$,
    $
    \max\left\{
     \|\widehat\Sigma_{Q,Q}-\Sigma_{Q,Q}\|_{\rm op},
     \|\widehat\Sigma_{Q,v}-\Sigma_{Q,v}\|_2
    \right\}
    \le\eta_\Sigma\le\lambda^{-1}/2$.
	Using
	$\|\beta\|_2
	\le
	\|\Sigma_{Q,Q}^{-1}\|_{\rm op}\|\Sigma_{Q,v}\|_2
	\le
	\lambda^2$,
	Theorem~3.1 of \citet{kuchibhotla2023uniform}
    gives
	\[
	\|\widehat\beta-\beta\|_2
	\le
	\frac{\eta_\Sigma(1+\|\beta\|_2)}
	{\lambda^{-1} - \eta_\Sigma}
	\le
	2\lambda\eta_\Sigma
	\left(1+\lambda^2\right).
	\]
	By empirical centering,
	$\widehat y-y=-X_Q(\widehat\beta-\beta)$. Hence
	\begin{align*}
		\|\widehat y-y\|_n
		=
		\left\{
		(\widehat\beta-\beta)^\top
		\widehat\Sigma_{Q,Q}
		(\widehat\beta-\beta)
		\right\}^{1/2}
		\le
		\sqrt{\frac{3\lambda}{2}}\,
		\|\widehat\beta-\beta\|_2
		\le
		\sqrt{\frac{3\lambda}{2}}\,
		2\lambda\eta_\Sigma
		\left(1+\lambda^2\right).
	\end{align*}
    Then for $
    C_{\rm pred} :=\sqrt{\frac{3\lambda}{2}}\,
	4\sqrt2\,c_{\rm cov}K^2\lambda^2(1+\lambda^2)
    $, the inequality \eqref{eq:covariance-error-rate} gives
	\[
    \|\widehat y-y\|_n \le C_{\rm pred}\sqrt{\ell(h;\alpha)/n}.
	\]

	Since the covariance event holds simultaneously for every required
	submatrix, this prediction bound holds uniformly over $\mathcal R(h)$.

	For statement~(ii), if $Q=\varnothing$, then
$\widehat y=y$, and the variance and scale bounds follow
from the covariance event on $\{v\}$.
Assume $Q\ne\varnothing$ and let $H=Q\cup\{v\}$.
    Let $a,\widehat a\in\mathbb R^H$ be the oracle and fitted residual coefficient vectors, respectively, with $a_v=\widehat a_v=1$, $a_Q=-\beta$, and $\widehat a_Q=-\widehat\beta$.
    Consequently,
	\begin{align*}
		\widehat\sigma_y^2
		&=
		a^\top\widehat\Sigma_{H,H}a
		\ge
		(\lambda^{-1}/2)\|a\|_2^2
		\ge
		\lambda^{-1}/2,\\
		\widehat\sigma_{\widehat y}^2
		&=
		\widehat a^\top\widehat\Sigma_{H,H}\widehat a
		\ge
		(\lambda^{-1}/2)\|\widehat a\|_2^2
		\ge
		\lambda^{-1}/2.
	\end{align*}

	The population residual variance is the Schur complement
	$\sigma_Y^2
	=
	\Sigma_{vv}
	-\Sigma_{v,Q}\Sigma_{Q,Q}^{-1}\Sigma_{Q,v}
	\le
	\Sigma_{vv}
	\le
	\lambda$.
	The Loewner upper bound gives
	$\widehat\sigma_y^2\le(3/2)\sigma_Y^2\le3\lambda/2$, and empirical least
	squares minimizes squared error, so
	$\widehat\sigma_{\widehat y}^2\le\widehat\sigma_y^2$.
	This proves the stated two-sided empirical variance bounds.

	Since
	$\sigma_Y^2=a^\top\Sigma_{H,H}a\ge\lambda^{-1}\|a\|_2^2$,
	\[
	\left|
	\frac{\widehat\sigma_y^2}{\sigma_Y^2}-1
	\right|
	=
	\frac{
		|a^\top(\widehat\Sigma_{H,H}-\Sigma_{H,H})a|
	}{\sigma_Y^2}
	\le
	\frac{\eta_\Sigma\|a\|_2^2}{\sigma_Y^2}
	\le
	\lambda\eta_\Sigma
	\le
	C_{\rm scale}\sqrt{\frac{\ell(h;\alpha)}n},
	\]
	where $C_{\rm scale}:=2\sqrt2\,c_{\rm cov}K^2\lambda^2$.
	Since $|\sqrt{x}-1|\le|x-1|$ for $x\ge0$, the same bound holds for $|\widehat\sigma_y/\sigma_Y-1|$.

	For statement~(iii), on $\mathcal E_{\rm res}(\alpha)$, the bounds for an observed variable
	$X_k$ give
	\[
		|\widehat\Sigma_{kk}-\Sigma_{kk}|
		\le
		\eta_\Sigma,
		\qquad
		\widehat\Sigma_{kk}
		\in
		[\lambda^{-1}/2,3\lambda/2],
		\qquad
		\left|
		\frac{\widehat\sigma_{X_k}}{\sigma_{X_k}}-1
		\right|
		\le
		C_{\rm scale}\sqrt{\frac{\ell(h;\alpha)}n}.
	\]
    
	Taking
	\[
	 c_{\rm res}:=\max\{C_{\rm samp},C_{\rm pred},C_{\rm scale}\}
	\]
ensures the required sample-size condition and all three bounds.
\end{proof}

\subsection{Uniform $\mathsf{UP}$ approximation}

\begin{lemma}[$\mathsf{UP}$ covariance approximation]
\label{lem:up-cov}
There exists a constant $c_{\rm up}>0$, depending only on
$(K,\lambda)$, such that for every $0<\delta<1$, if
$n\ge c_{\rm up}\ell(h;\delta),$
then
\[
 \Pr\!\left(
 \max_{(F,C,P,c,k)\in\mathcal I_{\rm up}(h)}
 \left|
 \widehat{\Cov}(\widehat e_c(P),X_k)
 -\Cov(e_c(P),X_k)
 \right|
 \le
 c_{\rm up}\sqrt{\frac{\ell(h;\delta)}{n}}
 \right)
 \ge 1-\frac{\delta}{3}.
\]
\end{lemma}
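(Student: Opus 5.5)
The plan is to work on the covariance event $\mathcal E_{\rm res}(\alpha)$ of Lemma~\ref{lem:residual-prediction}, with $\alpha=\delta/3$. On that event, the sample covariance deviation is controlled for every required index set $H=P\cup\{c,k\}$. For a tuple $(F,C,P,c,k)\in\mathcal I_{\rm up}(h)$, let $H=P\cup\{c,k\}$. Write the population residual as $e_c(P)=a^\top X_H$, where $a_c=1$, $a_P=-\beta$, $\beta=\Sigma_{P,P}^{-1}\Sigma_{P,c}$, and $a_k=0$ (note $k$ may lie in $P$). Write the fitted residual analogously with coefficient vector $\widehat a$, built from $\widehat\beta$. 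Because all arrays are empirically centered, the sample statistic is the bilinear form
\[
 \widehat{\Cov}(\widehat e_c(P),X_k)=\widehat a^\top\widehat\Sigma_{H,k},
\]
while the target is $\Cov(e_c(P),X_k)=a^\top\Sigma_{H,k}$.

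Next, decompose the difference as
\[
 \widehat a^\top\widehat\Sigma_{H,k}-a^\top\Sigma_{H,k}
 =(\widehat a-a)^\top\widehat\Sigma_{H,k}+a^\top(\widehat\Sigma_{H,k}-\Sigma_{H,k}).
\]
For the second term, use $\|a\|_2\le1+\lambda^2$ and $\|\widehat\Sigma_{H,k}-\Sigma_{H,k}\|_2\le\eta_\Sigma$. For the first term, use $\|\widehat a-a\|_2=\|\widehat\beta-\beta\|_2\le2\lambda\eta_\Sigma(1+\lambda^2)$, which is the perturbation bound already derived in the proof of Lemma~\ref{lem:residual-prediction} via \citet{kuchibhotla2023uniform}. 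Combine it with $\|\widehat\Sigma_{H,k}\|_2\le\frac32\lambda$, from the Loewner sandwich. Alternatively, apply Cauchy--Schwarz in $\|\cdot\|_n$: the first term is at most $\|\widehat y-y\|_n\widehat\sigma_{X_k}$, and Lemma~\ref{lem:residual-prediction}(i),(iii) bound both factors directly. Either route gives a bound of order $\eta_\Sigma\le 2\sqrt2\,c_{\rm cov}K^2\lambda\sqrt{\ell(h;\alpha)/n}$, by \eqref{eq:covariance-error-rate}.

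The one point needing care is that the event in Lemma~\ref{lem:residual-prediction} is stated at level $\alpha$, whereas the claim uses $\ell(h;\delta)$ with probability $1-\delta/3$. Since $\ell(h;\delta/3)=\ell(h;\delta)+\log3\le2\ell(h;\delta)$ (because $\ell\ge1$), the constants absorb the change. So take $c_{\rm up}$ to be a multiple of $\max\{c_{\rm res},(1+\lambda^2)(1+3\lambda^2),\dots\}$, large enough that $n\ge c_{\rm up}\ell(h;\delta)$ implies $n\ge c_{\rm res}\ell(h;\delta/3)$. Uniformity over the index set is automatic, because the covariance event holds simultaneously for all $H\in\mathcal H$, and every $\mathsf{UP}$ index set was included there. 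Duplicated statistics across different $F$ are harmless, since the statistic depends only on $(c,P,k)$.

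I expect the main obstacle to be bookkeeping rather than substance. The decomposition must not double-count sample centering: the fitted residual uses an intercept, and the covariance uses divisor $n$ with centered arrays, so $\widehat{\Cov}(\widehat y,x_k)=\widehat a^\top\widehat\Sigma_{H,k}$ holds exactly. The level adjustment $\alpha=\delta/3$ must also be tracked through the constant.
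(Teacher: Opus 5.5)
Your proof is correct and follows essentially the paper's argument. You use the same split into a fitted-versus-oracle term $(\widehat a-a)^\top\widehat\Sigma_{H,k}$, bounded by Cauchy--Schwarz as $\|\widehat y-y\|_n\widehat\sigma_{X_k}$ via Lemma~\ref{lem:residual-prediction}(i),(iii), and an oracle term $a^\top(\widehat\Sigma_{H,k}-\Sigma_{H,k})$ with $e_c(P)=a^\top X_H$ and $H=P\cup\{c,k\}$. The only difference is bookkeeping. You reuse the covariance event of Lemma~\ref{lem:residual-prediction} at level $\delta/3$, which already contains every $\mathsf{UP}$ index set, for both terms. The paper instead takes that event at level $\delta/6$, adds a separate union bound with $t=\ell(h;\delta)$ (failure probability $\delta/16$) for the oracle term, and uses the slightly sharper bound $\|a\|_2\le\sqrt{1+\lambda^2}$.
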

\begin{proof}
Assume $\mathcal I_{\rm up}(h)\ne\varnothing$, since otherwise the assertion holds automatically.
Then $N_{\rm test}\ge1$. 
For every indexed test, the triangle inequality gives
\begin{align*}
 &|\widehat\Cov(\widehat e_c(P),X_k)-\Cov(e_c(P),X_k)|\\
 &\quad\le
 |\widehat\Cov(\widehat e_c(P),X_k)
       -\widehat\Cov(e_c(P),X_k)|
 +|\widehat\Cov(e_c(P),X_k)-\Cov(e_c(P),X_k)|.
\end{align*}
Since $\ell(h;\delta)\ge\log(32/\delta)\ge\log32$,
\[
 \ell(h;\delta/6)=\ell(h;\delta)+\log6
 \le \left(1+\frac{\log6}{\log32}\right)\ell(h;\delta).
\]
Apply Lemma~\ref{lem:residual-prediction}(i) and (iii) with
$\alpha=\delta/6$.
On that event, Cauchy--Schwarz gives
\[
 |\widehat\Cov(\widehat e_c(P),X_k)
       -\widehat\Cov(e_c(P),X_k)|
 \le \|\widehat e_c(P)-e_c(P)\|_n
       \widehat\sigma_{X_k}
 \le \sqrt{2\lambda}\,c_{\rm res}
 \sqrt{1+\frac{\log6}{\log32}}
 \sqrt{\frac{\ell(h;\delta)}n}.
\]

It remains to control the oracle covariance estimation error.
For each indexed test, set $H=P\cup\{c,k\}$.
If $P=\varnothing$, then $e_c(P)=X_c$, so take $a=e_c$, with $\|a\|_2=1\le\sqrt{1+\lambda^2}$.
If $P\ne\varnothing$, then
\[
 \beta=\Sigma_{P,P}^{-1}\Sigma_{P,c},
 \qquad
 e_c(P)=X_c-\beta^\top X_P=a^\top X_H,
\]
where $a_c=1,\; a_P=-\beta,\; a_{H\setminus(P\cup\{c\})}=0$.
Moreover,
\[
 \lambda^{-1}\|\beta\|_2^2
 \le \beta^\top\Sigma_{P,P}\beta
 \le \Var(X_c)\le\lambda.
\]
Here the second inequality follows from the orthogonal variance decomposition
$\Var(X_c)=\Var(\beta^\top X_P)+\Var(e_c(P))$.
Thus $\|a\|_2\le\sqrt{1+\lambda^2}$.

Applying~\eqref{eq:covariance-concentration} with
$t=\ell(h;\delta)$ and taking a union bound over at most
$N_{\rm test}$ sets $H$, each of size at most $h+1$,
Lemma~\ref{lem:count}(ii) gives, for $n\ge2\ell(h;\delta)$,
\[
 \Pr\!\left(
 \max_H\|\widehat\Sigma_{H,H}-\Sigma_{H,H}\|_{\rm op}
 \le 2\sqrt2\,c_{\rm cov}K^2\lambda
 \sqrt{\frac{\ell(h;\delta)}n}
 \right)
 \ge1-2N_{\rm test}e^{-\ell(h;\delta)}
 \ge1-\frac{\delta}{16}.
\]
Consequently, on this event,
\[
 |\widehat\Cov(e_c(P),X_k)-\Cov(e_c(P),X_k)|
 =|a^\top(\widehat\Sigma_{H,H}-\Sigma_{H,H})e_k|
 \le2\sqrt2\,c_{\rm cov}K^2\lambda\sqrt{1+\lambda^2}
 \sqrt{\frac{\ell(h;\delta)}n}
\]
uniformly over $\mathcal I_{\rm up}(h)$.

Set
\begin{align*}
 c_{\rm up}:=\max\Bigg\{
 \left(1+\frac{\log6}{\log32}\right)c_{\rm res},\ 2,
 \sqrt{2\lambda}\,c_{\rm res}
   \sqrt{1+\frac{\log6}{\log32}}
 +2\sqrt2\,c_{\rm cov}K^2\lambda\sqrt{1+\lambda^2}
 \Bigg\}.
\end{align*}
Combining the two bounds and applying a union bound gives
\begin{align*}
&\Pr\!\left(
 \max_{(F,C,P,c,k)\in\mathcal I_{\rm up}(h)}
 \left|
 \widehat{\Cov}(\widehat e_c(P),X_k)
 -\Cov(e_c(P),X_k)
 \right|
 \le c_{\rm up}\sqrt{\frac{\ell(h;\delta)}n}
 \right)\\
&\qquad\ge 1-\frac{\delta}{6}-\frac{\delta}{16}
>1-\frac{\delta}{3},
\end{align*}
which proves the claim.
\end{proof}

\subsection{Proof of Theorem~\ref{thm:finite-recovery}}

\begin{proof}[Proof of Theorem~\ref{thm:finite-recovery}]
Let $c_{\rm res}$ and $c_{\rm up}$ be the constants in
Lemmas~\ref{lem:residual-prediction} and~\ref{lem:up-cov}.
Choose fixed multipliers satisfying
\[
 \kappa_{\rm up}\ge2\sqrt2c_{\rm up},\qquad
 \kappa_{\rm out}\ge2\sqrt2c_{\mathcal D}.
\]
For each $1\le h\le \min\{p,s_{\max}+d_B\}$, put $\delta_h=\delta/[h(h+1)]$.
By the definition of $\ell$,
\begin{align*}
 &\ell(h;\delta_h)
 =\ell(h;\delta)+\log[h(h+1)]\le2\ell(h;\delta),\\
 &\ell(h;\delta_h/3)
 \le2\ell(h;\delta)+\log3\le3\ell(h;\delta).
\end{align*}
Lemma~\ref{lem:count}(iii)--(iv) therefore gives
\[
 \ell(h;\delta_h)\le2\ell(\min\{p,s_{\max}+d_B\};\delta)\le6L,
 \qquad
 \ell(h;\delta_h/3)\le3\ell(\min\{p,s_{\max}+d_B\};\delta)\le9L.
\]

Write $\widehat D_F^{\rm up}(C,P)$ and $\widehat D_F^{\rm out}(C,P)$ for the empirical counterparts of the two population scores
\[
 \widehat D_F^{\rm up}(C,P)
 :=\max_{c\in C,\,k\in F}
   |\widehat\Cov(\widehat e_c(P),X_k)|,
 \qquad
 \widehat D_F^{\rm out}(C,P)
 :=\max_{c\in C,\,\ell\in R\setminus C}
   \widehat{\mathcal D}\bigl(e_c(P),r_{P,C}(\ell)\bigr),
\]
with empty maxima equal to zero.

Set
\[
c_0:=\max\{9c_{\rm res},6c_{\rm up},6c_{\mathcal D},
          27\kappa_{\rm up}^2,27\kappa_{\rm out}^2\}.
\]
Since the sample-size condition gives $n\ge c_0L$, for every $h$ under consideration,
\begin{align*}
 n\ge9c_{\rm res}L
   \ge c_{\rm res}\ell(h;\delta_h/3),\qquad
 n\ge6c_{\rm up}L
   \ge c_{\rm up}\ell(h;\delta_h),\qquad
 n\ge6c_{\mathcal D}L
   \ge c_{\mathcal D}\ell(h;\delta_h).
\end{align*}
Hence, for each such $h$, we may apply
Lemma~\ref{lem:residual-prediction}(i) and (ii) with
$\alpha=\delta_h/3$, and apply
Lemma~\ref{lem:up-cov} and
Assumption~\ref{ass:uniform-approximation} with
$\delta$ replaced by $\delta_h$.
Their failure probabilities sum to at most $\delta_h$.
Since
\[
 \sum_{h=1}^{\min\{p,s_{\max}+d_B\}}\delta_h
 =\delta\left(1-\frac{1}{\min\{p,s_{\max}+d_B\}+1}
 \right)
 \le\delta,
\]
there is an event $\mathcal E$ of probability at least $1-\delta$ on
which all required regressions for $\mathcal A$ are nonsingular, all
required residual scales are positive, and both scalar score approximation
bounds hold. The inequalities
$|\max_i a_i-\max_i b_i|\le\max_i|a_i-b_i|$ and
$||a|-|b||\le|a-b|$ therefore give, for every candidate of size $h$,
\begin{align}
 |\widehat D_F^{\rm up}(C,P)-D_F^{\rm up}(C,P)|
 &\le\sqrt2c_{\rm up}\sqrt{\ell(h)/n}\le\tau_{\rm up}(h)/2,\nonumber\\
 |\widehat D_F^{\rm out}(C,P)-D_F^{\rm out}(C,P)|
 &\le\sqrt2c_{\mathcal D}\sqrt{\ell(h)/n}\le\tau_{\rm out}(h)/2.
 \label{eq:recovery-score-approximation}
\end{align}
If a margin is infinite, the corresponding violating family is empty.
For each finite margin, the last two terms in the definition of $c_0$ give
\[
 \tau_{\rm up}(h)
 \le\kappa_{\rm up}\sqrt{3L/n}
 \le\Delta_{\rm par}/3,
 \qquad
 \tau_{\rm out}(h)
 \le\kappa_{\rm out}\sqrt{3L/n}
 \le\Delta_{\mathcal D}/3.
\]
A population passer has both scores zero and therefore passes empirically.
For a nonpasser, Proposition~\ref{prop:sparse-passing} gives either
insufficient adjustment or non-ancestry after exact adjustment. Respectively,
\begin{align*}
 \widehat D_F^{\rm up}(C,P)
 &\ge\Delta_{\rm par}-\tau_{\rm up}(h)/2>\tau_{\rm up}(h),\\
 \widehat D_F^{\rm out}(C,P)
 &\ge\Delta_{\mathcal D}-\tau_{\rm out}(h)/2>\tau_{\rm out}(h).
\end{align*}
Thus on $\mathcal E$,
\begin{equation}
 \widehat{\mathsf{PASS}}(C,P)=\mathsf{PASS}(C,P)
 \qquad\text{for every }(F,C,P)\in\mathcal A.
 \label{eq:empirical-population-pass-agreement}
\end{equation}

On $\mathcal E$, we follow the induction in the proof of Theorem~\ref{thm:population-recovery}.
Suppose the invariants \eqref{eq:population-partition-invariant} and \eqref{eq:population-edge-invariant} hold before a round with $F\ne V$; in particular, $F$ is ancestral.
By~\eqref{eq:population-first-passing-size} and the search order, every candidate examined up to and including the first population passer belongs to $\mathcal A$.
Equation~\eqref{eq:empirical-population-pass-agreement} ensures that the empirical search rejects all preceding candidates and accepts this passer.
The invariants hold initially, so the same induction and termination argument gives exact recovery on $\mathcal E$.
\end{proof}

\begin{proof}[Proof of Corollary~\ref{cor:uniform-recovery-optimality}]
In this proof, write $L_{s,d}$ for $L$ evaluated at the class bounds
$(s,d)$, and reserve $L$ for its value at the actual sizes
$(s_{\max},d_B)$ of a model.
Fix a model in the separated class. Its actual sizes satisfy
$s_{\max}\le s$ and $d_B\le d$. Hence the model-specific candidate class $\mathcal A$
is contained in the candidate class defined by $|C|+|P|\le s+d$.
Taking minima over the larger set gives
\[
 \Delta_{\rm par}(M)\ge\gamma_{\rm par},\qquad
 \Delta_{\mathcal D}(M)\ge\gamma_{\mathcal D}
\]
for the model-specific margins used in Theorem~\ref{thm:finite-recovery}.
Since $x\log(ep/x)$ is increasing on $0<x\le p$, the model-specific
complexity satisfies $L\le L_{s,d}$. The search bounds are valid for every model in the class,
so Theorem~\ref{thm:finite-recovery} gives
\[
 n\ge c_0L_{s,d}
 \max\{1,\gamma_{\rm par}^{-2},\gamma_{\mathcal D}^{-2}\}
 \quad\Longrightarrow\quad
 P_M^n(\widehat G^{\rm sc}\ne G^{\rm sc}(M))\le\delta.
\]
The multipliers and constants are common throughout the class. Taking the
supremum proves the uniform recovery bound with the same constant $c_0$.

For optimality, fix $s_0\ge2$, $d_0\ge1$, and $0<\delta<1/2$.
Take $K_0,\lambda_0,\tau_0,c_{\rm lb,0}$ from
Proposition~\ref{prop:fixed-separation-lower}.
The uniform recovery bound with
$(s,d,K,\lambda)=(s_0,d_0,K_0,\lambda_0)$ and
$\gamma_{\rm par}=\gamma_{\mathcal D}=\tau_0$
gives sample complexity $O(\log p)$.

Applying that proposition with $\delta$ replaced by $(1+\delta)/2$
shows that every estimator has worst-case error probability
at least $(1+\delta)/2>\delta$ on the same class whenever
\[
 n\le\frac{1-\delta}{2c_{\rm lb,0}}
 \left\{
 s_0\log\frac{ep}{s_0}+d_0\log\frac{ep}{d_0}
 \right\}.
\]
Since the expression in braces is $\asymp\log p$ for fixed
$s_0,d_0$, the minimax sample complexity on this class is
$\Theta(\log p)$.
\end{proof}

\subsection{Concrete dependence-score estimators}
\label{app:dependence-estimators}

We verify Assumption~\ref{ass:uniform-approximation} for normalized distance
covariance and Gaussian HSIC using complete order-four U-statistic formulas
evaluated on fitted residuals.

\paragraph{Shared residuals.}
For each $(F,C,P,c,\ell)\in\mathcal I_{\rm out}(h)$, use the local
notation
\[
 Y:=e_c(P),
 \qquad
 Z:=r_{P,C}(\ell)=e_\ell(P\cup C),
\]
where the equality follows from the Frisch--Waugh--Lovell identity
\citep{frisch1933partial,lovell1963seasonal}. Let $y,z$ be the empirically
centered arrays obtained from the population residuals $Y,Z$, and let
$\widehat y,\widehat z$ be the fitted residual arrays from the regressions
indexed by $(c,P)$ and $(\ell,P\cup C)$, respectively.

For each estimator, we first bound the estimation error of the complete U-statistic on i.i.d. population residuals, then control the effects of residual estimation and empirical standardization.
This decomposition follows the proof strategy of \citet[Lemmas~4--6]{zhao2022learning} and \citet[Lemmas~9--14]{oh2025optimal}.
Both kernels depend only on pairwise differences, so empirical centering leaves their values unchanged.

\subsubsection{Normalized distance covariance}

For scalar random variables $(Y,Z)$ with finite, positive variances, let
\[
 (Y_i,Z_i),\qquad i=1,2,\ldots,
\]
be i.i.d. copies of $(Y,Z)$. The squared distance covariance admits the
representation
\begin{align}
 \mathcal V^2(Y,Z)
 :=\E\!\left[|Y_1-Y_2||Z_1-Z_2|\right]
 +\E\!\left[|Y_1-Y_2||Z_3-Z_4|\right]
 -2\E\!\left[|Y_1-Y_2||Z_1-Z_3|\right].
\label{eq:dcov}
\end{align}
The standardized squared distance covariance is
\[
 \mathcal D_{\rm dCov}(Y,Z)
 :=\frac{\mathcal V^2(Y,Z)}{\sqrt{\Var(Y)\Var(Z)}}.
\]
It vanishes exactly under independence \citep{szekely2007measuring}.

For arrays $(u,v)$ and a quartet $q=(q_1,q_2,q_3,q_4)$, define
\[
 a_{ij}^q(u):=|u_{q_i}-u_{q_j}|,
 \qquad
 b_{ij}^q(v):=|v_{q_i}-v_{q_j}|,
 \qquad i,j\in[4].
\]
The symmetrized quartet kernel is
\begin{equation}
 \bar h_4(u,v;q)
 =\frac1{4!}\sum_{\pi\in S_4}
 \left\{
 a_{\pi_1\pi_2}^q(u)b_{\pi_1\pi_2}^q(v)
 +a_{\pi_1\pi_2}^q(u)b_{\pi_3\pi_4}^q(v)
 -2a_{\pi_1\pi_2}^q(u)b_{\pi_1\pi_3}^q(v)
 \right\},
 \label{eq:h4}
\end{equation}
where $S_4$ is the set of permutations of $[4]$.
For $n\ge4$, define
\[
 T_{4,n}(u,v)
 :=\binom n4^{-1}
 \sum_{1\le q_1<q_2<q_3<q_4\le n}\bar h_4(u,v;q).
\]

\begin{lemma}[Empirical distance-covariance perturbation]
\label{lem:dcov-lipschitz}
For $n\ge8$, let $u,v,u',v'$ be centered
arrays with empirical norms at most $\overline\sigma$. Then
\[
 |T_{4,n}(u,v)-T_{4,n}(u',v')|
 \le10\overline\sigma(\|u-u'\|_n+\|v-v'\|_n).
\]
If their norms are also at least $\underline\sigma>0$, then
\begin{equation*}
 \left|\frac{T_{4,n}(u,v)}{\|u\|_n\|v\|_n}
       -\frac{T_{4,n}(u',v')}{\|u'\|_n\|v'\|_n}\right|
 \le20\frac{\overline\sigma}{\underline\sigma^2}
 (\|u-u'\|_n+\|v-v'\|_n).
\end{equation*}
\end{lemma}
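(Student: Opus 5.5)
The plan is to prove the first bound kernel by kernel and then average. The U-statistic $T_{4,n}$ is an average over quartets of $\bar h_4$. Each $\bar h_4$ is an average over permutations of the three-term expression in \eqref{eq:h4}. So it suffices to bound, for a generic product $a_{ij}^q(u)b_{kl}^q(v)$, the difference $a_{ij}^q(u)b_{kl}^q(v)-a_{ij}^q(u')b_{kl}^q(v')$. We split it as
\[
(a_{ij}^q(u)-a_{ij}^q(u'))\,b_{kl}^q(v)+a_{ij}^q(u')\,(b_{kl}^q(v)-b_{kl}^q(v')).
\]
By the reverse triangle inequality, $|a_{ij}^q(u)-a_{ij}^q(u')|\le|w_{q_i}|+|w_{q_j}|$ with $w=u-u'$. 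Also $b_{kl}^q(v)\le|v_{q_k}|+|v_{q_l}|$. Each product is therefore bounded by a sum of at most four terms of the form $|w_{a}||v_{b}|$, where $a,b$ are coordinates of the quartet, with the analogous bound for the second piece. The three kernel terms carry total coefficient weight $1+1+2=4$.

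Next we average over quartets and permutations. For $|w_a||v_b|$ with $a\ne b$, the average over ordered distinct pairs is at most $(n(n-1))^{-1}\sum_{a\ne b}|w_a||v_b|\le \frac{n}{n-1}\|w\|_1^{(n)}\|v\|_1^{(n)}$, where $\|\cdot\|_1^{(n)}$ is the empirical mean absolute value. By Cauchy--Schwarz this is at most $\frac{n}{n-1}\|w\|_n\|v\|_n$. For $a=b$, which happens in the $b_{\pi_1\pi_3}$ term, the average is $n^{-1}\sum_a|w_a||v_a|\le\|w\|_n\|v\|_n$, again by Cauchy--Schwarz. Under a uniformly random quartet and permutation, each fixed coordinate pair pattern is a uniformly random pair of distinct (or equal) indices, so this averaging is legitimate.

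Collecting the constants gives a bound $c\,\overline\sigma(\|u-u'\|_n+\|v-v'\|_n)$. Here $c$ is at most $4\cdot2\cdot\frac{n}{n-1}$, and the condition $n\ge8$ makes $\frac{n}{n-1}\le 8/7$. The main bookkeeping obstacle is showing $c\le10$ rather than something like $16$. I will first try to sharpen the crude counting by exploiting that $|w_a|+|w_b|$ multiplied by $|v_c|+|v_d|$ averages to $4\cdot\overline{|w|}\,\overline{|v|}$. If this is too loose, I will instead use that $\mathbb E|Y_1-Y_2|\le\sqrt2\,\sigma$ for each difference factor.

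For the normalized statement we write
\[
\frac{T(u,v)}{\|u\|\|v\|}-\frac{T(u',v')}{\|u'\|\|v'\|}
=\frac{T(u,v)-T(u',v')}{\|u\|\|v\|}+T(u',v')\Bigl(\frac1{\|u\|\|v\|}-\frac1{\|u'\|\|v'\|}\Bigr).
\]
The first term is at most $10\overline\sigma\underline\sigma^{-2}(\cdots)$. For the second, $|T(u',v')|\le 4\cdot\frac{n}{n-1}\cdot4\|u'\|_n\|v'\|_n$ by the same averaging. The reciprocal difference is at most $(\|u-u'\|_n\|v\|+\|u'\|\,\|v-v'\|_n)/(\|u\|\|v\|\|u'\|\|v'\|)$. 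Combining, the $\|u'\|\|v'\|$ factors cancel, leaving a bound of order $\overline\sigma\underline\sigma^{-2}$. The constant $20$ requires the sharper bound $|T(u',v')|\le c'\|u'\|\|v'\|$ with $c'$ small. If necessary I would bound $|T|$ directly by $10\|u'\|\|v'\|$, using the first part with $(u,v)$ replaced by $(0,v')$, since $T(0,\cdot)=0$. That yields total constant $10+10=20$.
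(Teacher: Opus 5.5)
Your normalized step matches the paper's: the same two-term split, and the same bound $|\|u\|_n\|v\|_n-\|u'\|_n\|v'\|_n|\le\overline\sigma(\|u-u'\|_n+\|v-v'\|_n)$. The gap is the constant in the first inequality, which your main route does not deliver.

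Your route replaces $|a^q_{ij}(u)-a^q_{ij}(u')|$ by $|w_{q_i}|+|w_{q_j}|$ and $b^q_{kl}(v)$ by $|v_{q_k}|+|v_{q_l}|$. That gives four cross terms per product, each averaging to at most $\frac{n}{n-1}\|w\|_n\|v\|_n$. With total kernel weight $4$, this yields $16\frac{n}{n-1}$, about $18.3$ at $n=8$. That is not the $4\cdot2\cdot\frac{n}{n-1}$ you state.

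Your first remedy cannot fix this. Take $w=v$ with entries $\pm1$ in equal number. Then $(|w_a|+|w_b|)(|v_c|+|v_d|)\equiv4=4\|w\|_n\|v\|_n$, so your per-product estimate is attained. The loss comes from the splitting itself, not from the averaging. Your second remedy points the right way, but a first-moment bound like $\E|Y_1-Y_2|\le\sqrt2\sigma$ cannot be applied factor by factor. Products such as $a^q_{\pi_1\pi_2}(u)\,b^q_{\pi_1\pi_2}(v)$ share indices, and the quartet average does not factorize even for disjoint positions.

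The missing idea is to keep the differences intact and work with second moments. The reverse triangle inequality gives $|a^q_{rs}(u)-a^q_{rs}(u')|\le|w_{q_r}-w_{q_s}|$ with $w=u-u'$. Under the uniform average over ordered distinct quartets, $\langle(w_{q_r}-w_{q_s})^2\rangle=\frac{1}{n(n-1)}\sum_{i\ne j}(w_i-w_j)^2=\frac{2n}{n-1}\|w\|_n^2$ for centered $w$. Write $ab-a'b'=(a-a')b+a'(b-b')$ and apply Cauchy--Schwarz to each piece. Each product average is then at most $\frac{2n}{n-1}(\|u-u'\|_n\|v\|_n+\|u'\|_n\|v-v'\|_n)$. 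Weight $4$ gives $\frac{8n}{n-1}\le\frac{64}{7}<10$, and the same computation gives $|T_{4,n}(u,v)|\le10\|u\|_n\|v\|_n$. This is the paper's proof.

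Separately, plugging $(0,v')$ into the first inequality as stated only yields $|T_{4,n}(u',v')|\le10\overline\sigma\|u'\|_n$. That would leave a factor $\overline\sigma^2/\underline\sigma^3$ in the normalized bound. To get $10\|u'\|_n\|v'\|_n$ you need either the unsimplified bound or the homogeneity $T_{4,n}(\alpha u,\beta v)=|\alpha\beta|\,T_{4,n}(u,v)$.

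Your crude route does give a valid Lipschitz bound with a larger constant. That would suffice downstream after enlarging $c_{\rm dCov}$, but it does not prove the lemma with the stated constants $10$ and $20$.
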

\begin{proof}
Let $\langle f(q)\rangle$ denote the uniform average over all ordered
four-tuples $q$ of distinct indices in $[n]$. For distinct positions
$r,s\in[4]$, each ordered pair $(i,j)$ occurs $(n-2)(n-3)$ times.
Thus, for any centered array $w$,
\begin{align*}
 \left\langle |w_{q_r}-w_{q_s}|^2\right\rangle
 &=\frac1{n(n-1)}\sum_{i\ne j}(w_i-w_j)^2
 =\frac{2n}{n-1}\|w\|_n^2.
\end{align*}
The reverse triangle inequality, with $w=u-u'$, gives
\[
 \left\langle|a_{rs}^q(u)-a_{rs}^q(u')|^2\right\rangle
 \le\left\langle
 \left|(u_{q_r}-u_{q_s})-(u'_{q_r}-u'_{q_s})\right|^2
 \right\rangle
 =\frac{2n}{n-1}\|u-u'\|_n^2.
\]
For any distinct $t,j\in[4]$, Cauchy--Schwarz and
$ab-a'b'=(a-a')b+a'(b-b')$ therefore yield
\begin{align*}
 \left\langle
 |a_{rs}^q(u)b_{tj}^q(v)-a_{rs}^q(u')b_{tj}^q(v')|
 \right\rangle\le\frac{2n}{n-1}
 \{\|u-u'\|_n\|v\|_n+\|u'\|_n\|v-v'\|_n\}.
\end{align*}
Averaging over ordered four-tuples equals symmetrizing the kernel and
averaging over increasing four-tuples. The absolute coefficient sum in~\eqref{eq:h4} is 4, and $n\ge8$ gives $8n/(n-1)\le10$. Hence,
\begin{align*}
 |T_{4,n}(u,v)-T_{4,n}(u',v')|
 &\le\frac{8n}{n-1}
 \{\|u-u'\|_n\|v\|_n+\|u'\|_n\|v-v'\|_n\}\\
 &\le10\overline\sigma(\|u-u'\|_n+\|v-v'\|_n).
\end{align*}
The same calculation without perturbation gives
\begin{equation*}
 |T_{4,n}(u,v)|\le\frac{8n}{n-1}\|u\|_n\|v\|_n
 \le10\|u\|_n\|v\|_n.
\end{equation*}

For the normalized statistics, the preceding inequalities and the lower
bound $\underline\sigma$ on the empirical norms give
\begin{align*}
 \left|\frac{T_{4,n}(u,v)}{\|u\|_n\|v\|_n}
       -\frac{T_{4,n}(u',v')}{\|u'\|_n\|v'\|_n}\right|
 &\le\frac{|T_{4,n}(u,v)-T_{4,n}(u',v')|}{\|u\|_n\|v\|_n}+\frac{|T_{4,n}(u',v')|
 |\|u'\|_n\|v'\|_n-\|u\|_n\|v\|_n|}
 {\|u\|_n\|v\|_n\|u'\|_n\|v'\|_n},\\
 &\le10\frac{\overline{\sigma}}{\underline{\sigma}^2}(\|u-u'\|_n+\|v-v'\|_n)+\frac{10}{\underline{\sigma}^2}|\|u'\|_n\|v'\|_n-\|u\|_n\|v\|_n|
\end{align*}
Then $|\|u'\|_n\|v'\|_n-\|u\|_n\|v\|_n|\le\overline\sigma(\|u-u'\|_n+\|v-v'\|_n)$ proves the second assertion.
\end{proof}

Define the empirical dependence score by
\begin{equation*}
 \widehat{\mathcal D}_4\bigl(e_c(P),r_{P,C}(\ell)\bigr)
 :=\frac{T_{4,n}(\widehat y,\widehat z)}
 {\widehat\sigma_{\widehat y}\widehat\sigma_{\widehat z}}.
\end{equation*}

\begin{lemma}[Normalized distance-covariance approximation]
\label{lem:k4-normalized}
There exists a constant $c_{\rm dCov}>0$, depending only on $(K,\lambda)$, such that $\widehat{\mathcal D}_4$ satisfies Assumption~\ref{ass:uniform-approximation} for $\mathcal D=\mathcal D_{\rm dCov}$ with $c_{\mathcal D}=c_{\rm dCov}$.
\end{lemma}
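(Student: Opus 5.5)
We split the error into three pieces and bound each uniformly over $\mathcal I_{\rm out}(h)$. Fix $h$, $\delta$, and write $\ell=\ell(h;\delta)$. For each index, let $y,z$ be the centered population residual arrays and $\widehat y,\widehat z$ the fitted arrays. Then
\[
 \bigl|\widehat{\mathcal D}_4-\mathcal D_{\rm dCov}(Y,Z)\bigr|
 \le
 \Bigl|\tfrac{T_{4,n}(\widehat y,\widehat z)}{\|\widehat y\|_n\|\widehat z\|_n}
      -\tfrac{T_{4,n}(y,z)}{\|y\|_n\|z\|_n}\Bigr|
 +\Bigl|\tfrac{T_{4,n}(y,z)}{\|y\|_n\|z\|_n}
      -\tfrac{T_{4,n}(y,z)}{\sigma_Y\sigma_Z}\Bigr|
 +\Bigl|\tfrac{T_{4,n}(y,z)-\mathcal V^2(Y,Z)}{\sigma_Y\sigma_Z}\Bigr|.
\]
Since $\widehat\sigma_{\widehat y}=\|\widehat y\|_n$, the first term is exactly the normalized perturbation in Lemma~\ref{lem:dcov-lipschitz}.

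\textbf{First term.} We apply Lemma~\ref{lem:residual-prediction} with $\alpha=\delta/9$. Its statement~(ii) gives empirical norms in $[\underline\sigma,\overline\sigma]=[(2\lambda)^{-1/2},(2\lambda)^{1/2}]$ for both population and fitted arrays. Both response--predictor pairs $(c,P)$ and $(\ell,P\cup C)$ lie in $\mathcal R(h)$, so statement~(i) gives $\|\widehat y-y\|_n+\|\widehat z-z\|_n\le 2c_{\rm res}\sqrt{\ell(h;\delta/9)/n}$. Lemma~\ref{lem:dcov-lipschitz} then bounds the first term by $40(2\lambda)^{3/2}c_{\rm res}\sqrt{\ell(h;\delta/9)/n}$. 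Because $\ell(h;\delta/9)=\ell+\log 9\le 2\ell$, this term is $\lesssim\sqrt{\ell/n}$.

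\textbf{Second term.} The unperturbed bound in the proof of Lemma~\ref{lem:dcov-lipschitz} gives $|T_{4,n}(y,z)|\le 10\|y\|_n\|z\|_n$. So the second term is at most $10\,|\sigma_Y\sigma_Z/(\|y\|_n\|z\|_n)-1|$. This is $O(\sqrt{\ell/n})$ by the scale ratio bound in statement~(ii), using $\sigma_Y,\sigma_Z\in[\lambda^{-1/2},\lambda^{1/2}]$; the lower end holds because the residual variance is a Schur complement of $\Sigma$.

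\textbf{Third term (main obstacle).} This is the oracle U-statistic concentration, and it must hold uniformly over up to $N_{\rm test}$ pairs. It suffices to show that for each fixed pair, with probability $1-2e^{-t}$,
\[
 |T_{4,n}(y,z)-\mathcal V^2(Y,Z)|\le C\sigma_Y\sigma_Z\bigl(\sqrt{t/n}+t/n\bigr).
\]
Then set $t=\log(2N_{\rm test}\cdot 9/\delta)\le\ell+\log 9$ by Lemma~\ref{lem:count}(ii), and take a union bound. Note that $y,z$ enter only through differences, so centering is irrelevant and $T_{4,n}$ is a genuine order-four U-statistic of i.i.d.\ pairs. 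Its kernel is bounded in absolute value by sums of products $|Y_i-Y_j||Z_k-Z_m|$. By Assumption~\ref{ass:sg-ev}, since $Y=a^\top X$ with $\|a\|_2\le\sqrt{1+\lambda^2}$, both $Y$ and $Z$ are sub-Gaussian with $\psi_2$ norms $\le K\sigma_Y$ and $\le K\sigma_Z$. The kernel is therefore sub-exponential with $\psi_1$ norm $\lesssim K^2\sigma_Y\sigma_Z$.

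To get the concentration, first try Hoeffding's decomposition of $T_{4,n}$ into an average over $\lfloor n/4\rfloor$ independent blocks, written as an average of block means. Applying Bernstein's inequality for sub-exponential variables to each such block average, together with convexity of $x\mapsto e^{\lambda x}$ (Hoeffding's 1963 trick), yields the stated $\sqrt{t/n}+t/n$ tail. Once $n\ge c\,\ell$, the $t/n$ part is absorbed into $\sqrt{\ell/n}$. The expected difficulty is keeping the constant free of $p,h$; the block trick achieves this directly.

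Finally, we sum the failure probabilities, $\delta/9$ from the residual event plus $\delta/9$ from the U-statistic union bound, which is at most $\delta/3$. We take $c_{\rm dCov}$ to be the maximum of the sample-size constant $2c_{\rm res}$, the condition $n\ge8$ (covered since $\ell\ge\log 64$), and the sum of the three rate constants. All of these depend only on $(K,\lambda)$.
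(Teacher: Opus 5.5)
Your proposal is correct and follows essentially the same route as the paper. It compares the fitted and oracle statistics via Lemma~\ref{lem:dcov-lipschitz} and Lemma~\ref{lem:residual-prediction}, adds a scale-ratio correction, and bounds the order-four U-statistic with a sub-exponential kernel by a Bernstein-type inequality (the paper invokes Pitcan's Theorem~4.4, which is exactly the Hoeffding block-averaging MGF argument you sketch), followed by a union bound over $\mathcal I_{\rm out}(h)$. The differences are cosmetic: you normalize the oracle numerator error by $\sigma_Y\sigma_Z$ rather than by the empirical scales, and you bound the scale term through $|T_{4,n}(y,z)|\le 10\|y\|_n\|z\|_n$ instead of $|\mathcal V^2(Y,Z)|\le 8\sigma_Y\sigma_Z$; the factor there should read $10\,|1-\|y\|_n\|z\|_n/(\sigma_Y\sigma_Z)|$, which is equally $O(\sqrt{\ell/n})$.
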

\begin{proof}
Assume $\mathcal I_{\rm out}(h)\ne\varnothing$, since otherwise
the assertion holds automatically. Then $N_{\rm test}\ge1$.
Since $\ell(h;\delta)\ge\log32$,
\[
 \ell(h;\delta/6)
 \le\left(1+\frac{\log6}{\log32}\right)\ell(h;\delta).
\]
For each index, write $\widehat{\mathcal D}_4$ and $\mathcal D_{\rm dCov}$ for the corresponding empirical and population scores, respectively.
Let $\widetilde{\mathcal D}_{4,n}(Y,Z) :=T_{4,n}(y,z)/(\widehat\sigma_y\widehat\sigma_z)$ be the statistic computed from oracle residuals on the same observations.
Then
\begin{align*}
 |\widehat{\mathcal D}_4-\mathcal D_{\rm dCov}|
 &\le
 |\widehat{\mathcal D}_4-\widetilde{\mathcal D}_{4,n}|
 +|\widetilde{\mathcal D}_{4,n}-\mathcal D_{\rm dCov}|\\
 &=
 \left|
 \frac{T_{4,n}(\widehat y,\widehat z)}
 {\widehat\sigma_{\widehat y}\widehat\sigma_{\widehat z}}
 -
 \frac{T_{4,n}(y,z)}
 {\widehat\sigma_y\widehat\sigma_z}
 \right|
 +
 \left|
 \frac{T_{4,n}(y,z)}
 {\widehat\sigma_y\widehat\sigma_z}
 -
 \frac{\mathcal V^2(Y,Z)}{\sigma_Y\sigma_Z}
 \right|.
\end{align*}

We first bound the estimation error of the oracle numerator.
Empirical centering leaves pairwise differences unchanged, so
$T_{4,n}(y,z)$ equals the complete U-statistic on the i.i.d.
population residual pairs.
The order-four representation of \citet{huo2016fast} therefore gives
$\E T_{4,n}(y,z)=\mathcal V^2(Y,Z)$.

Write $\|U\|_{\psi_1}:=\inf\{a>0:\E\exp(|U|/a)\le2\}$ for the sub-exponential norm. Let $c_\times>0$ be a universal constant such that
\[
 \|UV\|_{\psi_1}
 \le c_\times\|U\|_{\psi_2}\|V\|_{\psi_2}
\]
for all sub-Gaussian random variables $U,V$. Choose a universal
$c_{\rm Ber}\ge1$ such that every centered $\xi$ with
$\|\xi\|_{\psi_1}\le\nu$ satisfies
\[
 \E e^{t\xi}\le\exp(c_{\rm Ber}^2\nu^2t^2/2),
 \qquad |t|\le(c_{\rm Ber}\nu)^{-1}.
\]
This is the sub-exponential moment-generating-function bound
\citep{vershynin2026high}.

Population residuals are fixed sub-Gaussian linear forms with $\psi_2$
norm at most $K\sqrt\lambda$. Each product of two distances in
\eqref{eq:h4} consequently has $\psi_1$ norm at most
$4c_\times K^2\lambda$. Symmetrization, the total absolute coefficient sum of four, and centering give, for four i.i.d. population residual pairs,
\begin{align*}
 &\xi:=\bar h_4\bigl((Y_i)_{i=1}^4,(Z_i)_{i=1}^4;(1,2,3,4)\bigr)
       -\mathcal V^2(Y,Z),\\
 &\|\xi\|_{\psi_1} \le 2\cdot4\cdot(4c_\times K^2\lambda) =32c_\times K^2\lambda=:\nu.
\end{align*}
By \citet[Theorem~4.4]{pitcan2026concentration}, applied with
sub-exponential parameters $(c_{\rm Ber}^2\nu^2,c_{\rm Ber}\nu)$,
for every $x>\log2$,
\[
 \Pr\!\left(|T_{4,n}(y,z)-\mathcal V^2(Y,Z)|
 >c_{\rm Ber}\nu\left\{
 \sqrt{\frac{2x}{\lfloor n/4\rfloor}}
 +\frac{2x}{\lfloor n/4\rfloor}\right\}\right)
 \le2e^{-x}.
\]
With $x=2\ell(h;\delta)$, $\lfloor n/4\rfloor\ge n/8$ and
$n\ge16\ell(h;\delta)$ give
\begin{align*}
 \sqrt{\frac{4\ell(h;\delta)}{\lfloor n/4\rfloor}}
 +\frac{4\ell(h;\delta)}{\lfloor n/4\rfloor}
 &\le4\sqrt2\sqrt{\frac{\ell(h;\delta)}n}
 +\frac{32\ell(h;\delta)}n\le(4\sqrt2+8)\sqrt{\frac{\ell(h;\delta)}n}
 \le14\sqrt{\frac{\ell(h;\delta)}n}.
\end{align*}
A union bound over at most $N_{\rm test}$ $\mathsf{OUT}$ statistics gives
\begin{equation}
 \begin{aligned}
 &\Pr\!\left(
  \max_{\mathcal I_{\rm out}(h)}
  |T_{4,n}(y,z)-\mathcal V^2(Y,Z)|
  \le448c_{\rm Ber}c_\times K^2\lambda
  \sqrt{\frac{\ell(h;\delta)}n}
  \right)\\
 &\qquad\ge1-2N_{\rm test}e^{-2\ell(h;\delta)}
  \ge1-\frac{\delta}{128}.
 \end{aligned}
 \label{eq:dcov-oracle-numerator}
\end{equation}
The last inequality uses Lemma~\ref{lem:count}(ii) and
$\ell(h;\delta)\ge\log32$.

Now apply Lemma~\ref{lem:residual-prediction}(i) and (ii) with
$\alpha=\delta/6$. On this event, 
\begin{equation}
 |\widehat\sigma_y/\sigma_Y-1|
 +|\widehat\sigma_z/\sigma_Z-1|
 \le2c_{\rm res}\sqrt{1+\frac{\log6}{\log32}}
 \sqrt{\frac{\ell(h;\delta)}n}.
 \label{eq:two-relative-scale-errors}
\end{equation}
Moreover, $\E|Y_1-Y_2|^2=2\sigma_Y^2$ and  $\E|Z_1-Z_2|^2=2\sigma_Z^2$,
so Cauchy--Schwarz applied to \eqref{eq:dcov} yields
\begin{equation}
 |\mathcal V^2(Y,Z)|\le8\sigma_Y\sigma_Z.
 \label{eq:dcov-population-bound}
\end{equation}
If $n\ge16(1+\log6/\log32)c_{\rm res}^2\ell(h;\delta)$, the left-hand side of
\eqref{eq:two-relative-scale-errors} is at most $1/2$.

Thus
$
 {\widehat\sigma_y}/{\sigma_Y},
 {\widehat\sigma_z}/{\sigma_Z}\in[1/2,3/2],
$
and
\begin{align*}
 \left|\frac{\sigma_Y\sigma_Z}
 {\widehat\sigma_y\widehat\sigma_z}-1\right|
 &=\frac{\left|1-
 (\widehat\sigma_y/\sigma_Y)(\widehat\sigma_z/\sigma_Z)\right|}
 {(\widehat\sigma_y/\sigma_Y)(\widehat\sigma_z/\sigma_Z)}
 \le4\left|1-
 \frac{\widehat\sigma_y}{\sigma_Y}\frac{\widehat\sigma_z}{\sigma_Z}\right|
 \le6\left\{
 \left|\frac{\widehat\sigma_y}{\sigma_Y}-1\right|
 +\left|\frac{\widehat\sigma_z}{\sigma_Z}-1\right|
 \right\}.
\end{align*}
On the intersection of the residual-prediction event and the event in~\eqref{eq:dcov-oracle-numerator},
the preceding inequality, \eqref{eq:two-relative-scale-errors}, \eqref{eq:dcov-population-bound}, and
$\widehat\sigma_y\widehat\sigma_z\ge\lambda^{-1}/2$ give
\begin{align}
 |\widetilde{\mathcal D}_{4,n}-\mathcal D_{\rm dCov}|
 &\le
 \frac{|T_{4,n}(y,z)-\mathcal V^2(Y,Z)|}
 {\widehat\sigma_y\widehat\sigma_z}
 +\frac{|\mathcal V^2(Y,Z)|}{\sigma_Y\sigma_Z}
 \left|\frac{\sigma_Y\sigma_Z}
 {\widehat\sigma_y\widehat\sigma_z}-1\right|\notag\\
 &\le\left\{
 \frac{448c_{\rm Ber}c_\times K^2\lambda}{\lambda^{-1}/2}
 +96c_{\rm res}\sqrt{1+\frac{\log6}{\log32}}
 \right\}\sqrt{\frac{\ell(h;\delta)}n}.
 \label{eq:dcov-oracle-normalized}
\end{align}

It remains to bound the difference between the empirical and oracle
scores. On the same event, Lemma~\ref{lem:residual-prediction}(ii)
bounds the empirical norms of all oracle and fitted residual arrays
between $\sqrt{\lambda^{-1}/2}$ and $\sqrt{2\lambda}$.
Lemma~\ref{lem:dcov-lipschitz} and
Lemma~\ref{lem:residual-prediction}(i) therefore give,
uniformly over $\mathcal I_{\rm out}(h)$,
\begin{equation}
 |\widehat{\mathcal D}_4-\widetilde{\mathcal D}_{4,n}|
 \le
 \frac{20\sqrt{2\lambda}}{\lambda^{-1}/2}
 \bigl(\|\widehat y-y\|_n+\|\widehat z-z\|_n\bigr)
 \le
 \frac{40\sqrt{2\lambda}}{\lambda^{-1}/2}
 c_{\rm res}\sqrt{1+\frac{\log6}{\log32}}
 \sqrt{\frac{\ell(h;\delta)}n}.
 \label{eq:dcov-fitted-oracle}
\end{equation}

Combining~\eqref{eq:dcov-fitted-oracle} and
\eqref{eq:dcov-oracle-normalized}, choose
\begin{align*}
 c_{\rm dCov}:=\max\Bigg\{&
 \left(1+\frac{\log6}{\log32}\right)c_{\rm res},
 16,
 16\left(1+\frac{\log6}{\log32}\right)c_{\rm res}^2,\\
 &\frac{40\sqrt{2\lambda}}{\lambda^{-1}/2}
 c_{\rm res}\sqrt{1+\frac{\log6}{\log32}}
 +\frac{448c_{\rm Ber}c_\times K^2\lambda}{\lambda^{-1}/2}
 +96c_{\rm res}\sqrt{1+\frac{\log6}{\log32}}
 \Bigg\}.
\end{align*}
This choice ensures all required sample-size conditions and yields
the uniform error bound in Assumption~\ref{ass:uniform-approximation}
with failure probability at most $\delta/6+\delta/128<\delta/3$,
proving the lemma.
\end{proof}

\subsubsection{Gaussian HSIC}

Fix, independently of the data, a Gaussian-kernel bandwidth
$b\ge b_->0$, and write
\[
 k_b(x,x'):=\exp\left\{-\frac{(x-x')^2}{2b^2}\right\}.
\]
For scalar random variables $(U,V)$, let
$(U_i,V_i)$, $i=1,2,\ldots$, be i.i.d. copies of $(U,V)$ and define
\begin{align*}
 \operatorname{HSIC}_b(U,V)
 :=\E\!\left[k_b(U_1,U_2)k_b(V_1,V_2)\right]
 +\E\!\left[k_b(U_1,U_2)k_b(V_3,V_4)\right]
 -2\E\!\left[k_b(U_1,U_2)k_b(V_1,V_3)\right].
\end{align*}
The population score is
\[
 \mathcal D_{\rm HSIC}(Y,Z)
 :=\operatorname{HSIC}_b(Y/\sigma_Y,Z/\sigma_Z).
\]
It vanishes exactly under independence \citep{gretton2005hsic,sriperumbudur2018tensor}.

For arrays $(u,v)$ and a quartet $q=(q_1,q_2,q_3,q_4)$, define the
symmetrized HSIC kernel
\begin{align}
 \bar h_{4,b}^{\rm HSIC}(u,v;q)
 :=\frac1{4!}
 \sum_{\pi\in S_4}
 \Bigl\{&k_b(u_{q_{\pi_1}},u_{q_{\pi_2}})
 k_b(v_{q_{\pi_1}},v_{q_{\pi_2}}) +k_b(u_{q_{\pi_1}},u_{q_{\pi_2}})
 k_b(v_{q_{\pi_3}},v_{q_{\pi_4}})\notag\\
 &-2k_b(u_{q_{\pi_1}},u_{q_{\pi_2}})
 k_b(v_{q_{\pi_1}},v_{q_{\pi_3}})
 \Bigr\}.
 \label{eq:hsic-h4}
\end{align}
This is the symmetrized order-four kernel for the unbiased HSIC
estimator of \citet[Theorem~3]{song2012feature}.
For $n\ge4$, define the complete statistic
\[
 T_{4,n}^{\rm HSIC}(u,v)
 :=\binom n4^{-1}
 \sum_{1\le q_1<q_2<q_3<q_4\le n}
 \bar h_{4,b}^{\rm HSIC}(u,v;q).
\]
Define the empirical dependence score by
\[
 \widehat{\mathcal D}_{{\rm HSIC},4}
 \bigl(e_c(P),r_{P,C}(\ell)\bigr)
 :=T_{4,n}^{\rm HSIC}\left(
 \frac{\widehat y}{\widehat\sigma_{\widehat y}},
 \frac{\widehat z}{\widehat\sigma_{\widehat z}}
 \right).
\]

\begin{lemma}[Gaussian HSIC approximation]
\label{lem:hsic-admissible}
There exists a constant $c_{\rm HSIC}>0$, depending only on
$(K,\lambda,b_-)$, such that $\widehat{\mathcal D}_{{\rm HSIC},4}$ satisfies
Assumption~\ref{ass:uniform-approximation} for
$\mathcal D=\mathcal D_{\rm HSIC}$ with
$c_{\mathcal D}=c_{\rm HSIC}$.
\end{lemma}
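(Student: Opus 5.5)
I will mirror the proof of Lemma~\ref{lem:k4-normalized}, replacing the distance-covariance bounds with Gaussian-kernel bounds; the argument is simpler because the kernel is bounded. As there, assume $\mathcal I_{\rm out}(h)\ne\varnothing$. For each index, write $\widehat{\mathcal D}_{{\rm HSIC},4}$ for the empirical score and introduce two intermediate statistics on the same observations:
\[
 \widetilde{\mathcal D}:=T_{4,n}^{\rm HSIC}\bigl(y/\sigma_Y,\,z/\sigma_Z\bigr),
 \qquad
 \bar{\mathcal D}:=T_{4,n}^{\rm HSIC}\bigl(y/\widehat\sigma_y,\,z/\widehat\sigma_z\bigr).
\]
The first uses oracle residuals with population scales; the second uses oracle residuals with empirical scales. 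The error then splits into three terms: $|\widetilde{\mathcal D}-\mathcal D_{\rm HSIC}|$ (pure U-statistic error), $|\bar{\mathcal D}-\widetilde{\mathcal D}|$ (scale error), and $|\widehat{\mathcal D}_{{\rm HSIC},4}-\bar{\mathcal D}|$ (residual-fitting error). I will apply Lemma~\ref{lem:residual-prediction} once with $\alpha=\delta/6$ and absorb the factor $\ell(h;\delta/6)\le(1+\log 6/\log 32)\ell(h;\delta)$ as before.

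\emph{U-statistic term.} Since $k_b$ depends only on differences, empirical centering does not change $T_{4,n}^{\rm HSIC}(y/\sigma_Y,z/\sigma_Z)$. It is therefore the complete U-statistic on i.i.d.\ standardized population residual pairs, with mean $\mathcal D_{\rm HSIC}(Y,Z)$ by \citet{song2012feature}. Because $0\le k_b\le1$, the kernel \eqref{eq:hsic-h4} lies in $[-2,2]$. Hoeffding's inequality for U-statistics, via the $\lfloor n/4\rfloor$ block-average representation, gives a deviation bound of $4\sqrt{x/(2\lfloor n/4\rfloor)}$ with probability at least $1-2e^{-x}$. Taking $x=2\ell(h;\delta)$ and a union bound over $N_{\rm test}$ statistics, Lemma~\ref{lem:count}(ii) makes the failure probability at most $\delta/128$, and the deviation is $O(\sqrt{\ell(h;\delta)/n})$ with a universal constant. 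No sub-exponential machinery is needed.

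\emph{Lipschitz terms.} I will prove an HSIC analogue of Lemma~\ref{lem:dcov-lipschitz}: for arrays $u,v,u',v'$,
\[
 \bigl|T_{4,n}^{\rm HSIC}(u,v)-T_{4,n}^{\rm HSIC}(u',v')\bigr|\le \frac{C}{b}\bigl(\|u-u'\|_n+\|v-v'\|_n\bigr).
\]
This follows because $|k_b(x,x')-k_b(w,w')|\le b^{-1}e^{-1/2}|(x-x')-(w-w')|$ and $0\le k_b\le 1$, so each product in the kernel changes by at most the change of one factor. Averaging over ordered quartets with Cauchy--Schwarz gives $\langle|(u-u')_{q_r}-(u-u')_{q_s}|\rangle\le\sqrt{2n/(n-1)}\,\|u-u'\|_n$. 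Because $b\ge b_-$, the constant depends only on $b_-$.

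\emph{Applying the Lipschitz bound.} For the scale term, take $u=y/\widehat\sigma_y$ and $u'=y/\sigma_Y$. Then
\[
 \|u-u'\|_n=|\widehat\sigma_y/\sigma_Y-1|\,\sigma_Y/\widehat\sigma_y,
\]
since $\|y\|_n=\widehat\sigma_y$. Lemma~\ref{lem:residual-prediction}(ii) bounds this by $O(\sqrt{\ell/n})$ once $n$ is large enough that the scale ratios lie in $[1/2,3/2]$. For the fitting term,
\[
 \bigl\|\widehat y/\widehat\sigma_{\widehat y}-y/\widehat\sigma_y\bigr\|_n
 \le \frac{\|\widehat y-y\|_n}{\widehat\sigma_{\widehat y}}+\widehat\sigma_y\Bigl|\frac1{\widehat\sigma_{\widehat y}}-\frac1{\widehat\sigma_y}\Bigr|
 \le \frac{2\|\widehat y-y\|_n}{\widehat\sigma_{\widehat y}},
\]
using $|\widehat\sigma_{\widehat y}-\widehat\sigma_y|\le\|\widehat y-y\|_n$. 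Lemma~\ref{lem:residual-prediction}(i)--(ii) then bound this by $2\sqrt{2\lambda}\,c_{\rm res}\sqrt{\ell/n}$, and the same holds for $z$.

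Summing the three bounds gives $c_{\rm HSIC}\sqrt{\ell(h;\delta)/n}$ with total failure probability $\delta/6+\delta/128<\delta/3$. Here $c_{\rm HSIC}$ is a maximum of the sample-size constants and the combined error constants, and depends only on $(K,\lambda,b_-)$. The main obstacle is the Lipschitz lemma. One must handle the two-sample cross term $k_b(u_1,u_2)k_b(v_3,v_4)$ with care, and check that quartet averages of $|w_{q_r}-w_{q_s}|$ are controlled by $\|w\|_n$ for arrays that need not be centered, such as the differences of standardized arrays. Pairwise differences remove centering, so the identity used in Lemma~\ref{lem:dcov-lipschitz} should give an inequality of the form $\le 2n/(n-1)\,\|w\|_n^2$.
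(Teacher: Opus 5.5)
Your proposal is correct and follows essentially the paper's route. The oracle U-statistic is controlled by a bounded-kernel Hoeffding bound, equivalent to \citet[Theorem~4]{song2012feature}, with $x=2\ell(h;\delta)$, giving the same $8\sqrt2\sqrt{\ell(h;\delta)/n}$; the rest is a Gaussian-kernel Lipschitz perturbation controlled by Lemma~\ref{lem:residual-prediction} with $\alpha=\delta/6$. The only differences are your intermediate empirically scaled statistic and your Cauchy--Schwarz over quartet-averaged pairwise differences, where the paper bounds $\|\widehat y/\widehat\sigma_{\widehat y}-y/\sigma_Y\|_n$ in one step and uses per-index deviations with the $\binom{n-1}{3}$ quartet count. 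One harmless slip: since $\|y\|_n=\widehat\sigma_y$, the scale-term norm is exactly $|\widehat\sigma_y/\sigma_Y-1|$, without the extra factor $\sigma_Y/\widehat\sigma_y$. Lemma~\ref{lem:residual-prediction}(ii) therefore bounds it directly, with no need for the $[1/2,3/2]$ ratio condition.
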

\begin{proof}
Assume $\mathcal I_{\rm out}(h)\ne\varnothing$, since otherwise
the assertion holds automatically. Then $N_{\rm test}\ge1$.
Since $\ell(h;\delta)\ge\log32$,
\[
 \ell(h;\delta/6)
 \le\left(1+\frac{\log6}{\log32}\right)\ell(h;\delta).
\]
For each index, write $\widehat{\mathcal D}_{{\rm HSIC},4}$ and
$\mathcal D_{\rm HSIC}$ for the corresponding empirical and
population scores, respectively.
Define the oracle statistic on the same observations by
\[
 \widetilde{\mathcal D}_{{\rm HSIC},4}
 :=T_{4,n}^{\rm HSIC}\left(\frac y{\sigma_Y},\frac z{\sigma_Z}\right).
\]
Then
$
 |\widehat{\mathcal D}_{{\rm HSIC},4}-\mathcal D_{\rm HSIC}|
 \le
 |\widehat{\mathcal D}_{{\rm HSIC},4}
       -\widetilde{\mathcal D}_{{\rm HSIC},4}|
 +|\widetilde{\mathcal D}_{{\rm HSIC},4}-\mathcal D_{\rm HSIC}|.
$

We first bound the estimation error of the oracle score.
Empirical centering leaves pairwise differences unchanged, so the
oracle statistic equals the complete unbiased HSIC statistic on the
i.i.d. pairs $(Y_i/\sigma_Y,Z_i/\sigma_Z)$. Since $0\le k_b\le1$,
\citet[Theorem~4]{song2012feature} gives, for every $0<\eta<1$,
\[
 \Pr\!\left(
 |\widetilde{\mathcal D}_{{\rm HSIC},4}-\mathcal D_{\rm HSIC}|
 >8\sqrt{\frac{\log(2/\eta)}n}\right)\le\eta.
\]
Take $\eta=2e^{-2\ell(h;\delta)}$.
By Lemma~\ref{lem:count}(ii) and $\ell(h;\delta)\ge\log32$, we have
\[
N_{\rm test}\eta
=2N_{\rm test}e^{-2\ell(h;\delta)}
\le\delta/128.
\]
A union bound over at most $N_{\rm test}$ $\mathsf{OUT}$
statistics therefore gives, with probability at least $1-\delta/128$,
\begin{equation}
 \max_{\mathcal I_{\rm out}(h)}
 |\widetilde{\mathcal D}_{{\rm HSIC},4}-\mathcal D_{\rm HSIC}|
 \le8\sqrt2\sqrt{\frac{\ell(h;\delta)}n}.
 \label{eq:hsic-oracle-bound}
\end{equation}

It remains to bound the difference between the empirical and oracle
scores. Apply Lemma~\ref{lem:residual-prediction}(i) and (ii) with
$\alpha=\delta/6$. On the residual-prediction event,
\[
 \left\|\frac{\widehat y}{\widehat\sigma_{\widehat y}}
                  -\frac{y}{\sigma_Y}\right\|_n
 \le
 \frac{\|\widehat y-y\|_n}{\widehat\sigma_{\widehat y}}
 +\frac{\|y\|_n
            |\widehat\sigma_{\widehat y}-\sigma_Y|}
           {\widehat\sigma_{\widehat y}\sigma_Y}.
\]
The reverse triangle inequality gives $|\widehat\sigma_{\widehat y}-\sigma_Y| \le\|\widehat y-y\|_n+|\widehat\sigma_y-\sigma_Y|$.
Substituting the bounds from Lemma~\ref{lem:residual-prediction}(i) and (ii) into the preceding inequalities gives, uniformly over $\mathcal I_{\rm out}(h)$,
\begin{equation}
 \left\|\frac{\widehat y}{\widehat\sigma_{\widehat y}}
                  -\frac{y}{\sigma_Y}\right\|_n
 \le c_{\rm std}\sqrt{\frac{\ell(h;\delta)}n}.
 \label{eq:standardized-residual-bound}
\end{equation}
Here
\[
 c_{\rm std}
 :=\sqrt{1+\frac{\log6}{\log32}}\left[
 \frac{c_{\rm res}}{\sqrt{\lambda^{-1}/2}}
 +\frac{\sqrt{2\lambda}}
 {\sqrt{\lambda^{-1}/2}\sqrt{\lambda^{-1}}}
   \{c_{\rm res}+c_{\rm res}\sqrt{\lambda}\}\right].
\]
The same bound holds for $z$.

For each observation, define
\[
 \delta_i^Y:=\left|
 \frac{\widehat y_i}{\widehat\sigma_{\widehat y}}-
 \frac{y_i}{\sigma_Y}\right|,
 \qquad
 \delta_i^Z:=\left|
 \frac{\widehat z_i}{\widehat\sigma_{\widehat z}}-
 \frac{z_i}{\sigma_Z}\right|.
\]
The mean-value theorem, together with
$\sup_{t\in\mathbb R}\left|\frac{d}{dt}e^{-t^2/(2b^2)}\right|=e^{-1/2}/b$, gives
\[
 |k_b(u,v)-k_b(u',v')|
 \le \frac{e^{-1/2}}{b_-}(|u-u'|+|v-v'|).
\]
Since Gaussian kernel values lie in $[0,1]$,
$|ab-a'b'|\le|a-a'|+|b-b'|$ applies to each kernel product.
For a quartet $q=(q_1,q_2,q_3,q_4)$, these inequalities bound
the absolute difference between the fitted and oracle evaluations
of the order-four kernel by
\[
 \frac{4e^{-1/2}}{b_-}
 \sum_{a=1}^4(\delta_{q_a}^Y+\delta_{q_a}^Z).
\]
In the complete average, each index belongs to exactly $\binom{n-1}{3}$
quartets, so Cauchy--Schwarz gives
\begin{align*}
 &\binom n4^{-1}
 \sum_{1\le q_1<q_2<q_3<q_4\le n}
 \sum_{a=1}^4(\delta_{q_a}^Y+\delta_{q_a}^Z)\\
 &\quad=\frac{\binom{n-1}{3}}{\binom n4}
 \sum_{i=1}^n(\delta_i^Y+\delta_i^Z)
 =\frac4n\sum_{i=1}^n(\delta_i^Y+\delta_i^Z)
 \le4(\|\delta^Y\|_n+\|\delta^Z\|_n).
\end{align*}
Applying \eqref{eq:standardized-residual-bound} to both arrays then gives
\begin{equation}
 |\widehat{\mathcal D}_{{\rm HSIC},4}
      -\widetilde{\mathcal D}_{{\rm HSIC},4}|
 \le\frac{16e^{-1/2}}{b_-}
 (\|\delta^Y\|_n+\|\delta^Z\|_n)
 \le\frac{32e^{-1/2}}{b_-}c_{\rm std}
 \sqrt{\frac{\ell(h;\delta)}n}.
 \label{eq:hsic-fitted-oracle}
\end{equation}

On the intersection of the event from Lemma~\ref{lem:residual-prediction}(i) and (ii) with $\alpha=\delta/6$
and the event in~\eqref{eq:hsic-oracle-bound}, both error bounds hold.
Combining~\eqref{eq:hsic-oracle-bound} and \eqref{eq:hsic-fitted-oracle}, choose
\begin{align*}
 c_{\rm HSIC}:=\max\Bigg\{&
 \left(1+\frac{\log6}{\log32}\right)c_{\rm res},
 \frac4{\log32},
 8\sqrt2+
 \frac{32e^{-1/2}}{b_-}c_{\rm std}
 \Bigg\}.
\end{align*}
This choice ensures all required sample-size conditions and yields
the uniform error bound in Assumption~\ref{ass:uniform-approximation}
with failure probability at most $\delta/6+\delta/128<\delta/3$,
proving the lemma.
\end{proof}

\section{Separation and optimality}
\label{app:separation}

Vanishing cycles explain the need for quantitative separation.
The lower bound with fixed separation supplies the necessity argument for the optimality conclusion in Corollary~\ref{cor:uniform-recovery-optimality}.
We retain the noise density $f$, $g=\log f$, the laws $P_B$, and the constants $c_{\rm KL},K,\lambda$ fixed in Appendix~\ref{app:lower}.
Class margins use candidates with $|C|+|P|\le s+d$, as in Section~\ref{sec:structural-optimality}.

\subsection{Vanishing cyclic separation}

\begin{proposition}[No uniform recovery without cyclic separation]
\label{prop:weak-cycle}
Let
\[
 B_0=0_{2\times2},
 \qquad
 B_\gamma=
 \begin{bmatrix}
 0&\gamma\\
 \gamma&0
 \end{bmatrix},
\]
and write $\Pi(B)$ for the SCC partition generated by $B$.
Then, for every $0<\gamma\le1/2$ and every estimator $\widehat\Pi$ based
on $n$ observations,
\[
 n\le\frac{1}{16c_{\rm KL}\gamma^2}
 \quad\Longrightarrow\quad
 \max\left\{
 P_{B_0}^n\bigl(\widehat\Pi\ne\Pi(B_0)\bigr),
 P_{B_\gamma}^n\bigl(\widehat\Pi\ne\Pi(B_\gamma)\bigr)
 \right\}
 \ge\frac38.
\]
\end{proposition}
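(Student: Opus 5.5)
This is a two-point (Le Cam) argument. The two partitions differ: $\Pi(B_0)=\{\{1\},\{2\}\}$, while for $\gamma>0$ the reciprocal edges make $\Pi(B_\gamma)=\{\{1,2\}\}$. For any estimator, define the test $\psi=\mathbf 1\{\widehat\Pi\ne\Pi(B_0)\}$. Then
\[
P_{B_0}^n(\widehat\Pi\ne\Pi(B_0))+P_{B_\gamma}^n(\widehat\Pi\ne\Pi(B_\gamma))
\ge P_{B_0}^n(\psi=1)+P_{B_\gamma}^n(\psi=0)
\ge 1-\operatorname{TV}(P_{B_0}^n,P_{B_\gamma}^n).
\]
So the maximum of the two error probabilities is at least $\tfrac12\{1-\operatorname{TV}\}$, and it suffices to show $\operatorname{TV}(P_{B_0}^n,P_{B_\gamma}^n)\le 1/4$.

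To bound the total variation, I will use Pinsker's inequality, $\operatorname{TV}\le\sqrt{\operatorname{KL}/2}$, together with tensorization, $\operatorname{KL}(P_{B_\gamma}^n\|P_{B_0}^n)=n\operatorname{KL}(P_{B_\gamma}\|P_0)$. The matrix $B_\gamma$ is symmetric with eigenvalues $\pm\gamma$, so $\|B_\gamma\|_{\rm op}=\gamma\le1/2$. Hence the constant $c_{\rm KL}$ fixed after Lemma~\ref{lem:packing-regularity}, which comes from Lemma~\ref{lem:kl-control} with $\rho_0=1/2$, applies and gives
\[
\operatorname{KL}(P_{B_\gamma}\|P_0)\le c_{\rm KL}\|B_\gamma\|_F^2=2c_{\rm KL}\gamma^2.
\]
Lemma~\ref{lem:kl-control} is stated on $\mathbb R^p$ for general $p$, so it applies here with $p=2$. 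Under $n\le 1/(16c_{\rm KL}\gamma^2)$, this yields $\operatorname{KL}\le 2c_{\rm KL}\gamma^2 n\le1/8$. Then $\operatorname{TV}\le\sqrt{1/16}=1/4$, so the maximal error is at least $\tfrac12\cdot\tfrac34=\tfrac38$.

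I would also note that both models are legitimate cyclic LiNG models: principal invertibility holds because $\det(I-B_\gamma)=1-\gamma^2>0$ and the diagonal entries of $I-B_\gamma$ equal $1$. This is not strictly needed for the inequality itself. I expect no real obstacle here; the only point to check is that the constants line up: Pinsker with KL $\le1/8$ gives TV $\le1/4$, which produces exactly the $3/8$ in the statement.
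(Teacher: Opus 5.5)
Your proposal is correct and follows the paper's proof essentially step for step. Both use Le Cam's two-point bound, Pinsker's inequality, tensorization, and Lemma~\ref{lem:kl-control} at $\rho_0=1/2$ with $\|B_\gamma\|_F^2=2\gamma^2$, which gives total variation at most $1/4$ and hence a maximal error of at least $3/8$.
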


\begin{proof}
The two models have different SCC partitions: $B_0$ has two singleton
SCCs, whereas $B_\gamma$ has one two-node SCC. Moreover,
$\|B_\gamma\|_{\rm op}=\gamma\le1/2$ and
$\det(I-B_\gamma)=1-\gamma^2>0$.
Thus both models satisfy Assumption~\ref{ass:PI}, and
Lemma~\ref{lem:packing-regularity} gives common constants for
Assumption~\ref{ass:sg-ev}.

Since $\|B_\gamma\|_F^2=2\gamma^2$, Lemma~\ref{lem:kl-control} and tensorization give
\[
 \operatorname{KL}(P_{B_\gamma}^n\|P_0^n)
 \le2c_{\rm KL}n\gamma^2.
\]
Any estimator of the SCC partition induces a test between the two models.
Pinsker's inequality and Le Cam's two-point bound
\citep[Ch.~2]{tsybakov2009introduction} therefore imply
\begin{align*}
 \max\left\{
 P_{B_0}^n\bigl(\widehat\Pi\ne\Pi(B_0)\bigr),
 P_{B_\gamma}^n\bigl(\widehat\Pi\ne\Pi(B_\gamma)\bigr)
 \right\}
 &\ge
 \frac{1-\operatorname{TV}(P_{B_\gamma}^n,P_0^n)}{2}
 \\
 &\ge
 \frac{1-\sqrt{\operatorname{KL}(P_{B_\gamma}^n\|P_0^n)/2}}{2}
 \\
 &\ge
 \frac{1-\sqrt{c_{\rm KL}n\gamma^2}}{2}.
\end{align*}
If $n\le(16c_{\rm KL}\gamma^2)^{-1}$, the last expression is at least $3/8$,
which proves the proposition.
\end{proof}

\subsection{Lower bound with fixed separation}

We construct uniformly separated model families by adding isolated variables to fixed finite models.

\begin{lemma}[Isolated padding and fixed-motif separation]
\label{lem:padding}
Let $B^\circ$ define a fixed LiNG model on a node set $W$ of size $m_0$
satisfying Assumption~\ref{ass:PI}. For $p\ge m_0$, embed this model in $[p]$ and let
every coordinate in $W^c$ be an isolated noise variable with density $f$.
Fix integers $s\ge1$ and $d\ge0$, and a dependence functional
$\mathcal D$ satisfying the independence zero-set condition in
Section~\ref{sec:scores}. Define $\Delta_{\rm par}$ and
$\Delta_{\mathcal D}$ as in \eqref{eq:dep-margin}, with the candidate-size
bound $|C|+|P|\le s+d$. Then both margins belong to $(0,\infty]$ and depend
only on $(B^\circ,s,d,\mathcal D)$, not on $p$ or the embedding of $W$.
\end{lemma}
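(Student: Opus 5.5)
The plan is to show that every score entering the two margins reduces to a score computed on a bounded configuration of nodes, and that only finitely many such configurations occur up to relabeling. Positivity then follows from finiteness together with Proposition~\ref{prop:sparse-passing}.

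Write $I=W^c$ for the isolated nodes and fix an ancestral $F$ and a candidate $(C,P)$ with $|C|+|P|\le s+d$. Since the coordinates in $I$ are independent of everything else, with unit variance and no parents or children, the covariance matrix is block diagonal across $W$ and $I$. Projections therefore split coordinatewise: for $c\in C\cap W$ we get $e_c(P)=e_c(P\cap W)$, computed inside the $W$-model, and for $c\in C\cap I$ we get $e_c(P)=X_c=\eps_c$. The same splitting applies to $r_{P,C}(\ell)$. Each $\mathsf{UP}$ covariance $\Cov(e_c(P),X_k)$ is then either zero, which happens when $c$ and $k$ lie on different sides or $k\in I\setminus\{c\}$, or equal to $\Var(\eps_c)=1$ in a case that cannot occur since $k\in F$ and $c\notin F$, or equal to a $W$-internal quantity. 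Likewise each scalar $\mathsf{OUT}$ statistic $\mathcal D(e_c(P),r_{P,C}(\ell))$ is either zero, when $c$ or $\ell$ is isolated so that independence holds, or a $W$-internal statistic determined by $(F\cap W,\,C\cap W,\,P\cap W,\,c,\,\ell)$.

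Next I will check how the defining conditions of the margins restrict. The condition $e_C(P)\ne e_C(F)$ holds iff it holds for $C\cap W$ relative to $F\cap W$. Non-ancestrality of $C$ in $G[V\setminus F]$ holds iff $C\cap W$ is non-ancestral in $G[W\setminus F]$, because isolated nodes carry no edges. Also, $F$ ancestral in $G$ forces $F\cap W$ to be ancestral in $G[W]$. Hence each nonzero term in either minimum equals a score attached to a triple $(F',C',P')$ in $W$ with $|C'|+|P'|\le s+d$, where $C'\ne\varnothing$ because the violating condition needs a $W$-part.

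It remains to show the converse containment, that every violating $W$-triple is attained. Take any such $W$-triple and use $F'$ itself as $F$, which is ancestral in $G$ because isolated nodes play no role. The maxima over $k\in F$ and $\ell$ then add only zero entries, so $D^{\rm up}$ and $D^{\rm out}$ coincide with their $W$-internal values. Each margin is therefore a minimum over a finite set of quantities fixed by $(B^\circ,s,d,\mathcal D)$, which is independent of $p$ and of the embedding. When that set is empty the margin is $+\infty$.

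Positivity of each entry follows from Proposition~\ref{prop:sparse-passing} and \eqref{eq:up-exact-adjustment} applied to the $W$-model. A violation of $\mathsf{UP}$ gives a nonzero covariance, and a non-ancestral block with exact adjustment gives some dependent scalar pair, which has positive $\mathcal D$ by the zero-set condition. The main obstacle is the bookkeeping needed to verify that isolated coordinates never create new violating pairs. In particular, a pair with $c\in I$ must always pass: it is ancestral and $P_{\rm ext}=\varnothing$. I plan to handle this with the block-diagonal covariance argument above.
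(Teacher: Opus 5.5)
Your proposal is correct and follows essentially the same route as the paper. It uses block-diagonal covariance to reduce $e_C(P)$ and $r_{P,C}(\ell)$ to motif residuals, with isolated coordinates contributing only zero scores. It then restricts the residual-equality and ancestry conditions to $W$, realizes every motif triple by $(F_W,C_W,P_W)$, and obtains positivity from the $\mathsf{UP}$ equivalence and Proposition~\ref{prop:sparse-passing}.
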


\begin{proof}
The coordinates in $W^c$ are mutually independent and independent of
$X_W$. For a candidate $(F,C,P)$, set
$F_W=F\cap W$, $C_W=C\cap W$, and $P_W=P\cap W$, and use a superscript
$\circ$ for quantities in the motif model.
If $C_W=\varnothing$, then $e_C(P)=e_C(F)$ and $C$ is ancestral,
so neither margin includes this candidate.
Hence assume $C_W\ne\varnothing$.

Since
\[
 \Cov(X_P)
 =
 \begin{bmatrix}
  \Cov(X_{P_W})&0\\
  0&\Cov(X_{P\setminus W})
 \end{bmatrix},
\]
the normal equations give, for $v\notin P$,
\[
 e_v(P)=
 \begin{cases}
  e_v^\circ(P_W),&v\in W,\\
  X_v,&v\notin W.
 \end{cases}
\]
Consequently,
\[
 e_C(P)
 =
 \bigl(e_{C_W}^\circ(P_W),X_{C\setminus W}\bigr),
\]
and the two blocks are independent. The normal equations for the projection
of $e_\ell(P)$ onto $e_C(P)$ therefore give
\[
 r_{P,C}(\ell)=
 \begin{cases}
  r_{P_W,C_W}^\circ(\ell),&\ell\in W,\\
  X_\ell,&\ell\notin W.
 \end{cases}
\]
Consequently, every $\mathsf{UP}$ or $\mathsf{OUT}$ score is zero if either tested variable lies outside $W$; otherwise, it equals the corresponding motif score.
Taking empty maxima as zero,
\[
 D_F^{\rm up}(C,P)
 =D_{F_W}^{\rm up,\circ}(C_W,P_W),
 \qquad
 D_F^{\rm out}(C,P)
 =D_{F_W}^{\rm out,\circ}(C_W,P_W).
\]

The first residual identity, applied with $P$ and with $F$, also gives
\[
 e_C(P)=e_C(F)
 \quad\Longleftrightarrow\quad
 e_{C_W}^\circ(P_W)=e_{C_W}^\circ(F_W).
\]
Because every node outside $W$ is isolated, ancestry is preserved under
the same reduction:
\[
 \begin{aligned}
 F\text{ is ancestral in }G
 &\quad\Longleftrightarrow\quad
 F_W\text{ is ancestral in }G^\circ,\\
 C\text{ is ancestral in }G[V\setminus F]
 &\quad\Longleftrightarrow\quad
 C_W\text{ is ancestral in }G^\circ[W\setminus F_W].
 \end{aligned}
\]
Thus restricting $(F,C,P)$ to $W$ preserves both the conditions
defining each margin and the corresponding scores.
The restricted candidates $(F_W,C_W,P_W)$ satisfy
\[
 F_W\subseteq W\text{ ancestral},\qquad
 \varnothing\ne C_W\subseteq W\setminus F_W,\qquad
 P_W\subseteq F_W,
 \qquad |C_W|+|P_W|\le s+d.
\]
Conversely, every such motif tuple is realized by taking $(F,C,P)=(F_W,C_W,P_W)$.

For every reduced tuple in the family defining
$\Delta_{\rm par}$,
\[
 e_{C_W}^\circ(P_W)\ne e_{C_W}^\circ(F_W)
 \quad\Longrightarrow\quad
 D_{F_W}^{\rm up,\circ}(C_W,P_W)>0,
\]
because a zero $\mathsf{UP}$ score is equivalent to equality of the two
residuals.

Likewise, consider a restricted candidate satisfying $e_{C_W}^\circ(P_W)=e_{C_W}^\circ(F_W)$ with $C_W$ non-ancestral in $G^\circ[W\setminus F_W]$.
The residual equality gives $\mathsf{UP}$.
If its $\mathsf{OUT}$ score were zero, the pair would pass, so Proposition~\ref{prop:sparse-passing} would imply that $C_W$ is ancestral, a contradiction.
Thus $D_{F_W}^{\rm out,\circ}(C_W,P_W)>0$.

The two margins are therefore minima over finite collections of positive motif scores depending only on $(B^\circ,s,d,\mathcal D)$.
Hence they belong to $(0,\infty]$, with an empty minimum equal to $+\infty$.
\end{proof}

We now apply this construction to the separated model class in
Section~\ref{sec:structural-optimality}, with a common separation
level for the two margins.

\begin{proposition}[Lower bound with fixed separation]
\label{prop:fixed-separation-lower}
Fix $s_0\ge2$, $d_0\ge1$, and a normalized dependence functional
$\mathcal D$ satisfying the independence zero-set condition. There exist
constants
\[
 K_0,\tau_0,c_{\rm lb,0}>0,\qquad \lambda_0\ge1,
\]
depending only on $(s_0,d_0,f,\mathcal D)$ and not on $(p,n,\delta)$, such
that the following holds. For every $p\ge2\max\{s_0,d_0\}$, sample size
$n\ge1$, and $0<\delta<1$, every estimator $\widehat G^{\rm sc}$ satisfies
\[
 n\le\frac{1-\delta}{c_{\rm lb,0}}
 \left\{
  s_0\log\frac{ep}{s_0}
  +d_0\log\frac{ep}{d_0}
 \right\}
 \quad\Longrightarrow\quad
 \sup_{M\in
 \mathcal M^{\rm sep}_{p,s_0,d_0}(K_0,\lambda_0;\tau_0,\tau_0)}
 P_M^n\left(\widehat G^{\rm sc}\ne G^{\rm sc}(M)\right)
 \ge\delta.
\]
\end{proposition}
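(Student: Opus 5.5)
The plan is to rerun the Fano argument in the proof of Theorem~\ref{thm:matching-lower}. The two packings of Lemmas~\ref{lem:star-packing} and~\ref{lem:parent-packing} are used with the structural parameters frozen at $(s,d)=(s_0,d_0)$. The key observation is that, once $s_0$ and $d_0$ are fixed, each member of either packing is a \emph{fixed finite motif} padded with isolated variables of density $f$. Lemma~\ref{lem:padding} then makes its separation margins independent of $p$ and of where the motif is embedded.

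\emph{Motifs and margins.} Let $B^\circ_{\rm scc}$ be the reciprocal star on $s_0$ nodes with weights $1/(2\sqrt{s_0-1})$. This requires $s_0\ge2$, which is why that hypothesis appears. Let $B^\circ_{\rm par}$ be the parent star on $d_0+1$ nodes with weights $1/(2\sqrt{d_0})$. Both satisfy Assumption~\ref{ass:PI}, as verified in Lemmas~\ref{lem:star-packing}(i) and~\ref{lem:parent-packing}(i). Every $B_S$ with $|S|=s_0-1$ is the padded embedding of $B^\circ_{\rm scc}$ on $W=\{1\}\cup S$, and every $B_Q$ with $|Q|=d_0$ is the padded embedding of $B^\circ_{\rm par}$ on $W=\{1\}\cup Q$. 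Both margins are computed with candidate bound $|C|+|P|\le s_0+d_0$. By Lemma~\ref{lem:padding}, $\Delta_{\rm par}$ and $\Delta_{\mathcal D}$ of each such model lie in $(0,\infty]$ and depend only on the motif, $(s_0,d_0)$ and $\mathcal D$. Define $\tau_0$ as the minimum of the four resulting motif margins, reading $\min\{\cdot,1\}$ if all are infinite. Then $\tau_0>0$ depends only on $(s_0,d_0,f,\mathcal D)$, and every packing member satisfies $\Delta_{\rm par},\Delta_{\mathcal D}\ge\tau_0$.

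\emph{Class membership and Fano.} Take $K_0,\lambda_0$ to be the universal constants fixed after Lemma~\ref{lem:packing-regularity}. Parts (i) of the two packing lemmas, applied with $s=s_0$ and $d=d_0$, place every packing member in $\mathcal M_{p,s_0,d_0}(K_0,\lambda_0)$. This uses $p\ge2\max\{s_0,d_0\}$, which also gives $p\ge4$ and $s_0,d_0\le p/2$. By the previous step, every member therefore lies in $\mathcal M^{\rm sep}_{p,s_0,d_0}(K_0,\lambda_0;\tau_0,\tau_0)$. The cardinality bounds (ii) and the KL bounds (iii), taken relative to the reference $P_0^n$, are unchanged. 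The reference $P_0$ need not belong to the class, since generalized Fano only requires a common reference measure. So the two displayed Fano bounds in the proof of Theorem~\ref{thm:matching-lower} apply verbatim. Combining them exactly as there, using the packing attaining $\max\{h_{s_0},h_{d_0}\}$, gives the claim with $c_{\rm lb,0}:=c_{\rm lb}$.

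\emph{Main obstacle.} The one delicate point is the uniformity in $p$ and in the embedding, which is what makes $\tau_0$ a genuine constant. Lemma~\ref{lem:padding} supplies exactly this, but two things must be checked. First, relabeling the motif nodes does not change its margins: the scores are defined symmetrically over all ancestral $F$ and candidates $(C,P)$, so a graph isomorphism maps candidate families bijectively and preserves scores. Second, the candidate bound $s_0+d_0$ used for the class margins in Section~\ref{sec:structural-optimality} is the same bound used in applying Lemma~\ref{lem:padding}. Once these are verified, the rest is a transcription of the proof of Theorem~\ref{thm:matching-lower}.
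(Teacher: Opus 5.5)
Your proposal is correct and has the same structure as the paper's proof: fixed motifs padded with isolated $f$-coordinates, Lemma~\ref{lem:padding} to make both margins independent of $p$ and of the embedding, and generalized Fano applied to an SCC packing together with the parent packing of Lemma~\ref{lem:parent-packing}. The only difference is the SCC packing: you reuse the reciprocal-star packing of Lemma~\ref{lem:star-packing}, whereas the paper builds a separate family of directed $s_0$-cycles with weight $1/8$, whose KL bound $c_{\rm KL}s_0n/64$ makes its $c_{\rm lb,0}$ depend on $s_0$; your choice keeps $\|B_S\|_F^2=1/2$ and so lets you take $c_{\rm lb,0}=c_{\rm lb}$, which is equally valid.
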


\begin{proof}
We first construct a cyclic family. Set $\gamma=1/8$. For every $S=\{j_1<\cdots<j_{s_0}\}\subseteq[p]$, let
$B_S^{\rm cyc}$ represent a directed cycle on $S$ with coefficient
$\gamma$ on each edge and all nodes in $S^c$ isolated:
\[
(B_S^{\rm cyc})_{j_{r+1},j_r}=\gamma\quad
\text{for }1\le r<s_0,\qquad
(B_S^{\rm cyc})_{j_1,j_{s_0}}=\gamma,
\]
with all other entries equal to zero.

The principal block $(B_S^{\rm cyc})_{S,S}$ is the permutation matrix of a directed $s_0$-cycle, scaled by $\gamma$.
All entries outside this block are zero. Hence
\[
 \|B_S^{\rm cyc}\|_{\rm op}=\gamma,
 \qquad
 \|B_S^{\rm cyc}\|_F^2=s_0\gamma^2=\frac{s_0}{64}.
\]
Every principal submatrix of $B_S^{\rm cyc}$ has operator norm at most $\gamma<1$, so every principal submatrix of $I-B_S^{\rm cyc}$ is invertible.
Hence every model in this family satisfies Assumption~\ref{ass:PI}.
By construction, $s_{\max}=s_0$ and $d_B=0$.


We next construct the parent family using the packing in Lemma~\ref{lem:parent-packing} with $d=d_0$.
Every model in that family is acyclic, satisfies $s_{\max}=1$, and has $d_B=d_0$.
Since both families have operator norm at most $1/2$, Lemma~\ref{lem:packing-regularity} gives the common class constants $K_0:=K$ and $\lambda_0:=\lambda$.
Thus both families lie in $\mathcal M_{p,s_0,d_0}(K_0,\lambda_0)$.

We next verify uniform separation. Each cyclic model is a fixed
$s_0$-node motif padded with isolated coordinates, and each parent model
is a fixed $(d_0+1)$-node motif padded with isolated coordinates.
Lemma~\ref{lem:padding} therefore shows that all corresponding margins are
positive or $+\infty$ and independent of $p$ and of the embedding.

For the cyclic family, there are no edges between distinct SCCs. Hence every
ancestral peeling state $F$ satisfies $X_F\indep X_{V\setminus F}$, and
$e_C(P)=X_C=e_C(F)$ for every candidate in the definition of the margins.
Thus $\Delta_{\rm par}^{\rm cyc}=+\infty$. Let
$\Delta_{\mathcal D}^{\rm cyc}$, $\Delta_{\rm par}^{\rm par}$, and
$\Delta_{\mathcal D}^{\rm par}$ denote the remaining motif margins, and set
\[
 \tau_0
 :=
 \min\left\{
  1,\,
  \Delta_{\mathcal D}^{\rm cyc},\,
  \Delta_{\rm par}^{\rm par},\,
  \Delta_{\mathcal D}^{\rm par}
 \right\}.
\]
Then $\tau_0>0$, and every model in either packing belongs to
$\mathcal M^{\rm sep}_{p,s_0,d_0}(K_0,\lambda_0;\tau_0,\tau_0)$.

It remains to apply Fano's inequality. Write
$h_{s_0}=s_0\log(ep/s_0)$ and $h_{d_0}=d_0\log(ep/d_0)$. The cyclic and
parent families contain $N_{s_0}=\binom{p}{s_0}$ and
$N_{d_0}=\binom{p-1}{d_0}$ distinct condensations, respectively. Since
$\binom{p}{s_0}\ge\binom{p-1}{s_0-1}$,
Lemmas~\ref{lem:star-packing}(ii) and~\ref{lem:parent-packing}(ii) give
\[
 \log N_{s_0}
 \ge\frac{\log2}{2(1+\log2)}h_{s_0},
 \qquad
 \log N_{d_0}
 \ge\frac{\log2}{2(1+\log2)}h_{d_0}.
\]
Lemma~\ref{lem:kl-control}, tensorization, and
Lemma~\ref{lem:parent-packing}(iii) give
\[
 \max_S
 \operatorname{KL}(P_{B_S^{\rm cyc}}^n\|P_0^n)
 \le\frac{c_{\rm KL}s_0n}{64},
 \qquad
 \max_Q
 \operatorname{KL}(P_{B_Q}^n\|P_0^n)
 \le\frac{c_{\rm KL}n}{4}.
\]

Choose the cyclic family if $h_{s_0}\ge h_{d_0}$ and the parent family
otherwise, and let $N_0$ be its cardinality. Its log-cardinality is at least
$\log2\,(h_{s_0}+h_{d_0})/[4(1+\log2)]$, and its average KL divergence
from $P_0^n$ is at most
$c_{\rm KL}\max\{s_0/64,1/4\}n$. Set
\[
 c_{\rm lb,0}
 :=
 \frac{
  4(1+\log2)c_{\rm KL}\max\{s_0/64,1/4\}
 }{\log2}
 +4(1+\log2).
\]
Generalized Fano's inequality and $n\ge1$ give
\begin{align*}
 &\sup_{M\in
 \mathcal M^{\rm sep}_{p,s_0,d_0}(K_0,\lambda_0;\tau_0,\tau_0)}
 P_M^n\left(\widehat G^{\rm sc}\ne G^{\rm sc}(M)\right)
 \\
 &\quad\ge 1-
 \frac{4(1+\log2)c_{\rm KL}\max\{s_0/64,1/4\}n}
 {\log2\,(h_{s_0}+h_{d_0})}
 -\frac{4(1+\log2)}{h_{s_0}+h_{d_0}}
 \ge 1-\frac{c_{\rm lb,0}n}{h_{s_0}+h_{d_0}}.
\end{align*}
Under the sample-size condition in the proposition, the right-hand side is
at least $\delta$.
\end{proof}

\section{Experimental details and additional results}
\label{app:empirical}

\subsection{Experimental protocol}
\label{app:fullsample-protocol}

\paragraph{Graph structures and coefficients.}
The main experiments use two graph constructions with one or four cyclic SCCs.
Within each cyclic SCC of size $s_{\max}$, a center is connected reciprocally to the other $s_{\max}-1$ nodes, with coefficient $0.5/\sqrt{s_{\max}-1}$ on each edge, keeping the spectral radius at $0.5$ across SCC sizes.
For $s_{\max}=2$, this forms a 2-cycle; for $s_{\max}=3,4$, it forms overlapping 2-cycles sharing the center.
Each external edge has coefficient $0.8/\sqrt{d_B}$.

In the one-SCC construction, nodes $1,\ldots,s_{\max}$ form a root SCC with center 1, and there are $d_B-1$ additional singleton roots.
Each of the remaining $p-s_{\max}-d_B+1$ singleton children has node 1 and all additional roots as parents.
Increasing $p$ adds children while preserving the cyclic core.
Figure~\ref{fig:main-evidence} uses this construction.

The four-SCC construction has one root and three child cyclic SCCs, each of size $s_{\max}$, together with $d_B-1$ singleton roots.
The root SCC's center and all singleton roots are parents of each child SCC's center and all $p-4s_{\max}-d_B+1$ singleton children.
Figure~\ref{fig:fullsample-baseline-m4-preview} uses this construction.

\paragraph{Data generation.}
Given a coefficient matrix $B$, we generate independent observations of $X=(I-B)^{-1}\eps$. 
Unless otherwise specified, the coordinates of $\eps$ are independent and follow the standardized mixture $(W+1.6)/\sqrt{1.45}$, where $W\sim0.9\,N(-2,0.01)+0.1\,N(2,0.01)$.
Samples at different sample sizes are generated independently.

\paragraph{Method settings.}
BlockExo uses centered full-sample OLS and the absolute empirical covariances for $\mathsf{UP}$.
For $\mathsf{OUT}$ it uses the complete unbiased estimator of squared distance covariance, normalized by the empirical residual scales of the centered fitted residuals $y,z$:
\[
 \widehat{\mathcal D}(y,z)
 =\frac{\widehat{\operatorname{dCov}}_{U}^{\,2}(y,z)}
 {\sqrt{(n^{-1}\sum_i y_i^2)(n^{-1}\sum_i z_i^2)}}.
\]
The signed estimate is used without absolute values or clipping.
We compute the statistic using the \texttt{dcor} package \citep{ramos2023dcor}.
Previously computed statistics are cached across recovery rounds.
Thresholds are $\tau_j(t)=\kappa_j\sqrt{\ell(t)/n}$ at $\delta=0.05$,
with empirical multipliers $\kappa_{\rm up}=1.68$ and
$\kappa_{\rm out}=0.22$, selected using separate simulated datasets
and held fixed except in the noise-distribution comparisons below.

Baseline thresholds were selected using separate simulated datasets
and held fixed during evaluation.
For Coarsening, we report both the default threshold $0.1$ and the
selected threshold $0.2$ \citep{madaleno2026coarsening}.
Each threshold sets the initial unmixing cutoff and the edge cutoff; only the former is relaxed if no admissible matching exists.
DisjointCycles uses $\alpha=0.05$ \citep{drton2025disjoint}; the overlapping construction lies outside its disjoint-cycle assumption.
StableSpIn uses 3,000 epochs and edge cutoff $\omega=0.15$ \citep{misiakos2026stablespin}.

For the noise-distribution comparisons in Appendix~\ref{app:noise-comparison},
$\kappa_{\rm up}$ remains fixed, while $\kappa_{\rm out}=0.04$ for the
symmetric Gaussian mixture and $0.01$ for uniform noise.
These values were selected using separate simulated datasets from each
noise law and fixed before evaluation.

Using 36 selection datasets per distribution shared across the baselines,
we evaluate Coarsening
thresholds $\{0.025,0.05,0.1,0.2,0.3\}$, DisjointCycles levels
$\{0.0001,0.001,0.01,0.05,0.1\}$, and StableSpIn cutoffs
$\{0.025,0.05,0.09,0.15,0.25\}$.
Selection prioritizes exact recovery, then partition recovery and the
number of correctly recovered cyclic SCCs, with remaining ties resolved
by preference for the default and proximity to it.
The selected values are $0.2$, $0.01$, and $0.09$, respectively, for both
noise laws, with DisjointCycles and StableSpIn retaining their defaults
because all three recovery criteria are zero throughout their grids.
Coarsening $0.1$ is also retained.

The exact GroupLiNGAM search \citep{kawahara2010grouplingam} is infeasible at $p=50$ under our 48\,GiB memory budget.
Its large-graph approximation, L-GroupLiNGAM, targets approximate block ordering rather than exact condensation recovery and is therefore omitted.

\paragraph{Evaluation.}
All methods are evaluated on the same datasets across 20 replicates.
For methods returning variable-level graphs, we extract the SCC partition and condensation edges.
A replicate counts as an exact recovery when both the SCC partition and all condensation edges are correct, with components compared by their node sets.
Fits that do not complete, including those returning no passing block, count as recovery failures.

\subsection{Separation and sample cost}
\label{app:margin-audit}

We examine the contributions of structural complexity and separation to sample cost in the one-SCC construction.
In the dimension comparison, the hybrid margin is constant within each $d_B$, while the external-parent comparison changes both structural complexity and separation.

To compare recovery curves on the scale suggested by the sample bound, we normalize sample size as
\[
 \widehat z=\frac{n\widehat\Delta_{\rm hyb}^{\,2}}{L},
 \qquad
 \widehat\Delta_{\rm hyb}
 =\min\{\Delta_{\rm par}/\kappa_{\rm up},
         \widehat\Delta_{\mathcal D}/\kappa_{\rm out}\}.
\]
Parent margins are evaluated from population covariance, and dependence margins are estimated from reference samples generated independently of the recovery datasets.
Each setting has a single reference margin for normalizing its recovery curve.

\paragraph{Dimension dependence.}
At fixed $s_{\max}=3$ and $d_B$, the parent margin determines the same hybrid margin across $p=10,30,50$ in Figure~\ref{fig:main-evidence}(b).
Thus $n/L$ and $\widehat z$ differ by a constant factor within each group, and the closer alignment after normalization supports the dimension dependence captured by $L$.
The hybrid margins are approximately $0.201$, $0.132$, and $0.091$ for $d_B=1,2,3$, respectively, and agree across dimensions at numerical precision.

\paragraph{External-parent count.}
In this construction, increasing $d_B$ raises the structural cost and reduces the hybrid margin.
Moving from $d_B=1$ to $d_B=2$ expands the adjustment search and reduces
each external-edge coefficient from $0.8$ to $0.8/\sqrt{2}$.
For $s_{\max}=3$ and $p=50$, $L$ increases from approximately $28.5$ to
$32.0$, while $\widehat\Delta_{\rm hyb}$ decreases from $0.201$ to $0.132$.
Together, these changes increase $L/\widehat\Delta_{\rm hyb}^{\,2}$ by a
factor of approximately $2.58$, consistent with BlockExo's recovery transition
at larger sample sizes for $d_B=2$ in Figure~\ref{fig:main-evidence}(b).

\paragraph{Reference margin calculation.}
The reference calculations evaluate~\eqref{eq:dep-margin} over the full candidate class $\mathcal A$:
all ancestral sets $F$ and candidates satisfying $|C|+|P|\le s_{\max}+d_B$, without separate restrictions on $|C|$ and $|P|$.
Dependence margins are estimated using the complete dCov statistic on sixteen independent reference samples of 8,192 observations, with population regression coefficients and residual scales.
Scalar scores are averaged across these replicates before forming candidate maxima and the minimum over invalid candidates.
These calculations cover every setting in Figure~\ref{fig:main-evidence}.

\subsection{Dimension comparison with two cyclic SCCs}
\label{app:two-scc-dimension}

Normalization by $L$ brings recovery transitions across dimensions closer together within each $d_B$, also for graphs with a root and a child cyclic SCC (Figure~\ref{fig:appendix-two-scc-dimension}).
This extends the dimension comparison in Figure~\ref{fig:main-evidence}(b).

We extend the one-SCC construction in Appendix~\ref{app:fullsample-protocol} at $s_{\max}=3$ by adding a three-node child SCC with the same internal coefficients.
The center of the root SCC and the $d_B-1$ additional singleton roots are parents of the child SCC's center and all singleton children, with coefficient $0.8/\sqrt{d_B}$ on each external edge.
Adding singleton children varies $p\in\{10,30,50\}$ while $s_{\max}=3$ and the two cyclic SCCs remain fixed.
We consider $d_B\in\{1,2\}$ and use search bounds $(s,d)=(s_{\max},d_B)$.

\begin{figure}[!htbp]
\centering
\includegraphics[width=\linewidth]{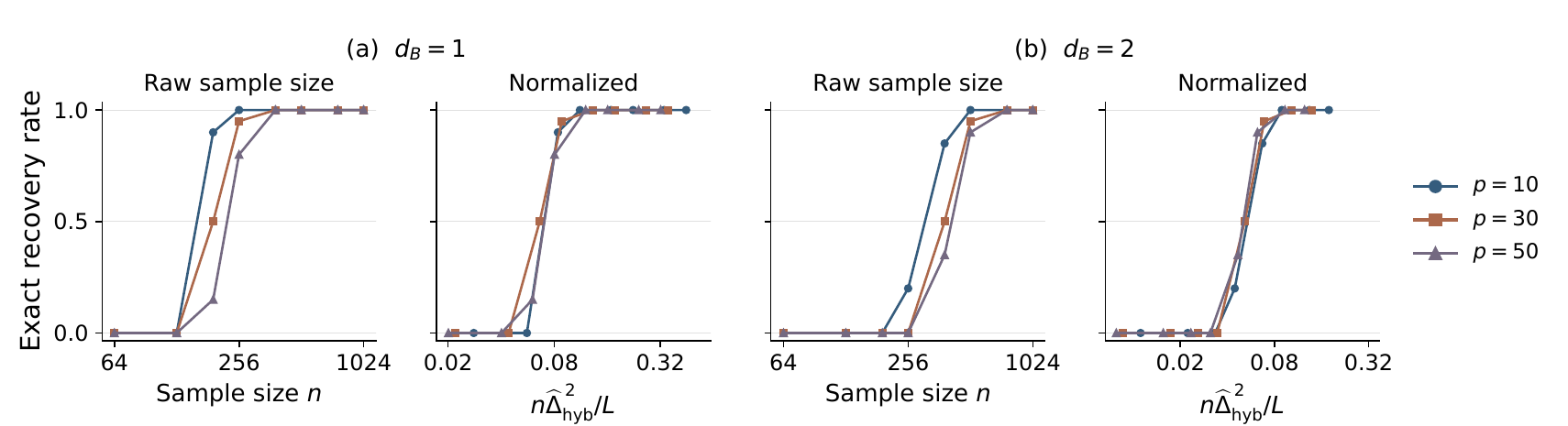}
\caption{Exact condensation recovery across dimensions with two cyclic SCCs.}
\label{fig:appendix-two-scc-dimension}
\end{figure}

The parent margin determines the hybrid margin, which is approximately $0.0946$ for $d_B=1$ and $0.0624$ for $d_B=2$ across $p=10,30,50$.
These values remain unchanged despite the decrease in the estimated dependence margin with dimension at $d_B=2$.
Thus $L$ accounts for the observed differences in sample cost across dimensions within each $d_B$.

Reference margins are computed over $\mathcal A$ following Appendix~\ref{app:margin-audit}, with dependence scores estimated from four independent reference samples of 16,384 observations.

\subsection{External-edge strength and dependence separation}
\label{app:margin-meaning}

Dependence separation cannot be explained by external-edge strength or covariance alone.
We examine these distinctions by varying external-edge coefficients and noise distributions in two separate experiments.

Both experiments use the candidate class $\mathcal A$ and scalar-score calculations described in Appendix~\ref{app:margin-audit}.
Parent margins are computed from population covariance, and dependence margins are estimated from 16 independent reference samples of 8,192 observations.

\paragraph{External-edge strength.}
We vary the external-edge scale in a six-variable construction with SCCs $\{1,2,3\}$, $\{4,5\}$, and $\{6\}$.
Before rescaling, the coefficients within the cyclic SCCs are $(B_{21},B_{32},B_{13},B_{31})=(0.60,-0.55,0.58,0.28)$ and $(B_{54},B_{45})=(0.55,-0.50)$.
The coefficients on edges between SCCs are $(B_{41},B_{43},B_{52},B_{65})=a(1,1/2,-3/4,1)$, with $a\in\{0.10,0.20,0.35,0.50,0.70,0.90\}$; all other entries are zero.
The two cyclic blocks are rescaled so that their entrywise absolute coefficient matrices have spectral radii $0.55$ and $0.50$, respectively.
We also consider three variants obtained by multiplying nonzero coefficients independently by $\operatorname{Unif}(0.94,1.06)$ factors and rescaling the cyclic blocks again, with the same factors across scales.
Errors are independent $\operatorname{Unif}(-\sqrt3,\sqrt3)$ variables.

Figure~\ref{fig:margin-landscape}(a) shows median margins across the four coefficient variants at each scale.
The parent margin increases with external-edge strength, whereas the estimated dependence margin does not.
Thus stronger external edges do not necessarily give greater dependence separation.

\paragraph{Noise distribution.}
We vary the noise distribution while keeping the graph, coefficients, and covariance fixed in the one-SCC construction with $p=10$, $s_{\max}=3$, and $d_B=2$.
The four noise laws are uniform, Beta$(2,5)$, symmetric mixture $\tfrac12N(-0.95,1-0.95^2)+\tfrac12N(0.95,1-0.95^2)$, and the skewed mixture in Appendix~\ref{app:fullsample-protocol}, all standardized to zero mean and unit variance.
The dependence boxplots in Figure~\ref{fig:margin-landscape}(b) summarize the Monte Carlo variability of the margin estimates computed separately from each reference sample.

The parent margin remains $0.223$, while the estimated dependence margin ranges from $0.00092$ for uniform noise to $0.0471$ for the skewed Gaussian mixture, a roughly $51$-fold difference.
Thus the noise distribution can substantially change dependence separation even when the graph and covariance are unchanged.

\begin{figure}[!htbp]
\centering
\includegraphics[width=0.9\linewidth]{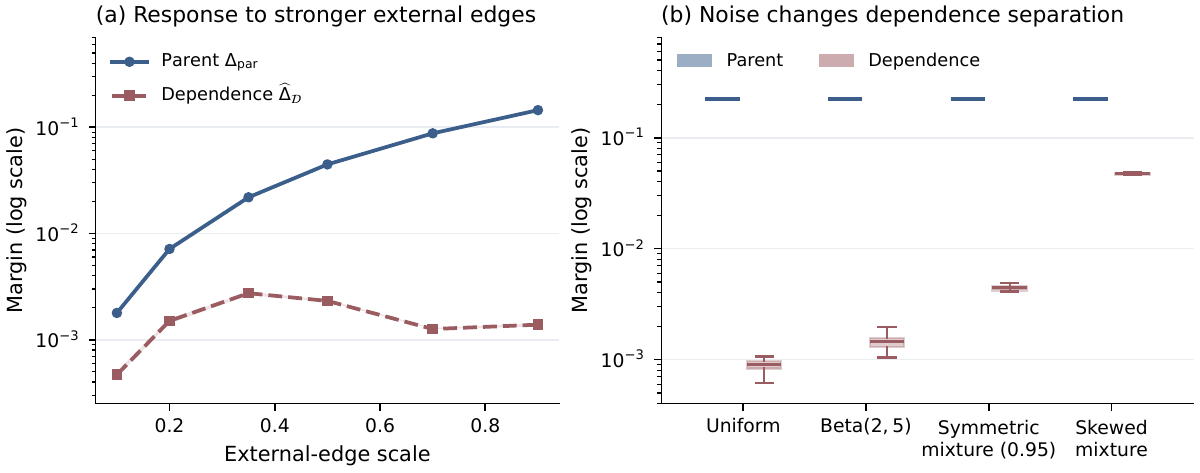}
\caption{Parent and dependence margins under varying external-edge
strength and noise distributions.}
\label{fig:margin-landscape}
\end{figure}

\subsection{Additional sample-efficiency comparisons}
\label{app:dag-comparison}

We extend the method comparison in
Figure~\ref{fig:fullsample-baseline-m4-preview} to different noise
distributions and DAGs, using $(s,d)=(3,2)$
for BlockExo throughout.

\paragraph{Noise distributions.}
\label{app:noise-comparison}
We use the four-SCC construction at $p=50$ with
$s_{\max}\in\{2,3\}$ and $d_B\in\{1,2\}$, replacing the noise law by
either the symmetric Gaussian mixture
$\tfrac12N(-0.95,1-0.95^2)+\tfrac12N(0.95,1-0.95^2)$
or $\operatorname{Unif}(-\sqrt3,\sqrt3)$.
The graph and coefficients remain fixed; both noise laws have zero mean
and unit variance, preserving the population covariance.

Under the symmetric Gaussian mixture, BlockExo begins recovering disjoint cycles
at smaller sample sizes than Coarsening
(Figure~\ref{fig:appendix-gm095-comparison}).
At $n=512$, it achieves $15/20$ and $13/20$ exact recoveries for
$d_B=1,2$, respectively, while both Coarsening thresholds achieve none.
For overlapping cycles, the recovery transitions are similar at $d_B=1$,
but Coarsening $0.2$ reaches high recovery accuracy earlier at $d_B=2$.
BlockExo attains $20/20$ exact recoveries in all four settings at $n=1024$.

\begin{figure}[!htbp]
\centering
\includegraphics[width=\linewidth]{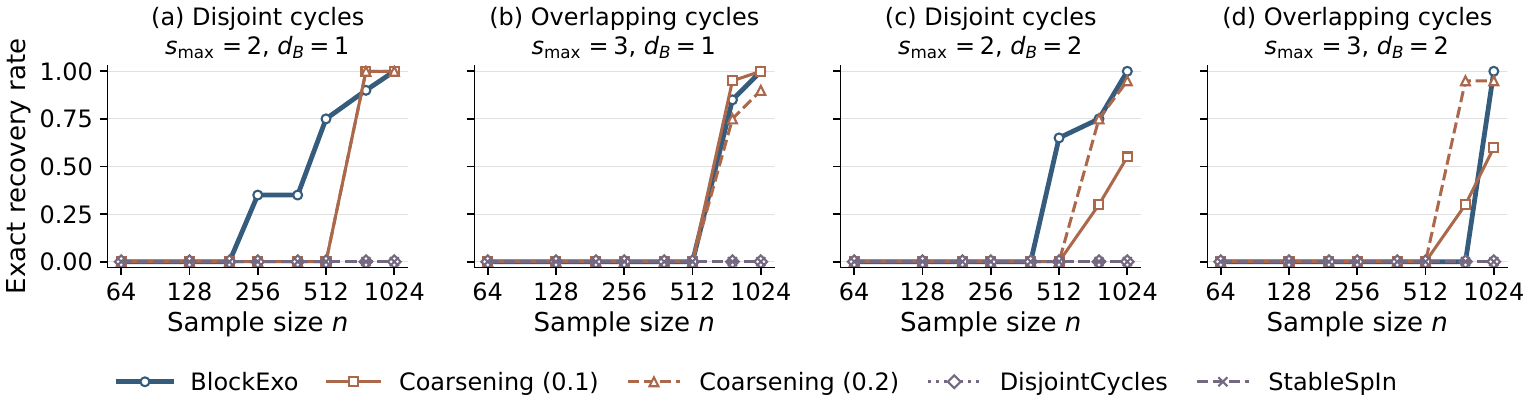}
\caption{Exact condensation recovery under symmetric Gaussian mixture noise.}
\label{fig:appendix-gm095-comparison}
\end{figure}

Under uniform noise, the relative sample cost depends on the cycle
structure and external-parent count
(Figure~\ref{fig:appendix-uniform-comparison}).
BlockExo begins recovering disjoint cycles at smaller sample sizes, although
Coarsening $0.2$ reaches $20/20$ earlier at $d_B=1$.
For overlapping cycles, Coarsening $0.2$ reaches high recovery accuracy
earlier at $d_B=1$, while the two methods perform similarly at $d_B=2$.
BlockExo attains $20/20$ exact recoveries in all four settings at $n=3072$.

Neither DisjointCycles nor StableSpIn achieves exact partition recovery
under either noise law at the tested sample sizes.
For DisjointCycles, both noise laws fall outside its generic-moment condition, and the overlapping construction also violates its cycle-disjointness assumption.

\begin{figure}[!htbp]
\centering
\includegraphics[width=\linewidth]{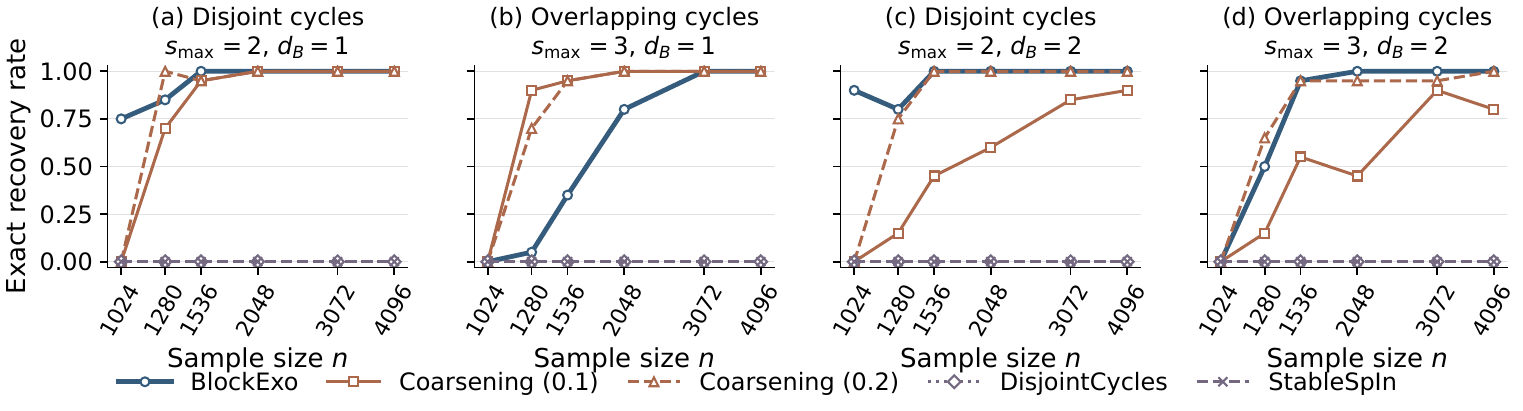}
\caption{Exact condensation recovery under uniform noise.}
\label{fig:appendix-uniform-comparison}
\end{figure}

The sample size needed for BlockExo to achieve high recovery accuracy
increases from the skewed Gaussian mixture to the symmetric Gaussian
mixture and then to uniform noise
(Figures~\ref{fig:fullsample-baseline-m4-preview},
\ref{fig:appendix-gm095-comparison}, and~\ref{fig:appendix-uniform-comparison}),
despite identical graphs, coefficients, and covariance.
This ordering is consistent with the one-SCC margin analysis in
Appendix~\ref{app:margin-meaning}, where the estimated dependence margin
decreases across these three noise laws in the same order.
Together, these results highlight distribution-dependent separation as
a source of recovery difficulty beyond SCC size and external-parent count.

%
%
%

\paragraph{DAG recovery.}
We consider DAGs with $p=50$ and $d_B\in\{1,2\}$, consisting of
$d_B$ roots, each connected to every remaining node.
Every SCC is a singleton, so exact condensation recovery is exact DAG recovery.
Even with search bounds $(s,d)=(3,2)$ allowing cyclic blocks,
BlockExo reaches $20/20$ exact recoveries at smaller sample sizes
than either Coarsening threshold for $d_B=1$, and at the same
sample size as Coarsening $0.2$ for $d_B=2$
(Figure~\ref{fig:appendix-dag-comparison}).

\begin{figure}[!htbp]
\centering
\includegraphics[width=\linewidth]{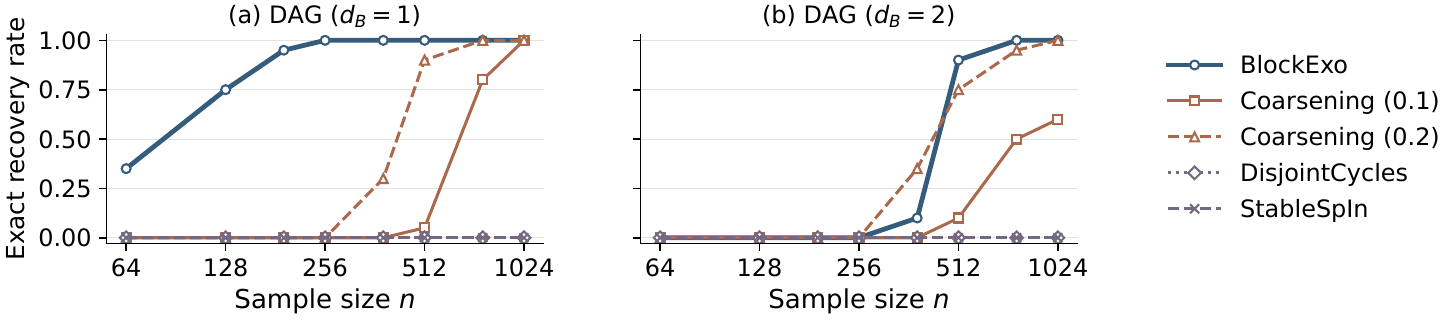}
\caption{Exact DAG recovery across methods.}
\label{fig:appendix-dag-comparison}
\end{figure}

\subsection{Sensitivity to search bounds}
\label{app:search-bound-sensitivity}

We compare the common search bounds $(s,d)=(3,2)$ with the exact bounds
$(s,d)=(s_{\max},d_B)$ on the same four-SCC datasets from
Figure~\ref{fig:fullsample-baseline-m4-preview}, keeping the search order,
threshold rule, and multipliers fixed.
Even without the exact search bounds, BlockExo attains $20/20$ exact
recoveries in every setting, although recovery requires larger sample
sizes at $d_B=1$.
For disjoint cycles at $n=192$, the common bounds reduce the exact-recovery rate
from $18/20$ to $1/20$; for overlapping cycles at $n=384$,
the rate decreases from $20/20$ to $3/20$.
At $(s_{\max},d_B)=(2,2)$, the two choices agree in exact-recovery
outcome for every replicate despite the different bounds.

\begin{figure}[!htbp]
\centering
\includegraphics[width=\linewidth]{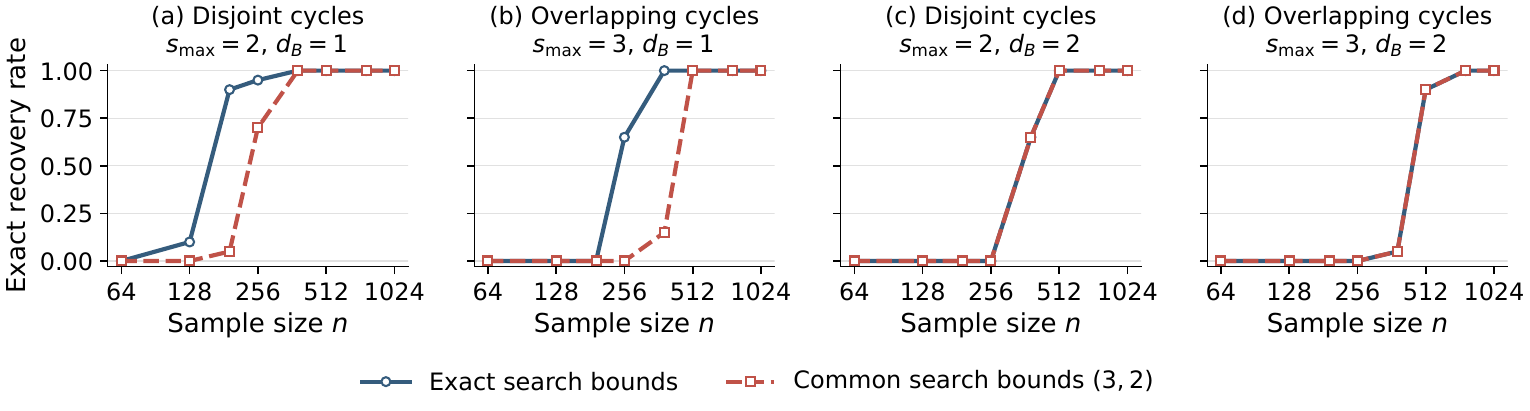}
\caption{Sensitivity of exact condensation recovery to search bounds.}
\label{fig:search-bound-sensitivity}
\end{figure}

\subsection{Runtime}
\label{app:runtime}

BlockExo's median runtime ranges from 8 to 34 seconds across the four
settings in Figure~\ref{fig:fullsample-baseline-m4-preview}.
BlockExo is slower than Coarsening; on disjoint cycles, its median
runtime is comparable to DisjointCycles and shorter than StableSpIn.
Table~\ref{tab:runtime-methods} reports median runtimes pooled across
sample sizes on the same inputs for which BlockExo completed,
regardless of recovery accuracy.

\begin{table}[!htbp]
\centering
\caption{Median runtimes in seconds for the
Figure~\ref{fig:fullsample-baseline-m4-preview} experiments.}
\label{tab:runtime-methods}
\small
\begin{tabular}{@{}lrrrrr@{}}
\toprule
$(s_{\max},d_B)$ & BlockExo & Coarsening $0.1$ & Coarsening $0.2$ & DisjointCycles & StableSpIn \\
\midrule
$(2,1)$ & 8.12 & 1.87 & 1.86 & 8.09 & 27.14 \\
$(2,2)$ & 11.14 & 1.85 & 1.87 & 8.26 & 27.09 \\
$(3,1)$ & 34.11 & 1.86 & 1.84 & 8.71 & 26.88 \\
$(3,2)$ & 27.88 & 1.87 & 1.84 & 8.54 & 27.12 \\
\bottomrule
\end{tabular}
\end{table}

Timing covers execution from initialization through recovery evaluation, including subprocess overhead.
CPU fits ran on an Intel Xeon Gold 6258R with one BLAS/OpenMP thread per fit. StableSpIn ran on an NVIDIA RTX A5000 GPU.



\end{document}